\author{Matus Telgarsky\thanks{Department of Computer Science and Engineering,
University of California, San Diego.
Email:
\texttt{<mtelgars@cs.ucsd.edu>}.}}
\def\1{\mathds 1}
\def\R{\mathbb R}
\def\Z{\mathbb Z}
\def\bfe{\mathbf e}
\def\bG{\mathbb G}
\def\cD{\mathcal D}
\def\cF{\mathcal F}
\def\cH{\mathcal H}
\def\cK{\mathcal K}
\def\cL{\mathcal L}
\def\cO{\mathcal O}
\def\cP{\mathcal P}
\def\cR{\mathcal R}
\def\cS{\mathcal S}
\def\cX{\mathcal X}
\def\cY{\mathcal Y}
\def\scrC{\mathscr C}
\def\scrP{\mathscr P}
\def\fF{\mathfrak F}
\def\SPAN{\textup{span}}
\def\dom{\textup{dom}}
\newcommand{\ip}[2]{\left\langle #1, #2 \right \rangle}
\newcommand{\argmin}{\operatornamewithlimits{arg\,min}}
\newcommand{\LL}[2]{\lambda_{#2}^{(#1)}}
\newcommand{\red}[1]{{\color{red} #1}}
\newcommand{\blue}[1]{{\color{blue} #1}}
\numberwithin{equation}{section}
\declaretheorem[numberlike=equation]{theorem}
\declaretheorem[numberlike=theorem]{lemma}
\declaretheorem[numberlike=theorem]{proposition}
\declaretheorem[numberlike=theorem]{corollary}
\declaretheoremstyle[%
qed={\ensuremath\Diamond}
]{remstyle}
\declaretheorem[numberlike=theorem,style=remstyle]{definition}
\title{Statistical Consistency of Finite-dimensional Unregularized Linear Classification}
\date{}
\begin{document}
\maketitle

\begin{abstract}
    This manuscript studies statistical properties of linear classifiers obtained
    through minimization of an unregularized convex risk over
    a finite sample. Although the results are explicitly finite-dimensional,
    inputs may be passed through feature maps; in this way, in addition to
    treating the consistency of logistic regression, this analysis also handles
    boosting over a finite weak learning class with, for instance, the
    exponential, logistic, and hinge losses.
    In this finite-dimensional setting, it is still possible to fit arbitrary
    decision boundaries: scaling the complexity of the weak learning
    class with the sample size leads to the optimal classification risk almost surely.
\end{abstract}

\section{Introduction}

Binary linear classification operates as follows: obtain a new instance, determine
a set of real-valued features, form their weighted combination, and output a label which
is positive iff this combination is nonnegative.  The interpretability, empirical
performance, and theoretical depth of this scheme have all contributed to its continued
popularity~\citep{freund_schapire_adaboost,friedman_hastie_tibshirani_statboost,
caruana_empirical}.

In order to obtain the coefficients in the above weighting, convex optimization is
typically employed.  Specifically, rather than just trying to pick the weighting which
makes the fewest mistakes over a finite sample --- which is computationally intractable
--- consider instead paying attention to the \emph{amount} by which these combinations
clear the zero threshold, a quantity called the \emph{margin}.  Applying a convex
penalty to these margins yields a convex optimization procedure, specifically one
which can be specialized into
both
logistic regression and AdaBoost.


Statistical analyses of this scheme predominately follow two paths.
The first path is a parameter estimation approach; 
positive and negative instances are interpreted as drawn from a
family of distributions, indexed by the combination weights above, and the
convex scheme is performing a maximum
likelihood search for these parameters~\citep{friedman_hastie_tibshirani_statboost}.
This provides one
way to analyze logistic regression, specifically the ability of the above convex
optimization to recover these parameters; these analyses of course require such
parameters to exist, and usually for the full problem to obey certain
regularity conditions~\citep{lebanon_mle,gm_logistic}.

The second approach is focused on the case of binary classification, with an
interpretation of the data generation process taking a background role.
Indeed, in this setting,
optimal parameters may simply fail to exist~\citep{schapire_adaboost_convergence_rate},
and the convex optimization procedure can produce unboundedly large weightings.
Analyses first focused on the separable case, showing that AdaBoost
approximately maximizes \emph{normalized} margins, and that this leads to good
generalization~\citep[Chapter 5 and the references therein]{schapire_freund_book}.
It is historically interesting that this setting, which entails the non-existence
of the best parameters, is diametrically opposed to the parameter estimation
setting above.

In order to produce a more general analysis, it was necessary to control the unbounded
iterates.  This has been achieved either implicitly through regularization
\citep{blv_regularized_boosting}, or explicitly with an early stopping rule
\citep{bartlett_traskin_adaboost,zhang_yu_boosting,schapire_freund_book}.
Those analyses which handle the case of AdaBoost
(cf. the work of \citet{bartlett_traskin_adaboost}
and \citet[Chapter 12]{schapire_freund_book}), are sensitive both to the
choice of exponential loss, to the choice of minimization scheme, and to the choice of
stopping condition.

The goal of this manuscript is to analyze the setting of minimizing an unregularized
convex loss applied to a finite sample (i.e., just like logistic regression and AdaBoost),
but for a large class of loss functions, and without any demands on the optimization
algorithm beyond an ability to attain arbitrarily small error.

\subsection{Contribution}
In more detail, the primary characteristics of the presented analysis are as follows.
\begin{description}
    \item[Any minimization scheme.]  The oracle producing approximate solutions
        to the convex problem can output iterates which have any norm; they must simply
        be close in objective value to the optimum.  The intent of this choice is
        twofold: for practitioners, it means that focusing on minimizing this objective
        value suffices; for theorists, it means that the wild deviations caused by
        these unbounded norms are not actually an issue.

    \item[Many convex losses.]
        The analysis applies to any convex loss which is positive at the origin,
        and zero in the limit.  (Some results also require differentiability at
        the origin.)
        In particular, the analysis handles the popular choice
        of using the logistic loss,
        but also applies to the exponential and hinge losses.
        (For a discussion on the difficulties of generalizing from the
        exponential loss, please see the work of
        \citet[Section 4]{bartlett_traskin_adaboost}.)
\end{description}

The main limitation of the presented analysis is that the set of features, or
\emph{weak learners}, must be finite.
This weakness can be circumvented in the setting of boosting, where the complexity of
the feature set can increase with the availability of data; it will be shown that
the popular choice of decision trees fit this regime nicely.

\subsection{Outline}

A summary of the manuscript, and its organization, are as follows.  Briefly,
primary notation and technical background appear in \Cref{sec:setup}.

\Cref{sec:impossibility} presents an impossibility result, which forces the structure
of subsequent content.  Specifically, with no bound on the iterates, it is
in general impossible to control the deviations between the \emph{empirical convex
risk} (the convex surrogate risk over the observed finite sample),
and the \emph{true convex risk}
(the convex surrogate risk over the source distribution).

The solution is to break the input space into two pieces:
a \emph{hard core}, where there exists an imperfect yet optimal parameter vector,
and the hard core's complement, where it is possible to have zero mistakes, albeit
giving up on the existence of a minimizer to the true convex risk.  This material
appears in \Cref{sec:hard_cores}.

The hard core has direct entailments on the structure of the convex risk.
Specifically, \Cref{sec:hard_core:true_risk} establishes first that
the true risk has quantifiable curvature over the hard core, and effectively zero
error over the rest of the space.  Additionally, with high probability,
this structure carries over to any sampled instance.

The significance of first proving properties of the true risk, and then carrying them
over to the sample, is that quantities dictating the structure of the empirical
convex risk are \emph{sample independent}.  Consequently, finite sample guarantees,
which appear in \Cref{sec:deviations}, display a number of terms which are properties
of the true convex risk, and not simply opaque random variables derived from the
sample.  It is thus possible to control many such bounds together; the eventual
consistency results, appearing in \Cref{sec:consistency}, simply combine the finite
sample guarantees, which all share the same primary structural quantities, together
with standard probability techniques.  As discussed previously, in order to fit
arbitrary decision boundaries, structural risk minimization is employed, and it
is furthermore established that decision trees with a constraint on the location of
splits meet the requisite structural risk minimization condition.


Note that all proofs, as well as some supporting technical material,
appear in a variety of appendices.

\section{Notation}
\label{sec:setup}
\begin{definition}
    Instances $x\in\cX$ will have associated labels $y\in\cY = \{-1,+1\}$.
    $\mu$ will always denote a probability measure over $\cX\times \cY$,
    with only occasional mention of the related $\sigma$-algebra.
\end{definition}

To achieve generality sufficient to treat boosting, instances will not be worked with
directly, but instead through a family of feature maps, or weak learners.

\begin{definition}
    Let $\cH = \{h_i\}_{i=1}^n$
    denote a finite set of (measurable) functions $\cH\ni h : \cX \to [-1,+1]$.
    Call a pair $(\cH,\mu)$ a \emph{linear classification problem}.
    For convenience, let $H$ denote a (bounded) linear operator with elements of $\cH$ as
    abstract columns: given any weighting $\lambda\in\R^n$,
    \[
        H\lambda = \sum_{i=1}^n \lambda_i h_i.
    \]
    For convenience, define related classes of functions
    \begin{align*}
        \SPAN(\cH,b) &:= \left\{
            H\lambda : \lambda\in\R^n, \|\lambda\|_1 \leq b
        \right\},
        \\
        \SPAN(\cH) &:= \bigcup_{b=1}^\infty\SPAN(\cH, b) = \left\{
            H\lambda : \lambda\in\R^n
        \right\}.
        \qedhere
    \end{align*}
\end{definition}

The class $\SPAN(\cH)$ will be the search space for linear classification; if
for instance $\cH$ consists of projection maps, then this is the standard setting of
linear regression, however in general it can be viewed as a boosting problem.  That the
range of the function family is fixed specifically to $[-1,+1]$ is irrelevant,
however compactness of this output space is used throughout.

\begin{definition}
    $\Phi$ contains all convex losses $\phi$ which are positive at the origin, and
    satisfy $\lim_{z\to-\infty} \phi(z) = 0$.
\end{definition}

This manuscript makes the choice of writing losses as nondecreasing functions; in this
notation, three examples are the exponential loss $\exp(z)$, logistic loss
$\ln(1+\exp(z))$, and hinge loss $\max\{0, 1+z\}$.  Some of the consistency results
will also require the loss to be differentiable at the origin; this requirement, which
is satisfied by the three preceding examples, will be explicitly stated.

\begin{definition}
    Given a probability measure $\mu$, a loss $\phi\in\Phi$, a function class $\cF$,
    and
    arbitrary element $f\in \cF$, the corresponding
    risk functional, and optimal risk, are
    \[
        \cR_\phi(f) := \int \phi(-yf(x)) d\mu(x,y),
        \qquad
        \qquad
        \cR_\phi(\cF) = \inf_{f\in \cF} \cR_\phi(f).
    \]
    When a sample $\cS := \{(x_i,y_i)\}_{i=1}^m$ is provided, let $\cR^m_\phi$ denote
    the corresponding empirical risk, meaning the convex risk corresponding to
    the empirical measure $\mu_m(C) := m^{-1} \sum_{i=1}^m \1((x_i,y_i)\in C)$,
    thus $\cR_\phi^m(f) = m^{-1} \sum_i \phi(-y_i f(x_i))$.
    Lastly, let $\cL$ denote the classification risk $\cL(y,y') := \1(y \neq y')$,
    and overload the notation for risks so that
    \[
        \cR_\cL(f) := \int \cL(y, 2\cdot\1(f(x)\geq 0)-1) d\mu(x,y),
        \qquad
        \qquad
        \cR_\cL(\cF) = \inf_{f\in \cF} \cR_\cL(f).
        \qedhere
    \]
\end{definition}
Typically, some function class $\cH$, a particular weighting $\lambda\in\R^n$,
and perhaps a sample of size $m$ will be available,
and example relevant risks are $\cR_\phi(H\lambda)$, $\cR_\cL^m(H\lambda)$,
$\cR_\phi(\SPAN(\cH))$.

\begin{definition}
    The requirement placed on the minimization oracle is that, for any $\cH$,
    $\phi\in\Phi$,
    finite sample of size $m$, and \emph{suboptimality} $\rho>0$, the oracle can produce
    $\lambda\in\R^n$ with $\cR_\phi^m(H\lambda) \leq \cR_\phi^m(\SPAN(\cH)) + \rho$.
\end{definition}

The theorems themselves will avoid any reliance on this oracle, and
their guarantees will
hold with any $\rho$-suboptimal $\lambda$ as input;
this manuscript is concerned with statistical properties of these predictors.
However, note briefly that for many losses of interest,
in particular the hinge, logistic, and exponential losses,
oracles satisfying the above guarantee exist.

\begin{proposition}[\citep{nesterov_book,primal_dual_boosting}]
    \label{fact:opt_oracles_exist}
    Let a linear classification problem $(\cH,\mu)$, finite sample of size $m$,
    and suboptimality $\rho>0$ be given.  Suppose:
    \begin{enumerate}
        \item Either $\phi$ is Lipschitz continuous, attains its infimum, and
            subgradient descent is employed;
        \item Or $\phi$ is in the convex cone generated by the logistic and
            exponential losses, and coordinate descent is employed (as in AdaBoost);
    \end{enumerate}
    then $\textup{poly}(1/\rho)$
    iterations suffice to produce a $\rho$-suboptimal iterate $\lambda\in\R^n$.
\end{proposition}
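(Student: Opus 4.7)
The two cases invoke entirely different algorithms, and I would treat them separately.

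\emph{Case 1 (Lipschitz $\phi$ attaining its infimum, subgradient descent).} The composite $\lambda \mapsto \cR_\phi^m(H\lambda)$ is convex and $L$-Lipschitz on $\R^n$ with $L$ bounded by the product of $\phi$'s Lipschitz constant and $\sqrt{n}$ (each $h_j \in [-1,+1]$). Projected subgradient descent on the Euclidean ball of radius $R$, with appropriately tuned step size, produces after $T$ iterations a point within $O(LR/\sqrt{T})$ of the minimum over that ball \citep{nesterov_book}. The plan is therefore (i) produce $\lambda^\star$ with $\|\lambda^\star\| \leq R(\rho)$ polynomial in $1/\rho$ and $\cR_\phi^m(H\lambda^\star) \leq \cR_\phi^m(\SPAN(\cH)) + \rho/2$, then (ii) run subgradient descent on the ball of radius $R(\rho)$ for $\Theta(L^2 R(\rho)^2/\rho^2)$ iterations. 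For (i), split by whether the empirical infimum is strictly positive. If so, some sample margin must be pushed in the direction where $\phi$ grows (at least linearly, by Lipschitzness together with nondecreasingness of $\phi$), so the sublevel sets of $\cR_\phi^m \circ H$ are bounded and the infimum is attained at a finite $\lambda^\star$ whose norm is controlled by the conditioning of the sample evaluation map. If the infimum is zero, it can be attained: since $\phi$ attains its minimum at some finite $z^\star$, it suffices to push every sample margin past $z^\star$, which a pseudoinverse argument carries out with norm polynomial in $1/\rho$ and sample-dependent constants.

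\emph{Case 2 (convex cone of logistic and exponential losses, coordinate descent).} This is the primal-dual boosting convergence theorem of \citet{primal_dual_boosting}. Its proof passes to the Fenchel conjugate: for both the exponential and logistic losses, the conjugate is smooth on a bounded polytope, and primal suboptimality is controlled by dual suboptimality. A potential-function argument then shows that each coordinate descent step (equivalently, each AdaBoost-style update) reduces the dual potential by a polynomial amount, yielding a $\rho$-suboptimal primal iterate in $\textup{poly}(1/\rho)$ steps. Crucially, this argument works without the primal infimum being attained and without any a priori bound on the primal iterates. Closure under nonnegative combinations lifts the result to the full convex cone.

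\emph{Main obstacle.} The hardest ingredient is the existence lemma for case 1, because $\cR_\phi^m \circ H$ need not attain its infimum in $\R^n$ (e.g.\ the separable regime of the hinge loss), so the textbook subgradient-descent bound involving $\|\lambda_0 - \lambda^\star\|$ is not immediately available. Producing a polynomial $R(\rho)$ requires explicitly quantifying the ``flatness'' threshold $z^\star$ of $\phi$ and relating it, through a conditioning quantity of the sample evaluation operator, to the norm needed in $\lambda$ to drive all sample margins into the flat region.
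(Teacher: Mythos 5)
Your overall architecture for Case 1 (find a bounded near-minimizer, then run (projected) subgradient descent) matches the paper's, but the existence lemma---which you correctly identify as the crux---has a genuine gap. Your dichotomy is: if the empirical infimum is strictly positive, then ``some sample margin must be pushed in the direction where $\phi$ grows, so the sublevel sets of $\cR_\phi^m\circ H$ are bounded''; if it is zero, push all margins past the flat threshold. The first branch is false, and the two branches together miss the generic mixed case. Take the hinge loss, $\cH$ the two coordinate projections on $\R^2$, and a sample consisting of one contradictorily labeled pair at $(0,1)$ together with a positively labeled point at $(1,0)$: the empirical infimum is $2/3>0$, yet the ray $\{(t,0):t\ge 1\}$ lies entirely in every sublevel set, so sublevel sets are unbounded and no recession direction ``grows.'' Whenever part of the sample is separable (in some feature directions) and part is genuinely hard, the infimum is positive but minimizing directions along the separable part leave the loss flat, so neither of your branches applies as argued. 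The paper resolves exactly this by computing a hard core $\scrC$ of the \emph{empirical} measure: \Cref{fact:hard_core:true_risk,fact:somewhat_stiemke} give that all near-optimal predictors over $\scrC$ admit representations in a fixed compact set (hence a minimizer $\bar\lambda$ over $\scrC$ exists by continuity and compactness), \Cref{fact:partial_ultragordan} gives a single $\lambda_0$ with strictly positive margins on the sampled points of $\scrC^c$ and zero margins on those of $\scrC$, and the hypotheses that $\phi$ attains its infimum and $\lim_{z\to-\infty}\phi(z)=0$ give a finite threshold past which $\phi$ vanishes, so a finite multiple of $\lambda_0$ added to $\bar\lambda$ is an \emph{exact} global minimizer. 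With a minimizer in hand (norm depending on the sample but not on $\rho$), the unconstrained $\cO(1/\rho^2)$ subgradient bound from \citet{nesterov_book} applies directly; no $\rho$-dependent radius $R(\rho)$ or conditioning/pseudoinverse quantification is needed.

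For Case 2 there is a smaller but real gap: you assert that ``closure under nonnegative combinations lifts the result to the full convex cone,'' but the convergence theorem of \citet{primal_dual_boosting} applies to a specific class of losses defined by ratio conditions (bounds on $\phi/\phi'$ and $\phi''/\phi$ over the relevant sublevel region), and membership of $c_1\phi_1+c_2\phi_2$ in that class is not automatic from membership of $\phi_1$ and $\phi_2$. The paper does the actual verification: it shows the class is a cone under positive scaling, and then, for a combination of the logistic and exponential losses, derives an explicit a priori bound on the arguments within the initial sublevel set and computes admissible constants $\beta$ and $\eta$ for the combined loss. Your sketch of the dual potential argument is an accurate description of the cited result, but the cone-membership verification is precisely the content the proposition needs beyond the citation, and it is missing. (A final minor omission, noted in the paper: to turn these rate statements into the oracle of the definition one needs a computable stopping criterion, supplied by the empirical duality gap via \Cref{fact:duality}.)
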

(The proof, in \Cref{sec:app:setup}, is mostly a reduction to known results
regarding subgradient and coordinate descent.)

Lastly, this manuscript adopts a form of event-defining notation common in probability
theory.

\begin{definition}
    Given a function $f : A \to B$ and binary relation $\sim$, define
    $[f \sim b] := \{a \in A : f(a) \sim b\}$; for example
    $[f > 0] := \{a \in A: f(a) > 0\} = f^{-1}((0,\infty))$.  At times, the variables
    will also be provided, for instance $[bf(a) > 0] = \{(a,b) \subseteq A\times B:
    bf(a) > 0\}$.
\end{definition}


\section{An impossibility result}
\label{sec:impossibility}
The stated goal of allowing iterates to have unbounded norms is at odds
with the task of bounding the convex risk $\cR_\phi$.

\begin{proposition}
    \label{fact:separable_case_impossibility}
    There exists a linear classification problem
    $(\cH,\mu)$ with the following characteristics.
    \begin{enumerate}
        \item
            $\cX$ is the square $[-1,+1]^2$, and $\cH$ consists of the two projection
            maps.
        \item $\mu$ has countable support.
        \item There exists a perfect separator, albeit with zero margin.
        \item For any $\phi\in\Phi$, 
            $\cR_\phi(\SPAN(\cH)) = 0$.
        \item Let any finite sample $\{(x_i,y_i)\}_{i=1}^m$, any $b>0$,
            and any $\phi\in\Phi$ 
            be given.
            Then there exists a maximum margin solution $\hat\lambda$,
            i.e., a solution satisfying
            \[
                \argmin_{i\in [m]} \frac{y_i (H\hat\lambda)(x_i)}{\|\hat\lambda\|_1}
                = \sup\left\{
                    \argmin_{i\in [m]}
                    y_i (H\lambda)(x_i)
                    :
                    \lambda \in \R^n,
                    \|\lambda\|_1 = 1
                \right\},
            \]
        which has $\cR_\phi(H\hat\lambda) \geq b$.
    \end{enumerate}
\end{proposition}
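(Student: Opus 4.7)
My plan is to build a two-dimensional example whose support lies on two sequences accumulating toward the unique separating direction at \emph{asymmetric} rates; the asymmetry forces the empirical max-margin direction on any finite subsample to be tilted off the true-risk-minimizing direction, and rescaling this tilted direction drives the convex risk to $+\infty$ by virtue of the fact that every $\phi \in \Phi$ is nondecreasing, convex, and unbounded at $+\infty$ (a bounded nondecreasing convex function with $\lim_{-\infty}\phi = 0$ and $\phi(0) > 0$ would have to be identically zero).

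Concretely, take $\cX = [-1,+1]^2$ with $\cH$ the two coordinate projections, and let $\mu$ be supported on $\{p_i := (1, 2^{-i}), n_i := (1, -3^{-i}) : i \ge 1\}$ with labels $+1$ and $-1$ respectively and strictly positive summable masses. Items (1) and (2) are immediate. The direction $\lambda^\star = (0, 1)$ yields positive margins $2^{-i}$ and $3^{-i}$ with infimum zero, establishing (3); scaling $c \lambda^\star$ and invoking dominated convergence (each summand dominated by $\phi(0)$, weights summable) gives $\cR_\phi(cH\lambda^\star) \to 0$, yielding (4).

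For (5), fix any finite subsample of the support. If only one class appears, the unique normalized max-margin direction is $(\pm 1, 0)$, which misclassifies the \emph{other} class throughout $\mu$'s support. If both classes appear, let $i^\star, j^\star$ be the largest indices drawn; the binding constraints in the max-margin program are $p_{i^\star}$ and $n_{j^\star}$, whose balance forces $2\lambda_1/\lambda_2 = 3^{-j^\star} - 2^{-i^\star}$. Since $3^{-j^\star} \ne 2^{-i^\star}$ for integer $i^\star, j^\star \ge 1$, necessarily $\lambda_1 \ne 0$, so for $k$ large enough either $\lambda_1 + \lambda_2 \cdot 2^{-k} < 0$ (misclassifying $p_k$) or $-\lambda_1 + \lambda_2 \cdot 3^{-k} < 0$ (misclassifying $n_k$), and in every case a positive-$\mu$-mass subset of the support is misclassified.

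With such a direction $(\lambda_1, \lambda_2)$ in hand, setting $\hat\lambda := c(\lambda_1, \lambda_2)$ preserves the max-margin property (scale invariance), sends the loss at correctly classified support points to zero by dominated convergence, and drives the loss at the misclassified support (positive $\mu$-mass) to $+\infty$, yielding $\cR_\phi(H\hat\lambda) \ge b$ for $c$ large enough. The main obstacle is the mixed-sample case; using distinct bases for the positive and negative decay is precisely the device that prevents the balancing equation from degenerating to $\lambda_1 = 0$, i.e., from accidentally returning the true optimal direction on some subsample.
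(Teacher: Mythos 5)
Your construction is correct and follows essentially the same route as the paper's proof: two label classes accumulating toward a zero-margin separator at staggered rates (you stagger via bases $2$ vs.\ $3$ where the paper staggers multipliers $0.5$ vs.\ $0.3$), so that the max-margin direction of any finite sample is tilted off the optimal separator, misclassifies a positive-mass tail of the support, and has convex risk blowing up under rescaling. The remaining differences are cosmetic (points on a single vertical line rather than two edges of the square), and your key verifications --- the binding constraints sit at the largest sampled indices, their margins equalize forcing $2\lambda_1/\lambda_2 = 3^{-j^\star}-2^{-i^\star}\neq 0$, and every $\phi\in\Phi$ is unbounded above --- are all sound.
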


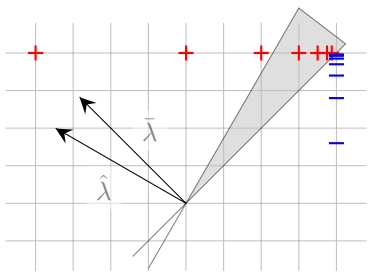
\begin{figure}
    \centering
    \begin{tikzpicture}[scale = 2]
        \draw[step=0.25cm, very thin, lightgray] (-1.2cm,-0.45cm) grid (1.2cm,1.2cm);

        \filldraw [black, fill=lightgray, opacity = 0.5]
                (0,0) -- (60:1.5) -- (45:1.5) -- cycle;
        \draw [black, opacity = 0.5] (60:-0.5) -- (0,0) -- (45:-0.5);
        \draw [
                decoration={markings,mark=at position 1 with {\arrow[ultra thick]{stealth}}},
                    postaction={decorate}
        ] (0,0) -- (135:1);
        \draw [
                decoration={markings,mark=at position 1 with {\arrow[ultra thick]{stealth}}},
                    postaction={decorate}
        ] (0,0) -- (150:1);
        \draw (135:0.5) node[anchor = south west, fill = white, fill opacity= 0.5,
            text opacity = 1.0] {$\bar\lambda$};

        \draw (150:0.5) node[anchor = north east, fill = white, fill opacity=0.5,
            text opacity = 1.0] {$\hat\lambda$};


        \foreach \x in {1,...,7}
        {
            \coordinate (p) at (1- 4 * 2^-\x, 1);
            \draw[thick, red] ($(p) - (0,0.05)$) -- ($(p) + (0,0.05)$);
            \draw[thick, red] ($(p) - (0.05,0)$) -- ($(p) + (0.05,0)$);
        };

        \foreach \x in {1,...,7}
        {
            \coordinate (p) at (1, 1- 1.2 * 2^-\x);
            \draw[thick, blue] ($(p) - (0.05,0)$) -- ($(p) + (0.05,0)$);
        };
    \end{tikzpicture}
    \caption{A bad example for unconstrained linear classification; please
    see \Cref{fact:separable_case_impossibility}.\label{fig:impossibility}}
\end{figure}

A full proof is provided in \Cref{sec:app:impossibility},
but the mechanism is simple enough to appear as a picture.
Consider the linear classification problem in \Cref{fig:impossibility}, which has
positive (``\red{\textbf{+}}'') and negative
(``\blue{\textbf{-}}'') examples along two lines.
Optimal solutions to $\cR_\cL$ are of the
form $c\bar\lambda$, where $\bar\lambda = (-1,+1)$ and $c >0$
(note $\lim_{c\uparrow \infty} \cR_\phi(c\bar\lambda) = \cR_\phi(\SPAN(\cH)) = 0$).
Unfortunately, the positive and negative examples are staggered; as a result,
for any sample, every
max margin predictor $\hat\lambda$,
which is determined solely by the rightmost
``\red{\textbf{+}}'' and uppermost ``\blue{\textbf{-}}'',
will fail to agree with the optimal predictor
on some small region.  A positive probability mass of points fall within this region,
and so, by considering scalings $c\hat\lambda$ as $c\uparrow\infty$, the convex
risk $\cR_\phi$ may be made arbitrary large.

The statement of \Cref{fact:separable_case_impossibility} is encumbered with details
in order to convey the message that not only do such examples exist, they are fairly
benign; indeed, the example depends on the additional regularity of large margin
solutions.   The only difficulty is the lack of any norm constraint on
permissible iterates.

On the other hand, notice that the classification risk $\cR_\cL$ is not only small,
but its empirical counterpart $\cR^m_\cL$ provides a reasonable estimate as $m$ increases.
Furthermore, if the distribution were adjusted slightly so that every $\lambda\in \R^n$
made some
mistake, then these unbounded iterates would fail to exist:
the huge penalty for predictions
very far from correct would constrain the norms of all good predictors.

The preceding paragraph describes the exact strategy of the remainder of the manuscript:
linear classification problems are split into two pieces, one where optimization may
produced unboundedly large iterates with small classification risk, and another piece
where iterates are bounded thanks to the presence of difficult examples.

\section{Hard cores}
\label{sec:hard_cores}

One way to split a linear classification problem into two pieces, one bounded and one
unbounded, is to identify a \emph{hard core} of very difficult instances.
(Note, forms of the hard core have been previously used to study linear classification
\citep{russell_hardcore,mukherjee_rudin_schapire_adaboost_convergence_rate,primal_dual_boosting}.)

\begin{definition}
    Given a linear classification problem $(\cH,\mu)$,
    let $\cD(\cH,\mu)$ denote reweightings of $\mu$ which decorrelate every
    regressor $H\lambda$; that is,
    \[
        \cD(\cH,\mu) :=
        \left\{
            p \in L^1(\mu) : p \geq 0, \forall \lambda\in \R^n\centerdot
            \int y (H\lambda)(x) p(x,y) d\mu(x,y) = 0
        \right\}.
    \]
    Correspondingly, $\cS_\cD(\cH,\mu)$ tracks the supports of these weightings:
    \[
        \cS_\cD(\cH,\mu) := \left\{ [p > 0] : p\in\cD(\cH,\mu)\right\}.
    \]
    A \emph{hard core} $\scrC\subseteq \cX\times\cY$
    for $(\cH,\mu)$ is a maximal element of $\cS_\cD(\cH,\mu)$; that
    is,
    \[
        \scrC \in \cS_\cD(\cH,\mu)
        \qquad
        \textup{and}
        \qquad
        \forall C\in\cS_\cD(\cH,\mu)\centerdot \mu(\scrC \setminus C) \geq 0
        \textup{ and } \mu(C\setminus \scrC) = 0.
    \]
    (``Maximal'', in the presence of measures, will always mean up to
    sets of measure zero.)
\end{definition}

Momentarily it will be established that hard cores split problems in the desired way;
but first, note that hard cores actually exist.

\begin{theorem}
    \label{fact:hard_core:existence}
    Every linear classification problem $(\cH,\mu)$ has a hard core.
\end{theorem}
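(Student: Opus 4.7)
The plan is to show that $\cD(\cH,\mu)$ is closed under suitably normalized countable sums, which makes the family $\cS_\cD(\cH,\mu)$ closed under countable unions, and then to extract a maximal element by exhausting the supremum of $\mu$-measures.

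First I would verify the cone-under-countable-sums property. Given any sequence $\{p_n\}_{n\geq 1}\subseteq \cD(\cH,\mu)$, set
\[
    c_n := \frac{2^{-n}}{1+\|p_n\|_{L^1(\mu)}},
    \qquad
    q := \sum_{n\geq 1} c_n p_n.
\]
Then $\|q\|_{L^1(\mu)} \le \sum_n 2^{-n} = 1$, so $q\in L^1(\mu)$ and $q\geq 0$. For any $\lambda\in\R^n$, the integrand $y(H\lambda)(x)$ is bounded in absolute value by $\|\lambda\|_1$ (since each $h_i$ takes values in $[-1,+1]$), so $|y(H\lambda)(x)\sum_{k\leq n} c_k p_k(x,y)| \leq \|\lambda\|_1\, q(x,y)$, which is integrable. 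Dominated convergence then allows swapping sum and integral, yielding $\int y(H\lambda)\, q\, d\mu = \sum_n c_n \int y(H\lambda)\, p_n\, d\mu = 0$. Hence $q \in \cD(\cH,\mu)$. Since every $c_n>0$, pointwise one has $[q>0] = \bigcup_n [p_n>0]$, so $\cS_\cD(\cH,\mu)$ is closed under countable unions.

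Next I would do the standard ``exhaustion'' argument. Let
\[
    s := \sup\bigl\{\mu(C) : C\in \cS_\cD(\cH,\mu)\bigr\},
\]
which is finite since $\mu$ is a probability measure. Pick $C_n\in \cS_\cD(\cH,\mu)$ with $\mu(C_n)\to s$ and set $\scrC := \bigcup_n C_n$. By the previous paragraph $\scrC\in \cS_\cD(\cH,\mu)$, and monotonicity of $\mu$ gives $\mu(\scrC)\geq \sup_n \mu(C_n) = s$, while the definition of $s$ forces $\mu(\scrC)\leq s$; hence $\mu(\scrC)=s$.

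Finally, I would verify the two maximality conditions from the definition of hard core. The condition $\mu(\scrC\setminus C)\geq 0$ is automatic since $\mu$ is nonnegative. For the other, suppose toward contradiction that some $C\in\cS_\cD(\cH,\mu)$ satisfies $\mu(C\setminus\scrC)>0$. Then $C\cup\scrC \in \cS_\cD(\cH,\mu)$ by closure under (countable) unions, and $\mu(C\cup\scrC) = \mu(\scrC) + \mu(C\setminus\scrC) > s$, contradicting the definition of $s$. This completes the proof. I do not anticipate any serious obstacle; the only point requiring mild care is the $L^1$-normalization needed to form a genuine element of $L^1(\mu)$ from the sequence $\{p_n\}$, and the invocation of dominated convergence — both smoothly handled by the uniform boundedness of $y(H\lambda)(x)$ that follows from $h_i\in[-1,+1]$.
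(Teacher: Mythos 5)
Your proposal is correct and follows essentially the same route as the paper: closure of $\cD(\cH,\mu)$ under normalized countable sums (hence closure of $\cS_\cD(\cH,\mu)$ under countable unions), followed by an exhaustion argument on $\sup\{\mu(C):C\in\cS_\cD(\cH,\mu)\}$, with maximality forced by forming the union with any putative larger set. The only differences are cosmetic (your normalization $2^{-n}/(1+\|p_n\|_{L^1})$ versus the paper's $2^{-i}\max\{1,\|p_i\|_1\}^{-1}$, and your explicit spelling-out of the final maximality contradiction, which the paper leaves implicit).
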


To prove this, first observe that $\cS_\cD(\cH,\mu)$ is nonempty: it always contains
$\emptyset$, with corresponding reweighting $p(x,y) = 0$.  In order to produce a hard
core, it does not suffice to simply union the contents of $\cS_\cD(\cH,\mu)$, since the
resulting set may fail to be measurable, and it is entirely unclear if a corresponding
$p\in\cD(\cH,\mu)$ can be found.  Instead, the full proof in \Cref{sec:app:hard_cores}
constructs
the hard core via an optimization, and the observation that $\cS_\cD(\cH,\mu)$ is closed
under countable unions.

With the basic sanity check of existence out of the way, notice that
hard cores achieve the goal laid out at the closing of \Cref{sec:impossibility}.
The proof, which is somewhat involved, appears in \Cref{sec:app:hard_cores}.

\begin{theorem}
    \label{fact:partial_ultragordan}
    Let problem $(\cH,\mu)$ and hard core $\scrC$ be given.  The following statements hold.
    \begin{enumerate}
        \item There exists a sequence $\{\lambda_i\}_{i=1}^\infty$
        with $y(H\lambda_i)(x) = 0$ for $\mu$-a.e. $(x,y)\in \scrC$,
        and $y'(H\lambda_i)(x') \uparrow \infty$ for $\mu$-a.e. $(x',y')\in \scrC^c$.
        \item Every $\lambda\in\R^n$ satisfies either
            $\mu(\scrC \cap [y(H\lambda)(x) = 0]) = \mu(\scrC)$
            or $\mu(\scrC \cap [y(H\lambda)(x) < 0]) > 0$.
    \end{enumerate}
\end{theorem}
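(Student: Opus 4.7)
I would prove (2) directly from the witness $p_\scrC \in \cD(\cH,\mu)$ with $[p_\scrC > 0] = \scrC$ guaranteed by the hard core's construction, and reduce (1) to exhibiting a single $\lambda_* \in V$ satisfying $y(H\lambda_*)(x) > 0$ for $\mu$-a.e.\ $(x,y) \in \scrC^c$, where $V := \{\lambda \in \R^n : y(H\lambda)(x) = 0 \text{ $\mu$-a.e.\ on } \scrC\}$ is a linear subspace of $\R^n$. Once such a $\lambda_*$ is in hand, the sequence $\lambda_i := i\lambda_*$ vanishes on $\scrC$ and diverges monotonically to $\infty$ on $\scrC^c$ a.e., yielding (1).

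For (2), assume $\mu(\scrC \cap [y(H\lambda)(x) < 0]) = 0$, so $y(H\lambda)\cdot p_\scrC$ is nonnegative on $\scrC$ and zero off $\scrC$. Its total integral vanishes since $p_\scrC \in \cD$, so the integrand is $\mu$-a.e.\ zero on $\scrC$, and because $p_\scrC > 0$ a.e.\ there, $y(H\lambda)(x) = 0$ $\mu$-a.e.\ on $\scrC$; that is, $\mu(\scrC \cap [y(H\lambda)(x) = 0]) = \mu(\scrC)$.

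For (1), I would argue by contradiction via a measure-theoretic Gordan alternative applied to the finite-dimensional subspace $\{y(H\lambda)|_{\scrC^c} : \lambda \in V\} \subseteq L^\infty(\scrC^c)$: either some $\lambda_* \in V$ has $y(H\lambda_*) > 0$ a.e.\ on $\scrC^c$, or there is $q \in L^1_+(\scrC^c)$ with $q \neq 0$ and $\int_{\scrC^c} y(H\lambda) q \, d\mu = 0$ for every $\lambda \in V$. In the second case I would violate maximality of $\scrC$ by constructing $\tilde p \in \cD$ with $[\tilde p > 0] \supsetneq \scrC$. The moment vector $w := \bigl(\int_{\scrC^c} y h_i q \, d\mu\bigr)_{i=1}^n$ lies in the annihilator $V^\circ$ by choice of $q$; since $\lambda \mapsto y(H\lambda)|_\scrC$ has kernel $V$, the dual moment map $q' \mapsto \bigl(\int_\scrC y h_i q' \, d\mu\bigr)_i$ on $L^\infty(\scrC)$ has image exactly $V^\circ$, and the same surjectivity persists upon restricting $q'$ to $\scrC_\delta := \scrC \cap [p_\scrC \geq \delta]$ for $\delta$ small enough that the analogous kernel built from $\scrC_\delta$ still equals $V$ (this decreasing chain of subspaces collapses to $V$ as $\delta \downarrow 0$, hence stabilizes). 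For $\epsilon > 0$ sufficiently small, pick $q' \in L^\infty(\scrC_\delta)$ with moment $-\epsilon w$ and $|q'| \leq \delta \leq p_\scrC$; then $\tilde p := \epsilon q + p_\scrC + q'$ is nonnegative, has support strictly larger than $\scrC$, and its $y(H\lambda)$-moment vanishes on $V$ by construction and on a complement through $\epsilon w + (-\epsilon w) = 0$, contradicting maximality.

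The main obstacle is the Gordan upgrade from a weak separator ($g \geq 0$ a.e.) to a strict one ($g > 0$ a.e.): I would handle it by iterating on $[g = 0] \cap \scrC^c$, observing that the dimension of the restricted subspace strictly drops at each step—since a witness $q \neq 0$ for alternative (ii) on any positive-measure subset lifts by zero-extension to the same alternative on $\scrC^c$, which we have ruled out—and finally combining the resulting finitely many layered separators with rapidly decreasing positive coefficients into a single element of $V$ that is strictly positive a.e.\ on $\scrC^c$.
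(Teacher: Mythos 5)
Your part (2) is correct and is essentially the paper's own argument (the ``$D\in\cS_\cD$'' direction of \Cref{fact:somewhat_stiemke}), and your construction of $\tilde p$ from a nonzero nonnegative annihilator $q$ on $\scrC^c$ --- compensating its moment vector inside $\scrC_\delta$ via the finite-dimensional moment map, whose image is $V^\perp$ by the stabilizing chain $V_\delta$ --- is sound and is a genuinely more elementary route to ``no decorrelating mass lives off $\scrC$'' than the paper's Fenchel-duality machinery (\Cref{fact:duality}, \Cref{fact:hard_core:equivalence}). The gap is in part (1): you reduce it to producing a \emph{single} $\lambda_*\in V$ with $y(H\lambda_*)(x)>0$ for $\mu$-a.e.\ $(x,y)\in\scrC^c$, justified by a ``measure-theoretic Gordan alternative.'' That dichotomy is false, and such a $\lambda_*$ need not exist even when the theorem holds. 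Take $\cX=(0,1]\cup\{2\}$, all labels $+1$, $\mu=\tfrac12(\textup{Lebesgue on }(0,1])+\tfrac12(\textup{point mass at }2)$, and $\cH=\{h_1,h_2\}$ with $h_1(x)=x\,\1(x\le 1)$ and $h_2(x)=\1(x=2)-\1(x\le 1)$. Then $\cD(\cH,\mu)$ contains only $p=0$ ($\mu$-a.e.), so $\scrC=\emptyset$ and $V=\R^2$: alternative (ii) fails, but alternative (i) fails as well, since positivity at the atom forces $\lambda_2>0$ and then $\lambda_1x-\lambda_2<0$ on the positive-measure set $(0,\lambda_2/\lambda_1)$. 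The theorem's conclusion nevertheless holds, e.g.\ with $\lambda_i=(i^2,i)$ --- which is exactly why the statement is about sequences rather than a single strictly positive element.

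The same example shows where your final step, ``combine the layered separators with rapidly decreasing positive coefficients into a single element,'' breaks: on a layer $L_k=Z_{k-1}\setminus Z_k$ the separator $w_k$ is positive a.e.\ but may have essential infimum $0$, so for any fixed $c_{k+1},\dots,c_K>0$ the set where $c_kw_k$ is dominated by the later (possibly negative) terms can have positive measure; in the example $w_1=h_1$, $w_2=h_2$, and no $c_1h_1+c_2h_2$ is a.e.\ positive. Your iteration itself (separation of $0$ from the moment image to get a weak separator, zero-extension of witnesses, strict growth of $V_{Z_k}$, hence at most $n$ layers) is fine, and it can be salvaged, but only by aiming directly at the sequence statement: for instance $\lambda_i:=\sum_{k\le K} i^{K+1-k}\lambda_k$ vanishes a.e.\ on $\scrC$ (each $\lambda_k\in V$) and on $L_k$ is at least $i^{K+1-k}\bigl(w_k(x)-KB/i\bigr)\to\infty$ with $B:=\max_j\|\lambda_j\|_1$. (Like the paper's Egoroff/convergence-in-measure construction of primal hard cores, this yields divergence to $+\infty$ rather than literal monotone divergence, so the ``$\uparrow$'' must be read loosely in either case.) As written, however, the reduction to a single strictly positive element --- and the strict Gordan dichotomy underpinning it --- is a genuine error, not a fixable detail.
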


The first property provides the existence of a sequence which is not only very
good $\mu$-a.e.
over $\scrC^c$, but furthermore does not impact the value of $H\lambda$ over
$\scrC$; that is to say, this sequence can grow unboundedly, and have unboundedly positive
margins over $\scrC^c$, while optimization over $\scrC$ can effectively proceed
independently.  On the other hand, $\scrC$ is difficult: every predictor is either
abstaining $\mu$-a.e., or makes errors on a set of positive measure.

Finally, corresponding to the hard core, it is useful to specialize the definition
of risk to consider regions.
\begin{definition}
    Given a set $C$ (typically $\scrC$ or $\scrC^c$), loss $\phi$, function class
    $\cF$, and any $f\in\cF$, define
    \[
        \cR_{\phi;C}(f) := \int \phi(-y f(x))\1((x,y)\in C)d\mu(x,y),
        \qquad
        \qquad
        \cR_{\phi;C}(\cF) := \inf_{f\in\cF} \cR_{\phi;C}(f),
    \]
    with analogous definitions for $\cR^m_{\phi;C}$, $\cR^m_{\cL;C}$, etc.
\end{definition}

\section{Hard cores and convex risk}
\label{sec:hard_core:true_risk}

The hard core imposes the following structure on $\cR_\phi$.  As provided
by \Cref{fact:partial_ultragordan}, there is a sequence which does arbitrarily well
over $\scrC^c$, without impacting predictions over $\scrC$.  On the other hand, since
mistakes must occur over $\scrC$, convex losses within $\Phi$ will be forced to
avoid large predictors.

\begin{theorem}
    \label{fact:hard_core:true_risk}
    Let problem $(\cH,\mu)$, hard core $\scrC$, and loss $\phi\in\Phi$ be given.
    \begin{enumerate}
        \item
            There exists a sequence $\{\lambda_i\}_{i=1}^\infty$ with
            $y(H\lambda_i)(x) =0$ for $\mu$-a.e. $(x,y)\in\scrC$, and
            $\lim_{i\to\infty} \phi(-y'(H\lambda_i)(x'))=0$ for $\mu$-a.e.
            $(x',y')\in\scrC^c$.
            \label{fact:hard_core:true_risk:1}
        \item
            Let any $\rho>0$ be given.  Then there exists $c_\rho\in\R$ and a set
            $N_\rho$ with $\mu(N_\rho) = 0$ so
            that for every $\lambda\in\R^n$ with $\cR_{\phi;\scrC}(H\lambda) \leq
            \cR_{\phi;\scrC}(\SPAN(\cH)) + \rho$,
            there exists a representation $\lambda'\in\R^n$ with
            $H\lambda = H\lambda'$ over $\scrC\setminus N_\rho$, and
            $\|\lambda'\|_1 \leq c_\rho$.
            \label{fact:hard_core:true_risk:2}
    \end{enumerate}
\end{theorem}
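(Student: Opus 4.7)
Part 1 follows immediately from \Cref{fact:partial_ultragordan}, part 1: the sequence $\{\lambda_i\}$ it furnishes already satisfies $y(H\lambda_i)(x) = 0$ on $\scrC$, and satisfies $y'(H\lambda_i)(x') \uparrow +\infty$ on $\scrC^c$, which via the defining property $\lim_{z\to -\infty}\phi(z) = 0$ of $\Phi$ yields $\phi(-y'(H\lambda_i)(x')) \to 0$ on $\scrC^c$.

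Part 2 is the substantive claim, and the strategy is a compactness plus Fatou argument performed modulo the subspace
\[
    V := \bigl\{v \in \R^n : (Hv)(x) = 0 \text{ for } \mu\text{-a.e. } (x,y) \in \scrC\bigr\}
\]
on which $\cR_{\phi;\scrC}(H\,\cdot)$ is trivially insensitive. A necessary preliminary is that every $\phi \in \Phi$ is coercive at $+\infty$: being convex, nondecreasing, positive at zero, and tending to $0$ at $-\infty$ rules out boundedness (via a short one-sided-derivative argument), so $\phi(z) \to +\infty$ as $z \to +\infty$. To obtain a null set $N_\rho$ uniform in $\lambda$, I would fix at the outset a finite basis $v_1,\ldots,v_k$ of $V$, let $N_i \subseteq \scrC$ be the $\mu$-null set outside of which $Hv_i$ vanishes, and set $N_\rho := \bigcup_i N_i$; by linearity, $(Hv)(x) = 0$ on $\scrC\setminus N_\rho$ for \emph{every} $v \in V$ simultaneously. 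The natural candidate for $\lambda'$ is then the norm-minimizing element of the coset $\lambda + V$ (attained by finite-dimensionality), so that $H\lambda = H\lambda'$ on $\scrC\setminus N_\rho$ and $\|\lambda'\|_1$ coincides with the quotient norm $\inf_{v\in V}\|\lambda+v\|_1$ of $[\lambda] \in \R^n / V$.

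It therefore suffices to show that the image in $\R^n/V$ of the $\rho$-suboptimal set $\Lambda_\rho := \{\lambda : \cR_{\phi;\scrC}(H\lambda) \leq \cR_{\phi;\scrC}(\SPAN(\cH)) + \rho\}$ is bounded in the quotient norm, with bound $c_\rho$ depending only on $\rho$. I would argue by contradiction: given $\lambda_k \in \Lambda_\rho$ with diverging coset norm, replace each $\lambda_k$ by its norm-minimizing coset representative, normalize to $\hat\lambda_k := \lambda_k / \|\lambda_k\|_1$ of unit $\|\cdot\|_1$-norm, and extract a subsequential $\R^n$-limit $\hat\lambda$. Since the quotient norm is $1$-Lipschitz in $\|\cdot\|_1$, the limit satisfies $\|[\hat\lambda]\| = 1$ and hence $\hat\lambda \notin V$, so $H\hat\lambda$ fails to vanish on a set of positive $\mu$-measure inside $\scrC$; \Cref{fact:partial_ultragordan}, part 2, then yields $\epsilon > 0$ with $\mu(A_\epsilon) > 0$ for $A_\epsilon := \scrC \cap [y(H\hat\lambda)(x) \leq -\epsilon]$. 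Pointwise convergence $H\hat\lambda_k \to H\hat\lambda$ forces $-y(H\lambda_k)(x) = \|\lambda_k\|_1\,(-y(H\hat\lambda_k)(x)) \to +\infty$ on $A_\epsilon$; coercivity of $\phi$ sends the integrand to $+\infty$ pointwise on $A_\epsilon$, and Fatou's lemma (applied to the nonnegative integrand) drives $\cR_{\phi;\scrC}(H\lambda_k) \to \infty$, contradicting the uniform $\rho$-suboptimality.

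The main obstacle is exactly the uniformity of $N_\rho$ in $\lambda$: a norm-minimizing representative exists for each individual $\lambda$, but the correction $\lambda' - \lambda$ ranges over all of $V$, and each $v \in V$ a priori carries its own null set on which $Hv$ may fail to vanish. Committing up front to a finite basis of $V$, and unionizing the corresponding null sets once and for all, is what makes $N_\rho$ serve \emph{every} $\rho$-suboptimal $\lambda$ at once; everything else in part 2 is the coercivity-plus-compactness loop to bound $\|\lambda'\|_1 = \|[\lambda]\|_{\text{quot}}$ by a constant $c_\rho$.
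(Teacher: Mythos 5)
Your proof is correct, and part~1 is verbatim what the paper does (cite \Cref{fact:partial_ultragordan} plus the defining limit of $\Phi$). For part~2 the paper simply invokes \Cref{fact:somewhat_stiemke}: since $\scrC\in\cS_\cD(\cH,\mu)$, statement~(3) there gives statement~(2), which is the claim. The real work is the paper's proof of (1)$\implies$(2) in \Cref{fact:somewhat_stiemke}, and your argument re-derives exactly that implication (starting from \Cref{fact:partial_ultragordan}, part~2, which is statement~(1) specialized to $\scrC$), with the same skeleton but different devices at each step. For the uniform null set, the paper's \Cref{fact:almost_kernel} takes a countable dense subset of $\R^n$ and a continuity estimate, whereas you take a finite basis of the kernel subspace $V$ and use linearity; yours is simpler and achieves the same uniformity. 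For the bounded representative, the paper projects orthogonally onto $\cK^\perp$ and needs \Cref{fact:almost_kernel}(3) (a compactness argument giving $\|H\lambda\|_{L^\infty}\geq c\|\lambda^\perp\|_1$) to conclude that the normalized limit direction is not $\mu$-a.e.\ zero on $\scrC$; you instead take the $\ell^1$-norm-minimizing coset representative and get the same conclusion from $1$-Lipschitzness of the quotient norm, which is a clean substitute. For the blow-up, the paper lower-bounds the risk along the limit direction via a subgradient at $0$ and a closed-sublevel-set (recession-type) convexity argument, while you use coercivity of $\phi$ at $+\infty$ together with Fatou on the positive-measure set $A_\epsilon$; coercivity is immediate from $\partial\phi(0)\subseteq\R_{++}$ (already recorded in \Cref{fact:phi_basic}), so this is legitimate and arguably more direct. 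Two trivial points you leave implicit but should state: $\cR_{\phi;\scrC}(\SPAN(\cH))\leq\phi(0)\mu(\scrC)<\infty$, so uniform $\rho$-suboptimality really does cap the risks being blown up; and there is no circularity, since \Cref{fact:partial_ultragordan} is proved independently of the present theorem.
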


The structural properties of the true convex risk transfer over, with high probability,
to any sampled problem.  Crucially, the various bounds are quantified outside the
probability; that is to say, they do not depend on the sample.

\begin{theorem}
    \label{fact:hard_core:empirical_risk}
    Let problem $(\cH,\mu)$, hard core $\scrC$, and loss $\phi\in\Phi$ be given.
    \begin{enumerate}
        \item With probability 1 over the draw of a finite sample,
            there exists $\lambda\in\R^n$ so that every $(x_i,y_i)\in\scrC^c$
            satisfies $y_i (H\lambda)(x_i) > 0$,
            and every $(x_i',y_i') \in \scrC$ satisfies
            $y_i'(H\lambda)(x_i') = 0$.
            \label{fact:hard_core:empirical_risk:1}
        \item
            Given any empirical suboptimality $\rho >0$, there exist $c>0$ and $b>0$
            so that for any $\delta>0$, with probability at least $1-\delta$
            over a draw of $m$ points where $m_\scrC$, the number of points landing
            in $\scrC$, has bound
            \[
                m_\scrC \geq c^2(\ln(n) + \ln(1/\delta)),
            \]
            then every $\rho$-suboptimal $\lambda\in\R^n$ over the sample
            restricted to $\scrC$,
            meaning
            \[
                \cR^m_{\phi;\scrC}(H\lambda) \leq \cR^m_{\phi;\scrC}(\SPAN(\cH)) +\rho,
            \]
            has a representation $\lambda'$ with $\|\lambda'\|_1 \leq b$
            which has $H\lambda = H\lambda'$ over the sample restricted to $\scrC$,
            and in general $\mu$-a.e. over $\scrC$.
            \label{fact:hard_core:empirical_risk:2}
    \end{enumerate}
\end{theorem}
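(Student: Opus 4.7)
\emph{Part 1.} Apply \Cref{fact:partial_ultragordan}(1) to obtain a sequence $\{\lambda_i\}_{i=1}^\infty$ and a $\mu$-null set $N$ outside of which $y(H\lambda_i)(x)=0$ for every $i$ on $\scrC$, and $y'(H\lambda_i)(x')\uparrow+\infty$ on $\scrC^c$. With probability one a finite sample avoids $N$, and since the sample has only finitely many points in $\scrC^c$ and each pointwise sequence $y_j(H\lambda_i)(x_j)$ is monotone increasing to $+\infty$, a single index $i^*$ (the maximum of the per-point thresholds) witnesses $y_j(H\lambda_{i^*})(x_j)>0$ at every sample point in $\scrC^c$ and $y_j(H\lambda_{i^*})(x_j)=0$ at every sample point in $\scrC$.

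\emph{Part 2, setup and concentration.} Let $c_0:=c_{\rho/4}$ from \Cref{fact:hard_core:true_risk}(2) and pick $\lambda^*\in\R^n$ with $\|\lambda^*\|_1\leq c_0$ and $\cR_{\phi;\scrC}(H\lambda^*)\leq\cR_{\phi;\scrC}(\SPAN(\cH))+\rho/4$. Because $\|\lambda\|_1\leq c_0$ forces $|H\lambda(x)|\leq c_0$ and $\phi$ is bounded and Lipschitz on $[-c_0,c_0]$, an $\ell^1$-Rademacher bound for the finite class $\cH$ yields
\[
    \sup_{\|\lambda\|_1\leq c_0}\bigl|\cR^m_{\phi;\scrC}(H\lambda)-\cR_{\phi;\scrC}(H\lambda)\bigr|\leq\rho/4
\]
with probability at least $1-\delta/2$ once $m_\scrC\geq c^2(\ln n+\ln(1/\delta))$ for a suitable $c=c(\rho,c_0,\phi)$. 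Consequently $\cR^m_{\phi;\scrC}(\SPAN(\cH))\leq\cR_{\phi;\scrC}(\SPAN(\cH))+\rho/2$, and any $\rho$-empirically-suboptimal $\lambda$ obeys $\cR^m_{\phi;\scrC}(H\lambda)\leq\cR_{\phi;\scrC}(\SPAN(\cH))+3\rho/2$.

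\emph{Part 2, reduction to a bounded representation.} Let $K_\mu:=\{\nu\in\R^n:H\nu=0\ \mu\text{-a.e.\ on }\scrC\}$; since $K_\mu$ is a finite-dimensional subspace, choosing a basis and unioning the finitely many associated $\mu$-null sets produces a single $\mu$-null $N'\subseteq\scrC$ off of which every $\nu\in K_\mu$ has $H\nu\equiv 0$. With probability one the sample avoids $N'$, whence $K_\mu\subseteq K_s:=\{\nu:H\nu=0$ at every sample point in $\scrC\}$. Replace $\lambda$ by its $\ell^2$-minimum-norm representative $\lambda'$ in $\lambda+K_\mu$, i.e.\ the orthogonal projection onto $K_\mu^\perp$; then $H\lambda=H\lambda'$ both $\mu$-a.e.\ on $\scrC$ and at every sample point in $\scrC$, so $\lambda'$ inherits the empirical-risk bound. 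It now suffices to bound $\|\lambda'\|_2$, as then $\|\lambda'\|_1\leq\sqrt{n}\|\lambda'\|_2=:b$.

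\emph{Main obstacle: coercivity on $K_\mu^\perp$.} The crux is showing that $\{\lambda'\in K_\mu^\perp:\cR^m_{\phi;\scrC}(H\lambda')\leq\cR_{\phi;\scrC}(\SPAN(\cH))+3\rho/2\}$ is uniformly $\ell^2$-bounded. Naive uniform convergence fails for unbounded $\lambda'$, so I would argue by contradiction: if $\lambda'_k\in K_\mu^\perp$ has $\|\lambda'_k\|_2\to\infty$ and bounded empirical risk, pass (by compactness of the unit sphere of the finite-dimensional $K_\mu^\perp$) to a limit direction $u^*\in K_\mu^\perp$ with $\|u^*\|_2=1$. Since $u^*\notin K_\mu$, \Cref{fact:partial_ultragordan}(2) furnishes a positive-$\mu$-measure subset $B(u^*):=\{(x,y)\in\scrC:y(Hu^*)(x)<0\}$; any sample point therein forces $\phi(-y_j(H\lambda'_k)(x_j))\to+\infty$, contradicting the bounded empirical risk. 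To make this a single high-probability statement uniform over the (sample-dependent) direction $u^*$, use that $u\mapsto\mu(B(u))$ is lower semicontinuous and strictly positive on the compact unit sphere of $K_\mu^\perp$, hence uniformly bounded below by some $\mu_0>0$; a covering-net argument on this finite-dimensional sphere (contributing the $\ln n$ term, possibly absorbed into the constant $c$) then ensures the sample intersects $B(u)\cap\scrC$ for every $u$ on the sphere. The resulting sample-independent $\ell^2$ bound on $\lambda'$, inflated by $\sqrt n$, supplies the claimed $b$.
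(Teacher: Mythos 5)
Your Part~1 is fine, and in fact slightly more direct than the paper's argument (which goes through Egoroff's theorem and a union bound over the $m$ draws): since the existential claim is made per sample, it is legitimate to fix the countable union of null sets coming from \Cref{fact:partial_ultragordan}, note the sample avoids it almost surely, and take the maximum index over the finitely many sample points in $\scrC^c$. Your Part~2, however, has a genuine gap at precisely the point the theorem is designed to deliver. The statement requires $b$ to be chosen \emph{before} the sample --- independent of $m$, $\delta$, and the draw --- and your coercivity mechanism does not produce such a bound. First, ``the sample intersects $B(u)\cap\scrC$ for every $u$ on the sphere'' cannot be extracted from a finite net as stated: membership in $B(u_0)=\scrC\cap[y(Hu_0)(x)<0]$ at a net point does not transfer to nearby $u$, because the defining inequality carries no margin; you would need to pass to margin sets $B_\gamma(u):=\scrC\cap[y(Hu)(x)<-\gamma]$ and establish, via a compactness argument on the unit sphere of your $K_\mu^\perp$, a uniform $\gamma^*>0$ together with a uniform lower bound on $\mu(B_{\gamma^*}(u))$. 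Second, and more seriously, even granting that every direction has at least one violated sample point, a single point contributes only weight $1/m$ to $\cR^m_{\phi;\scrC}$, so the norm bound you can extract from ``empirical risk at most $\cR_{\phi;\scrC}(\SPAN(\cH))+3\rho/2$'' grows linearly in $m$; it is not the sample-independent $b$ the theorem asserts (and which the surrounding text emphasizes). The repair is to show that, uniformly over unit directions $u\in K_\mu^\perp$, a constant \emph{fraction} of the sample points in $\scrC$ lands in $B_{\gamma^*}(u)$ --- a relative-frequency concentration step (Chernoff plus a union bound over the net, with the net size absorbed into $c$) --- after which $\phi(\gamma^* R/2)$ times that fraction being at most $\cR_{\phi;\scrC}(\SPAN(\cH))+3\rho/2$ yields a sample-independent radius $R$, hence $b$.

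For comparison, the paper avoids direction-space coverings entirely: it uses the true-risk structural result (\Cref{fact:hard_core:true_risk}, via \Cref{fact:somewhat_stiemke}) to conclude that any $\lambda$ \emph{without} a representation of norm at most $b$ must have true risk over $\scrC$ exceeding the optimum by more than $\rho+4$; it then picks a bounded near-optimal $\bar\lambda$, finds by continuity a point $\hat\lambda$ on the segment between $\bar\lambda$ and $\lambda$ whose true suboptimality lies in $[\rho+3,\rho+4]$ (hence boundedly representable), applies uniform convergence only over the $l^1$ ball of radius $b$ (\Cref{fact:rademacher_risk}), and transfers the resulting empirical gap from $\hat\lambda$ to $\lambda$ by one-dimensional convexity (\Cref{fact:convex:increasing}). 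That route gets sample-independence of $b$ for free, because $b$ is defined by the true risk; your route can be made to work, but only after the quantitative margin-and-fraction strengthening described above.
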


\section{Deviation inequalities}
\label{sec:deviations}

With the structure of the convex risk in place, the stage is set to establish deviation
inequalities.  These will be stated in terms of both a convex risk $\cR_\phi$,
but also the classification risk $\cR_\cL$.  In order to make this correspondence,
this manuscript relies on standard techniques due to
\citet{zhang_convex_consistency} and \citet*{bartlett_jordan_mcauliffe}.
\begin{definition}
    Let $\fF$ denote the set of 
    measurable functions over $\cX$.
\end{definition}

\begin{proposition}[\citet{bartlett_jordan_mcauliffe}]
    \label{fact:psi:basics}
    Let any $\phi\in\Phi$ be given with $\phi$ differentiable at 0.
    There exists an associated function
    $\psi : [0,1]\to [0,\infty)$ with the following properties.  First,
        for any probability measure $\mu$ and any $f:\cX\to\R$,
        $\psi(\cR_\cL(f) - \cR_\cL(\fF)) \leq \cR_\phi(f) - \cR_\phi(\fF)$.
        Second, the inverse $\psi^{-1}$ exists over $[0,\infty)$, and satisfies
            $\psi^{-1}(r) \downarrow 0$ as $r\downarrow 0$.
\end{proposition}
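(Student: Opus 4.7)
The plan is to apply the calibration-function framework of \citet{bartlett_jordan_mcauliffe} after translating to the present sign convention, where the loss is $\phi(-yf(x))$ with $\phi\in\Phi$ nondecreasing. First, I would condition on $x$ via the regression function $\eta(x)$, the conditional probability of $y=+1$ given $x$, and write $\cR_\phi(f) = \int C_{\eta(x)}(f(x))\,d\mu_X(x)$, where
\[
    C_\eta(\alpha) := \eta\phi(-\alpha) + (1-\eta)\phi(\alpha),
    \qquad
    H(\eta) := \inf_{\alpha\in\R} C_\eta(\alpha),
    \qquad
    H^-(\eta) := \inf\{C_\eta(\alpha) : \alpha(2\eta - 1) \leq 0\}.
\]
The pointwise calibration gap is $\tilde\psi(\theta) := H^-((1+\theta)/2) - H((1+\theta)/2)$ on $\theta\in[0,1]$, and I would set $\psi := \tilde\psi^{**}$, the largest convex minorant, extended by $+\infty$ outside $[0,1]$.

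Next, I would establish the first (risk-transfer) property by combining the pointwise inequality
\[
    \tilde\psi(|2\eta(x) - 1|)\cdot \1\bigl(f \textup{ misclassifies at } x\bigr) \leq C_{\eta(x)}(f(x)) - H(\eta(x)),
\]
which follows directly from the definitions since a misclassifying $f(x)$ lies on the wrong side of $0$, with Jensen's inequality applied to the convex $\psi \leq \tilde\psi$. Integrating against $\mu_X$ yields $\int \psi(\cdot)\,d\mu_X \leq \cR_\phi(f) - \cR_\phi(\fF)$, and Jensen moves the integral inside the convex $\psi$; the argument matches exactly $\cR_\cL(f) - \cR_\cL(\fF) = \int |2\eta - 1|\cdot\1(f \textup{ misclassifies})\,d\mu_X$, so the desired inequality follows.

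The second property --- that $\psi^{-1}(r)\downarrow 0$ as $r\downarrow 0$ --- reduces to showing $\psi(0) = 0$ together with $\psi(\theta) > 0$ for $\theta\in(0,1]$; these combined with convexity of $\psi$ force continuity and strict monotonicity near $0$. The nontrivial point is classification-calibration, namely $\tilde\psi(\theta) > 0$ for $\theta > 0$, and this is exactly where differentiability of $\phi$ at $0$ is needed. Any $\phi\in\Phi$ is automatically nondecreasing (a convex $\phi\geq 0$ with $\lim_{z\to-\infty}\phi = 0$ admits no local decrease) and satisfies $\phi(0) > 0$; differentiability at $0$ then forces $\phi'(0) > 0$, for otherwise convexity would make $\phi$ constant on $(-\infty,0]$, contradicting $\phi(0) > 0$. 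For $\eta > 1/2$, the one-sided derivative $C_\eta'(0) = (1 - 2\eta)\phi'(0) < 0$ forces any near-minimizer of $C_\eta$ to lie strictly to the right of $0$, separating $H(\eta)$ from $H^-(\eta)$; the case $\eta < 1/2$ is symmetric.

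The principal obstacle I anticipate is the careful handling of unbounded cases: for losses like the exponential, the infimum $H(\eta)$ may fail to be attained when $\eta\in\{0,1\}$, so every step must use infima and approaching sequences rather than minimizers, and continuity of $\tilde\psi$ at $\theta=1$ must be verified directly. The biconjugate operation tidies up any residual non-convexity of $\tilde\psi$ while preserving $\tilde\psi(0)=0$ and strict positivity on $(0,1]$; the remainder is a standard reduction to \citet{bartlett_jordan_mcauliffe}.
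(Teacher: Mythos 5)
Your proposal is correct and follows essentially the same route as the paper: the paper's proof is a direct repackaging of \citet{bartlett_jordan_mcauliffe} (classification calibration from convexity plus differentiability at $0$ with $\phi'(0)>0$ after the sign flip, the $\psi$-transform bound, and the properties $\psi(0)=0$, positivity, continuity, and convexity giving the inverse's behavior), which is exactly the machinery you reconstruct in detail. The only difference is that you re-derive the internals (conditional risk, $H$, $H^-$, biconjugate, Jensen) where the paper simply cites the corresponding theorem and lemma parts.
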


\begin{definition}
    Given $\phi\in\Phi$, let $\psi$, called the \emph{$\psi$-transform},
    be as in \Cref{fact:psi:basics}.
\end{definition}

The general use of $\psi$ is through its inverse, which provides
\begin{align*}
    \cR_\cL(H\lambda) - \cR_\cL(\fF)
    &\leq \psi^{-1}(\cR_\phi(H\lambda) - \cR_\phi(\fF))
    \\
    &=
    \psi^{-1}\left(\cR_\phi(H\lambda) - \cR_\phi(\SPAN(\cH))
    + \cR_\phi(\SPAN(\cH))- \cR_\phi(\fF)\right).
\end{align*}
Although $\psi^{-1}$ may be unwieldy, it is frequently easy to provide a useful
upper bound.  For instance, the exponential loss has
$\psi^{-1}(r) \leq 2\sqrt r$,
the logistic loss has $\psi^{-1}(r) \leq 4\sqrt r$,
and the hinge loss has $\psi^{-1}(r) = r$
\citep{zhang_convex_consistency,bartlett_jordan_mcauliffe}.

\begin{theorem}
    \label{fact:deviations}
    Let $(\cH,\mu)$, $\scrC$,
    and $\phi\in\Phi$ be given.
    Let a suboptimality tolerance $\rho>0$ be given;
    results will depend on reals $c>0$ and $b>0$ determined by the preceding terms.
    The following statements simultaneously hold with any probability
    $1-\delta$ over the draw of $m$ samples (with $\delta' := \delta/8$ for convenience),
    and any weighting $\lambda\in\R^n$ which is $\epsilon$-suboptimal (with
    $\epsilon \leq \rho$)
    for the corresponding
    surrogate empirical risk problem, meaning
    $\cR_\phi^m(H\lambda) \leq \cR_\phi^m(\SPAN(\cH)) + \epsilon$.
    \begin{enumerate}
        \item
            \label{fact:deviations:1}
            Let $m_\scrC$ and $m_+$ respectively denote the number of samples
            falling into $\scrC$ and $\scrC^c$.  Then
            \begin{align*}
                m_\scrC &\geq m\left(\mu(\scrC)
                 - \sqrt{\ln(1/\delta') / (2m)}
                \right),
                \\
                m_+ &\geq m\left(\mu(\scrC^c)
                 - \sqrt{\ln(1/\delta') / (2m)}
                \right).
            \end{align*}
        \item
            \label{fact:deviations:2}
            The true classification risk over the unbounded portion, $\scrC^c$, has bound
            \begin{equation}
                \!\!\!\!\!\cR_{\cL;\scrC^c}(H\lambda)
                \leq \frac{\epsilon}{\phi(0)}
                +2\sqrt{\frac{2\epsilon(n\ln(2m_++1) + \ln(4/\delta')}{\phi(0)m_+}}
                + \frac{4(n\ln(2m_++1) + \ln(4/\delta')} {m_+}.
                \label{eq:deviations:2:1}
            \end{equation}
            If moreover 
            $\epsilon < \phi(0)/m$, then
            \begin{equation}
                \cR_{\cL;\scrC^c}(H\lambda)
                \leq
                \frac{4(n\ln(2m_++1) + \ln(4/\delta')} {m_+}.
                \label{eq:deviations:2:2}
            \end{equation}
        \item
            \label{fact:deviations:3}
            Suppose
            \[
                m_\scrC \geq c^2 (\ln(n) + \ln(6/\delta')).
            \]
            The true surrogate risk over the unbounded portion has bound
            \begin{equation}
                \cR_{\phi;\scrC}(H\lambda)
                - \cR_{\phi;\scrC}(\SPAN(\cH))
                \leq
                \epsilon
                + \frac {c\left(\sqrt{\ln(n)} + 4\sqrt{\ln(2/\delta')}\right)}{\sqrt {m_{\scrC}}}
                ,
                \label{eq:deviations:3:1}
            \end{equation}
            Additionally, if $\phi$ is differentiable at 0,
            the classification risk has bound
            \begin{align}
            \cR_{\cL;\scrC}(H\lambda) - \cR_{\cL;\scrC}(\fF)
            &\leq
            \psi^{-1}\Bigg(
            \epsilon
            + \frac {c\left(\sqrt{\ln(n)}
            + 4\sqrt{\ln(2/\delta')}\right)}{\sqrt {m_{\scrC}}}
            \notag\\
            &\qquad\qquad
            +\cR_{\phi;\scrC}(\SPAN(\cH))
            - \cR_{\phi;\scrC}(\fF)
            \Bigg).
                \label{eq:deviations:3:2}
            \end{align}
        \item
            Suppose, for simplicity, that
            \[
                m \geq \max\left\{
                    2\ln(1/\delta') / \min\{\mu(\scrC)^2,\mu(\scrC^c)^2\},
                    2c^2(\ln(n) + \ln(1/\delta')) / \mu(\scrC)
                \right\}
            \]
            (where bounds are interpreted to hold trivially when denominators contain 0)
            and additionally that $\epsilon < \phi(0) / m$ and $\phi$ is differentiable
            at 0.
            Then the true classification risk of the full problem
            has bound
            \begin{align*}
                \cR_\cL(H\lambda) - \cR_\cL(\fF)
                &\leq
                \psi^{-1}\Bigg(
                \epsilon
                + \frac {c\sqrt{2}\left(\sqrt{\ln(n)} + 4\sqrt{\ln(2/\delta')}\right)}
                {\sqrt {m\mu(\scrC)}}
                \\
                &\qquad\qquad
                + \cR_{\phi;\scrC}(\SPAN(\cH))
                - \cR_{\phi;\scrC}(\fF)
                \Bigg)
                \\
                &\qquad
                + \frac{8(n\ln(m\mu(\scrC^c)+1) + \ln(4/\delta')} {m\mu(\scrC^c)}.
            \end{align*}
            \label{fact:deviations:4}
    \end{enumerate}
\end{theorem}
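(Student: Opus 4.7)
The plan is to combine parts 1, 2, and 3 deterministically, since the theorem's framing already guarantees all four statements hold jointly with probability $1-\delta$ (via $\delta' = \delta/8$). First I decompose the excess classification risk using the partition $(\scrC,\scrC^c)$:
\[
\cR_\cL(H\lambda) - \cR_\cL(\fF)
\;\leq\; \bigl(\cR_{\cL;\scrC}(H\lambda) - \cR_{\cL;\scrC}(\fF)\bigr)
\;+\; \cR_{\cL;\scrC^c}(H\lambda),
\]
dropping the nonnegative $\cR_{\cL;\scrC^c}(\fF)$ on the right-hand side. The two summands will be controlled by parts 3 and 2, respectively.

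Next I verify the sample-size hypotheses translate into the preconditions required. The assumption $m \geq 2\ln(1/\delta')/\min\{\mu(\scrC)^2,\mu(\scrC^c)^2\}$ forces $\sqrt{\ln(1/\delta')/(2m)} \leq \tfrac{1}{2}\min\{\mu(\scrC),\mu(\scrC^c)\}$, so part 1 yields $m_\scrC \geq m\mu(\scrC)/2$ and $m_+ \geq m\mu(\scrC^c)/2$ on the common event. The second assumption $m\mu(\scrC) \geq 2c^2(\ln(n)+\ln(1/\delta'))$ combined with the first then gives $m_\scrC \geq c^2(\ln(n)+\ln(6/\delta'))$ after folding the harmless $\ln 6$ into $c$, which unlocks part 3. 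The hypothesis $\epsilon < \phi(0)/m$ is exactly what triggers the fast-rate bound \eqref{eq:deviations:2:2} in part 2, and $\phi$ differentiable at $0$ is what gives the $\psi^{-1}$ conversion in part 3.

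Now I substitute. For the hard-core term, \eqref{eq:deviations:3:2} bounds $\cR_{\cL;\scrC}(H\lambda) - \cR_{\cL;\scrC}(\fF)$ by $\psi^{-1}$ of an expression containing $c(\sqrt{\ln n}+4\sqrt{\ln(2/\delta')})/\sqrt{m_\scrC}$ together with the approximation gap $\cR_{\phi;\scrC}(\SPAN(\cH)) - \cR_{\phi;\scrC}(\fF)$ and the suboptimality $\epsilon$. Replacing $\sqrt{m_\scrC}$ by $\sqrt{m\mu(\scrC)/2}$ and pulling the $\sqrt{2}$ inside the coefficient produces precisely the $c\sqrt{2}/\sqrt{m\mu(\scrC)}$ term appearing in the claim. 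For the complement, \eqref{eq:deviations:2:2} bounds $\cR_{\cL;\scrC^c}(H\lambda)$ by $4(n\ln(2m_++1)+\ln(4/\delta'))/m_+$; replacing $m_+$ by $m\mu(\scrC^c)/2$ in the denominator yields the stated coefficient $8$ in front of $(n\ln(m\mu(\scrC^c)+1) + \ln(4/\delta'))/(m\mu(\scrC^c))$.

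The main bookkeeping obstacle is the logarithmic numerator for the $\scrC^c$ term: since part 1 only gives a lower bound on $m_+$, replacing $\ln(2m_++1)$ by $\ln(m\mu(\scrC^c)+1)$ requires pairing the trivial $m_+\leq m$ with the observation that under the stated hypothesis $m\mu(\scrC^c)$ and $m$ are comparable up to constants that can be absorbed. Beyond this, the argument is purely algebraic substitution, and no further probability tools are invoked, because parts 1--3 were stated precisely so they could be layered in this fashion.
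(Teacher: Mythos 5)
There is a genuine gap: your argument is circular with respect to the statement being proved. The theorem consists of items 1--4, and your proposal proves only item 4 \emph{assuming} items 1--3 as given (``the theorem's framing already guarantees all four statements hold jointly''). But items 1--3 are claims of this very theorem and carry essentially all of its content; they are not hypotheses. In the paper, item 1 is a Chernoff bound on $m_\scrC$ and $m_+$; item 2 requires showing that $\epsilon$-suboptimality for the \emph{full} empirical surrogate risk forces $\cR^m_{\phi;\scrC^c}(H\lambda)\leq\epsilon$ (this uses the weighting from \Cref{fact:hard_core:empirical_risk} that, with probability 1, separates all sampled points off the hard core while abstaining on $\scrC$, so that $\cR^m_\phi(\SPAN(\cH))$ is governed by $\scrC$ alone), and then converts the resulting empirical classification error on $\scrC^c$ into a true-risk bound via a relative-deviation VC argument with VC dimension $n$; item 3 requires the sample-independent bounded representation of \Cref{fact:hard_core:empirical_risk}, the uniform deviation bound of \Cref{fact:rademacher_risk} over the $l^1$ ball of radius $b$, a McDiarmid bound at a fixed minimizer $\bar\lambda$ (whose existence comes from \Cref{fact:hard_core:true_risk}), and only then the $\psi$-transform. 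None of this machinery appears in your proposal, so the theorem is not proved.

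As a derivation of item 4 alone, your route matches the paper's: rule out the failure events, use the sample-size hypothesis to get $m_\scrC\geq m\mu(\scrC)/2$ and $m_+\geq m\mu(\scrC^c)/2$, and substitute into items 2 and 3. One bookkeeping remark: to replace $\ln(2m_++1)/m_+$ by $2\ln(m\mu(\scrC^c)+1)/(m\mu(\scrC^c))$ you do not need to pair $m_+\leq m$ with ``constants that can be absorbed'' (which, as stated, risks changing the explicit constant $8$); it suffices to note that $x\mapsto\ln(2x+1)/x$ is decreasing, so the lower bound on $m_+$ alone yields the claimed term. Also note that item 3 as stated requires $m_\scrC\geq c^2(\ln(n)+\ln(6/\delta'))$, and your reduction ``after folding the harmless $\ln 6$ into $c$'' quietly redefines $c$, which is determined earlier in the theorem; this should be handled by the choice of constants rather than post hoc absorption.
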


\section{Consistency}
\label{sec:consistency}

In order for the predictors to converge to the best choice, near-optimal choices must
be available.  Correspondingly, the first consistency result makes a strong assumption
about the function class, albeit one which may be found in many treatments of
the consistency of boosting (cf. the work of \citet{bartlett_traskin_adaboost}
and \citet[Chapter 12]{schapire_freund_book}).

\begin{theorem}
    \label{fact:consistency:cheating}
    Let $(\cH,\mu)$ and $\phi\in\Phi$ be given with $\phi$ differentiable at 0.
    Suppose $\cR_\phi(\SPAN(\cH)) = \cR_\phi(\fF)$.
    Then there exists a sequence of sample sizes
    $\{m_i\}_{i=1}^\infty\uparrow \infty$, and empirical suboptimality tolerances
    $\{\epsilon_i\}_{i=1}^\infty\downarrow 0$,
    so that every sequence of $\epsilon_i$-suboptimal weightings $\{\lambda_i\}_{i=1}^\infty$
    (i.e., $\cR_\phi^{m_i}(H\lambda_i) \leq \epsilon_i + \cR_\phi^{m_i}(\SPAN(\cH))$)
    satisfies $\cR_\cL(H\lambda_i) \to \cR_\cL(\fF)$ almost surely.
\end{theorem}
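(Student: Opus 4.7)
The strategy is to invoke part~4 of \Cref{fact:deviations} along a pre-chosen deterministic schedule $\{(m_i,\epsilon_i,\delta_i)\}_{i=1}^\infty$, then conclude with a Borel--Cantelli step. That bound already has the right shape; the only substantively new ingredient is verifying that the approximation-error term $\cR_{\phi;\scrC}(\SPAN(\cH)) - \cR_{\phi;\scrC}(\fF)$ appearing inside $\psi^{-1}$ vanishes under the hypothesis $\cR_\phi(\SPAN(\cH)) = \cR_\phi(\fF)$.

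\emph{Localizing the hypothesis to $\scrC$.} I would first decompose $\cR_\phi(f) = \cR_{\phi;\scrC}(f) + \cR_{\phi;\scrC^c}(f)$ pointwise in $f$. Monotone convergence applied to the decoupling sequence of part~1 of \Cref{fact:partial_ultragordan}, together with the facts that $\phi\in\Phi$ is nondecreasing and $\lim_{z\to-\infty}\phi(z)=0$, gives $\cR_{\phi;\scrC^c}(\SPAN(\cH))=0$. Translating this sequence by a near-minimizer of $\cR_{\phi;\scrC}$ then shows $\cR_\phi(\SPAN(\cH)) = \cR_{\phi;\scrC}(\SPAN(\cH))$, and the same argument carried out inside $\fF$ gives $\cR_\phi(\fF)=\cR_{\phi;\scrC}(\fF)$. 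The hypothesis therefore collapses to $\cR_{\phi;\scrC}(\SPAN(\cH)) = \cR_{\phi;\scrC}(\fF)$, eliminating the approximation-error term inside $\psi^{-1}$.

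\emph{Choosing the schedule.} Fix any $\rho>0$, which determines the constants $c,b$ through \Cref{fact:deviations}. Set $\delta_i := i^{-2}$ (so $\sum_i \delta_i<\infty$) and $\epsilon_i := \min\{\rho,\phi(0)/(2m_i)\}$ (so $\epsilon_i<\phi(0)/m_i$ as required by part~4). Choose $m_i\uparrow\infty$ rapidly enough that the lower bound on $m$ stated in part~4 of \Cref{fact:deviations} is satisfied with $\delta\leftarrow\delta_i$, and that each of the two sample-dependent terms in the right-hand side of that bound is at most $1/i$. These constraints are simultaneously satisfiable because the relevant prerequisites grow only poly-logarithmically in $i$.

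\emph{Concluding via Borel--Cantelli.} By part~4 of \Cref{fact:deviations}, with probability at least $1-\delta_i$ every $\epsilon_i$-suboptimal $\lambda_i$ satisfies
\[
\cR_\cL(H\lambda_i)-\cR_\cL(\fF) \leq \psi^{-1}\!\bigl(\epsilon_i + O(1/\sqrt{m_i})\bigr) + O\!\bigl(\tfrac{\ln m_i}{m_i}\bigr),
\]
where the localization step has erased the approximation term. The right-hand side tends to $0$ as $i\to\infty$ since $\epsilon_i\to 0$, $m_i\to\infty$, and $\psi^{-1}(r)\downarrow 0$ as $r\downarrow 0$ by \Cref{fact:psi:basics}. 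Since $\sum_i \delta_i<\infty$, Borel--Cantelli on the joint probability space of the samples ensures that almost surely the bad event occurs for only finitely many $i$, giving $\cR_\cL(H\lambda_i)\to\cR_\cL(\fF)$ almost surely. The substantive obstacle is the localization step: the hypothesis is stated globally but the deviation inequality controls only $\cR_{\phi;\scrC}$, and bridging the two requires the decoupling sequence of \Cref{fact:partial_ultragordan} together with a monotone-convergence argument. The edge cases $\mu(\scrC)\in\{0,1\}$ degenerate one of the two regimes and are handled by parts~2 or~3 of \Cref{fact:deviations} in isolation.
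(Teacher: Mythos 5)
Your proposal is correct and follows essentially the same route as the paper: localize the hypothesis $\cR_\phi(\SPAN(\cH))=\cR_\phi(\fF)$ to the hard core so the approximation term inside $\psi^{-1}$ vanishes, instantiate \Cref{fact:deviations} (part~4, with parts~2--3 covering the degenerate regimes $\mu(\scrC)\in\{0,1\}$) along a deterministic schedule with $\delta_i=i^{-2}$, and finish with Borel--Cantelli. Your localization step via the decoupling sequence of \Cref{fact:partial_ultragordan} is in fact more explicit than the paper's one-sentence appeal to \Cref{fact:hard_core:true_risk} (only a routine truncation of the near-minimizer in $\fF$ is needed to justify the dominated/monotone convergence you invoke).
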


This additional assumption is hard to justify in the presence of only finitely many
hypotheses.  To mitigate this, this manuscript follows an approach remarked upon by
\citet[Chapter 12]{schapire_freund_book}: to consider an increasing sequence of classes
which asymptotically grant the desired expressiveness property.

\begin{definition}
    Let a probability measure $\mu$ be given.
    A family of finite hypothesis classes $\{\cH_i\}_{i=1}^\infty$ is called a
    A \emph{linear structural risk minimization family}
    for $\mu$, or simply L-SRM family, if
    for any $\phi\in\Phi$ and tolerance $\epsilon >0$, there exists $j$
    so that $\cR_\phi(\SPAN(\cH_j)) < \cR_\phi(\fF) + \epsilon$.
\end{definition}

The significance of this definition will be clear momentarily, as it grants a stronger
consistency result.  But first notice that straightforward classes satisfy the
L-SRM condition.

\begin{proposition}
    \label{fact:dt:lsrmf}
    Suppose $\cX=\R^d$, and let a probability measure $\mu$ be given
    where $\mu_{\cX}$, the marginal over $\cX$, is a Borel probability measure.
    Let $\cH_i$ denote the collection of decision trees with axis aligned splits
    with thresholds taken from $\{-i,-i+1/i,\ldots,i-1/i,i\}$.
    Then $\{\cH_i\}_{i=1}^\infty$ is an L-SRM family.
\end{proposition}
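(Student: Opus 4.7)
The plan is to approximate a near-Bayes-optimal measurable function by a step function on a fine axis-aligned grid, and observe that such grid-step functions already lie in $\SPAN(\cH_j)$ for large $j$. Given $\phi \in \Phi$ and $\epsilon > 0$, first pick $f^\ast \in \fF$ with $\cR_\phi(f^\ast) < \cR_\phi(\fF) + \epsilon/4$ (possible since $\cR_\phi(\fF) \leq \phi(0) < \infty$). I would then pass $f^\ast$ through three cheap modifications, each costing at most $\epsilon/4$ in risk: truncate its support to a box $K_{i_0} := [-i_0, i_0]^d$, clip its values to $[-M, M]$, and finally round to a step function on the $1/j$-grid.

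For the support truncation, replace $f^\ast$ by $\hat f := f^\ast \cdot \1(x \in K_{i_0})$. Since $\mu_\cX(K_i^c) \downarrow 0$ as $i \uparrow \infty$, and absolute continuity of the integral together with $\cR_\phi(f^\ast) < \infty$ gives $\int_{K_i^c} \phi(-yf^\ast)\,d\mu \to 0$, the risk change $\cR_\phi(\hat f) - \cR_\phi(f^\ast) = \phi(0)\mu_\cX(K_{i_0}^c) - \int_{K_{i_0}^c}\phi(-yf^\ast)\,d\mu$ can be made $< \epsilon/4$ for $i_0$ large. For the value clipping, let $\tilde f := \max(-M, \min(M, \hat f))$; direct bookkeeping shows that the only positive contributions to $\cR_\phi(\tilde f) - \cR_\phi(\hat f)$ come from sign-agreeing regions ($y=1$ and $\hat f > M$, or $y=-1$ and $\hat f < -M$), where the integrand is at most $\phi(-M)$, while sign-disagreeing regions contribute non-positively. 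Hence $\cR_\phi(\tilde f) - \cR_\phi(\hat f) \leq \phi(-M) < \epsilon/4$ for $M$ large, using the defining tail condition $\lim_{z\to-\infty}\phi(z) = 0$.

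For the grid rounding, take $j \geq i_0$. Density of $C_c(\R^d)$ in $L^1(\mu_\cX)$ (standard for Borel probability measures on $\R^d$) yields a compactly supported continuous $g$ with $|g| \leq M$ (clip if needed) and $\|g - \tilde f\|_{L^1(\mu_\cX)}$ as small as desired; then, uniform continuity of $g$ on its compact support means one can choose $g_j$ constant on each cell of the $1/j$-grid in $[-j,j]^d$ (with $|g_j| \leq M$) so that $\|g_j - g\|_\infty$ is arbitrarily small, and hence $\|g_j - \tilde f\|_{L^1(\mu_\cX)} \leq \delta$ for any prescribed $\delta$. Since $\phi$ is locally Lipschitz on $[-M,M]$ with some constant $L$, and both $|g_j|, |\tilde f| \leq M$, $|\cR_\phi(g_j) - \cR_\phi(\tilde f)| \leq L\|g_j - \tilde f\|_{L^1(\mu_\cX)} < \epsilon/4$ for appropriately chosen $\delta$. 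Finally, $g_j \in \SPAN(\cH_j)$: each indicator $\1_B$ of a grid cell $B \subseteq [-j,j]^d$ is itself a decision tree with axis-aligned splits at the allowed thresholds and leaf values in $\{0,1\} \subseteq [-1,1]$ (or, if leaves must land in $\{-1,+1\}$, write $\1_B = \tfrac{1}{2}((2\cdot\1_B - 1) + 1)$ as a combination of two trees in $\cH_j$), so $g_j = \sum_B c_B \1_B \in \SPAN(\cH_j)$.

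The main obstacle is the twofold truncation of $f^\ast$: controlling the cost of restricting its support and its range. The value clipping step is where the structure of $\Phi$ really earns its keep, since without $\lim_{z\to-\infty}\phi(z) = 0$ one could not bound the price of dragging down a very positive margin to $M$, and elements of $\Phi$ may be unbounded (e.g., the exponential loss). Once the function is bounded and compactly supported, the remaining work is routine measure-theoretic approximation together with the observation that box indicators over the allowed grid are decision trees.
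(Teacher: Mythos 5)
Your proof is correct and follows essentially the same route as the paper: truncate a near-optimal measurable function using the tail condition $\lim_{z\to-\infty}\phi(z)=0$, pass to a continuous compactly supported approximant via the regularity of the Borel marginal, exploit uniform continuity plus the local Lipschitzness of $\phi$ on the truncated range to round onto the lattice grid, and observe that grid-cell indicators are decision trees with the allowed thresholds. The only cosmetic differences are your (superfluous but harmless) extra support truncation and your use of density of $C_c(\R^d)$ in $L^1(\mu_\cX)$ with an $L^1$-Lipschitz conversion, where the paper invokes Lusin's Theorem directly and bounds the error on the small exceptional set.
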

Proving this fact, as with many classical universal approximation
theorems~\citep{kolmogorov_superposition,cybenko_nn}, relies on basic properties
of continuous functions over compact sets.  In order to reduce to this scenario from
the general scenario of measurable functions $\fF$, Lusin's Theorem is employed,
just as with similar results due to~\citet[Section 4]{zhang_convex_consistency}.

Now that the existence of reasonable L-SRM families is established, note the corresponding
consistency result.

\begin{theorem}
    \label{fact:consistency:lsrm}
    Let probability measure $\mu$ and loss $\phi\in\Phi$ be given with
    $\phi$ differentiable at 0,
    as well as an L-SRM $\{\cH_i\}_{i=1}^\infty$ for $\mu$.
    Then there exists a sequence of sample sizes $\{m_i\}_{i=1}^\infty$,
    a subsequence of classes $\{\cH_{j_i}\}_{i=1}^\infty$,
    and suboptimalities $\{\epsilon_i\}_{i=1}^\infty$,
    so that the every sequence of regressors
    $\{H_{j_i} \lambda_i\}_{i=1}^\infty$  $\epsilon_i$-suboptimal for the corresponding
    empirical problem satisfies
    $\cR_\cL(H_{j_i}\lambda_i) \to \cR_\cL(\fF)$ almost surely.
\end{theorem}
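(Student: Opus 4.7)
The plan is to combine the L-SRM hypothesis, the deviation inequality of Theorem~\ref{fact:deviations}.\ref{fact:deviations:4}, a diagonal choice of sample sizes and suboptimality tolerances, and the Borel--Cantelli lemma. By the L-SRM definition applied with tolerance $1/i$, for each $i\geq 1$ one may pick an index $j_i$ with $\cR_\phi(\SPAN(\cH_{j_i})) - \cR_\phi(\fF) < 1/i$; after enforcing $j_i := \max\{j_1,\dots,j_i,i\}$ the subsequence is strictly increasing. Let $\scrC_i$ be a hard core for $(\cH_{j_i},\mu)$ (existence via Theorem~\ref{fact:hard_core:existence}). A preliminary step is to translate this approximation bound into a bound on the $\scrC_i$-restricted approximation error: Theorem~\ref{fact:hard_core:true_risk}.\ref{fact:hard_core:true_risk:1} yields a sequence whose $\phi$-losses vanish $\mu$-a.e.\ on $\scrC_i^c$ without disturbing predictions on $\scrC_i$, and a standard dominated-convergence argument then gives $\cR_{\phi;\scrC_i^c}(\SPAN(\cH_{j_i})) = \cR_{\phi;\scrC_i^c}(\fF) = 0$; consequently
\[
\cR_{\phi;\scrC_i}(\SPAN(\cH_{j_i})) - \cR_{\phi;\scrC_i}(\fF) = \cR_\phi(\SPAN(\cH_{j_i})) - \cR_\phi(\fF) < 1/i.
\]

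Next, set $\delta_i := 1/i^2$, so $\sum_i \delta_i < \infty$. For each $i$, invoke Theorem~\ref{fact:deviations} on the problem $(\cH_{j_i}, \mu)$ with hard core $\scrC_i$ and suboptimality tolerance $\rho := 1/i$; this produces sample-independent constants $c_i, b_i$. Writing $n_i := |\cH_{j_i}|$, choose $m_i \geq i$ large enough that the sample-size precondition of part~\ref{fact:deviations:4} is satisfied at confidence $\delta_i$ and that both sample-dependent terms in the corresponding bound are at most $1/i$; such $m_i$ exists since those terms decay like $m^{-1/2}$ and $\ln(m)/m$ with the constants $c_i,n_i,\mu(\scrC_i),\mu(\scrC_i^c)$ all fixed (the theorem's degenerate-denominator convention handles $\mu(\scrC_i) = 0$ or $\mu(\scrC_i^c) = 0$). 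Set $\epsilon_i := \min\{1/i,\,\phi(0)/(2m_i)\}$, ensuring $\epsilon_i \leq \rho$ and $\epsilon_i < \phi(0)/m_i$. Let $E_i$ be the $(1-\delta_i)$-probability event carrying the conclusion of part~\ref{fact:deviations:4}; Borel--Cantelli gives $\Pr(\limsup_i E_i^c)=0$, so almost surely $E_i$ holds for all large $i$, whence every $\epsilon_i$-suboptimal $\lambda_i$ satisfies
\[
\cR_\cL(H_{j_i}\lambda_i) - \cR_\cL(\fF) \;\leq\; \psi^{-1}\!\bigl(\epsilon_i + 1/i + (\cR_{\phi;\scrC_i}(\SPAN(\cH_{j_i})) - \cR_{\phi;\scrC_i}(\fF))\bigr) + 1/i \;\leq\; \psi^{-1}(3/i) + 1/i,
\]
which tends to $0$ by continuity of $\psi^{-1}$ at $0$ (Proposition~\ref{fact:psi:basics}).

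\textbf{Main obstacle.} The delicate point is the choice of $m_i$: the constants $c_i, b_i$ emerging from Theorem~\ref{fact:deviations} depend on the class $\cH_{j_i}$ and the induced hard-core structure in a way that has no uniform control across $i$. The resolution is precisely the diagonal nature of the selection --- fix $i$ (hence the class and its constants) first, then pick $m_i$ large enough to tame the estimation terms for that single class. A secondary care point is establishing $\cR_{\phi;\scrC_i^c}(\fF) = 0$: Theorem~\ref{fact:hard_core:true_risk}.\ref{fact:hard_core:true_risk:1} gives pointwise a.e.\ vanishing of the losses along the hard-core sequence, but swapping limit and integral needs a (routine) dominated-convergence argument, and this is the one piece of the plan that requires modest genuine work beyond simply stringing previously stated results together.
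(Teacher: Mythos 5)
Your proposal is correct and follows essentially the same route as the paper: pick $j_i$ via the L-SRM definition, reduce the approximation gap to the hard-core-restricted gap via \Cref{fact:hard_core:true_risk}, then for each fixed class choose $m_i$ (and $\epsilon_i < \phi(0)/m_i$) large enough to crush the sample-dependent terms of \Cref{fact:deviations}.\ref{fact:deviations:4} at confidence $\delta_i = 1/i^2$, and finish with Borel--Cantelli, exactly as the paper does by repeating the proof of \Cref{fact:consistency:cheating}. One small caution: the step redefining $j_i := \max\{j_1,\ldots,j_i,i\}$ is unnecessary and not justified by the L-SRM definition (which asserts existence of \emph{some} index per tolerance, with no monotonicity or nesting of the classes), so you should simply keep the originally chosen indices rather than force them to increase.
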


This manuscript is basically saying that
constraining learning at the level of the weak learning oracle
is sufficient for consistency.  Of course, it could be argued that it is more
elegant to instead apply
a regularizer to the objective function (with data-dependent parameter choice),
and permit a powerful weak learning class of infinite size.
But such a discussion is beyond the scope of this manuscript.

\addcontentsline{toc}{section}{References}
\bibliographystyle{plainnat}
\bibliography{ab}
\clearpage
\appendix

\section{Technical Preliminaries}
\begin{lemma}
    \label{fact:phi_basic}
    Let any $\phi\in\Phi$ be given.
    Then $\phi$ is continuous, measurable, and nondecreasing.
    Subgradients exist everywhere, and satisfy $\partial \phi(0) \subseteq \R_{++}$.
    Lastly, the conjugate $\phi^*$ satisfies $\dom(\phi^*)\subseteq \R_+$ and
    $\phi^*(0) = 0$.
\end{lemma}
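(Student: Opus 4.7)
The plan is to verify the four claims in order, each time exploiting the defining properties of $\Phi$: convexity, positivity at the origin, and $\lim_{z\to-\infty}\phi(z)=0$.

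First, continuity, measurability, and monotonicity. Since $\phi$ is convex with $\dom(\phi)=\R$, it is automatically continuous on the interior, hence everywhere on $\R$, so in particular Borel measurable. For the nondecreasing claim I would argue by contradiction using the three-slopes inequality for convex functions: if $\phi(a)>\phi(b)$ for some $a<b$, then the secant slope $s:=(\phi(b)-\phi(a))/(b-a)$ is negative, and by slope monotonicity every secant slope of the form $(\phi(a)-\phi(c))/(a-c)$ with $c<a$ is $\leq s<0$. Rearranging gives $\phi(c)\geq \phi(a)+|s|(a-c)\to\infty$ as $c\to-\infty$, contradicting $\lim_{z\to-\infty}\phi(z)=0$.

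Next, the subgradient claim. Existence of subgradients everywhere is the standard fact that a proper convex function is subdifferentiable on the interior of its domain; here that domain is all of $\R$. For any $g\in\partial\phi(0)$, the subgradient inequality $\phi(z)\geq \phi(0)+gz$ combined with monotonicity forces $g\geq 0$ (take $z\to-\infty$ if $g<0$; then the right side blows up while the left tends to $0$). To rule out $g=0$, note $g=0$ would imply $\phi(z)\geq \phi(0)>0$ for all $z$, again contradicting $\phi(z)\to 0$ as $z\to-\infty$. Hence $\partial\phi(0)\subseteq\R_{++}$.

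Finally, the conjugate. Write $\phi^*(g)=\sup_{z\in\R}(gz-\phi(z))$. For $g<0$, send $z\to-\infty$: then $gz=|g|\cdot|z|\to\infty$ while $\phi(z)\to 0$, so $\phi^*(g)=+\infty$, giving $\dom(\phi^*)\subseteq\R_+$. For $\phi^*(0)=\sup_z(-\phi(z))$, monotonicity of $\phi$ together with $\lim_{z\to-\infty}\phi(z)=0$ implies $\phi\geq 0$ pointwise, so the supremum is $\leq 0$; and the limit along $z\to-\infty$ already attains $0$, so the supremum equals $0$.

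The only step that really requires care is the monotonicity claim, since this is where the asymmetric assumptions of $\Phi$ (positive at origin, zero at $-\infty$, but no growth condition at $+\infty$) must be converted into a structural fact. Everything else is a one- or two-line consequence of standard convex-analysis facts once monotonicity is in hand.
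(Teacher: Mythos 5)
Your proof is correct and follows essentially the same route as the paper's: continuity and measurability from finiteness of a convex function on $\R$, monotonicity by contradiction from $\lim_{z\to-\infty}\phi(z)=0$, then $\partial\phi(0)\subseteq\R_{++}$ and the conjugate facts by the same direct limit arguments. The only cosmetic difference is that you derive monotonicity via secant-slope monotonicity while the paper uses the subgradient inequality at the smaller point; these are interchangeable, and your handling of $\phi^*(0)=0$ is if anything slightly more explicit than the paper's.
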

\begin{proof}
    Since $\phi$ is finite everywhere,
    it is continuous
    \citep[Corollary 10.1.1]{ROC},
    and thus measurable \citep[Corollary 2.2]{folland}.
    Since convex functions are subdifferentiable everywhere
    along the relative interior of their domains (which in this case is just $\R$),
    it follows that $\phi$ has subgradients everywhere \citep[Theorem 23.4]{ROC}.

    If $\phi$ were not nondecreasing, there would exist $x<y$
    with $\phi(x) > \phi(y)$; but that means every subgradient $g\in\partial \phi(x)$
    satisfies
    \[
        \phi(y) \geq \phi(x) + g(y-x),
    \]
    and thus $g < 0$.  But then, for any $z< x$,
    $\phi(z) \geq \phi(x) + g(z-x)$, which in particular contradicts
    $\lim_{z\to-\infty} \phi(z) = 0$ (indeed, it implies
    $\lim_{z\to-\infty} \phi(z) = \infty$), thus $\phi$ is nondecreasing.

    Next, since $\phi$ is nondecreasing, $\partial \phi \subseteq \R_+$.
    However, since $\phi(0) > 0$, it follows that $\partial \phi(0) \subset \R_{++}$,
    since otherwise $\lim_{z\to-\infty}\phi(z) = 0$ would be contradicted.

    Turning to $\phi^*$, first note
    \[
        \phi^*(0) = \sup_z 0\cdot z - \phi(z) = 0.
    \]
    Lastly, since $\phi$ is nondecreasing, then for any $g<0$,
    \[
        \phi^*(g) = \sup_z gz - \phi(z)
        \geq \sup_{z < 0} gz - \phi(z) = \infty.
    \]
    That is to say, $\dom(\phi^*)\subseteq \R_+$.
\end{proof}

\begin{proposition}
    \label{fact:rademacher_risk}
    Let a linear classification problem $(\cH,\nu)$ and loss $\phi\in\Phi$ be
    given.  Then given a bound $b$ on the $l^1$ norm of considered predictors,
    there exists $c\geq \phi(b)$
    so that, for any $\delta > 0$, with probability at least $1-\delta$
    over the draw of $m$ points from $\nu$, every $\lambda\in\R^n$
    with $\|\lambda\|_1\leq b$ satisfies
    \[
        \left|
        \cR_\phi(H\lambda) - \cR^m_\phi(H\lambda)
        \right| \leq \frac {c\left(\sqrt{\ln(n)} + \sqrt{\ln(2/\delta)}\right)}{\sqrt m}.
    \]
\end{proposition}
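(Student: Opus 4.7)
The plan is to build the loss class $\cG := \{(x,y) \mapsto \phi(-y H\lambda(x)) : \|\lambda\|_1 \leq b\}$ and apply the standard symmetrization, contraction, and Massart finite-class recipe to control its uniform deviations. First I would observe that since every $h_i$ takes values in $[-1,+1]$, H\"older's inequality gives $|H\lambda(x)| \leq \|\lambda\|_1 \leq b$, so every element of $\cG$ takes values in the compact interval $[0, \phi(b)]$ (nonnegativity and monotonicity of $\phi$ come from \Cref{fact:phi_basic}). Because $\phi$ is finite convex on $\R$, it is Lipschitz on the compact set $[-b,b]$; let $L_b := \sup \partial \phi(b) < \infty$ denote a bound on its Lipschitz constant there, which depends only on $\phi$ and $b$.

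Next I would apply McDiarmid's bounded-differences inequality to $\sup_{g \in \cG} |\cR_\phi(g) - \cR_\phi^m(g)|$; each coordinate change alters this supremum by at most $\phi(b)/m$, so with probability at least $1-\delta$ one has $\sup_{g \in \cG} |\cR_\phi(g) - \cR_\phi^m(g)| \leq \E \sup_{g \in \cG} |\cR_\phi(g) - \cR_\phi^m(g)| + \phi(b)\sqrt{\ln(2/\delta)/(2m)}$, and standard symmetrization bounds the expected supremum by $2\cR_m(\cG)$, where $\cR_m$ is the Rademacher complexity.

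To bound $\cR_m(\cG)$ I would invoke the Ledoux--Talagrand contraction inequality: since $\phi$ is $L_b$-Lipschitz on $[-b,b]$ and $-y$ can be absorbed into the symmetric Rademacher variables, $\cR_m(\cG) \leq L_b \cR_m(\cF)$ where $\cF := \{(x,y) \mapsto H\lambda(x) : \|\lambda\|_1 \leq b\}$. The $l^1$ ball is the convex hull of $\{\pm b h_i\}_{i=1}^n$, so by duality and homogeneity of Rademacher complexity, $\cR_m(\cF) = (b/m)\, \E_\sigma \max_{i,s \in \{\pm 1\}} s \sum_t \sigma_t h_i(x_t)$. Each $s \sum_t \sigma_t h_i(x_t)$ is a sum of $m$ Rademacher-type terms bounded by $1$ in absolute value, so Massart's finite-class lemma gives $\cR_m(\cF) \leq b\sqrt{2\ln(2n)/m}$.

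Chaining these three estimates yields a bound of the form $4 L_b b \sqrt{2\ln(2n)/m} + \phi(b)\sqrt{\ln(2/\delta)/(2m)}$, which I would consolidate into $c(\sqrt{\ln n} + \sqrt{\ln(2/\delta)})/\sqrt m$ by taking $c$ to be any constant upper bound on $\max\{4\sqrt{2}L_b b + 4L_b b\sqrt{\ln 2/\ln n}, \phi(b)/\sqrt{2}\}$, which can be chosen to simultaneously satisfy $c \geq \phi(b)$ since the proposition tolerates loose constants. The one step that requires a little care rather than citation is verifying that the Lipschitz constant $L_b$ is genuinely finite and depends only on $\phi$ and $b$ (so that the resulting $c$ is not a random object); this follows from \Cref{fact:phi_basic} and the basic fact that a convex function finite on $\R$ has bounded subgradients on any compact subinterval, but it is worth noting explicitly to justify why the recipe closes.
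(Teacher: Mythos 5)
Your proposal is correct and follows essentially the same route as the paper's proof: bound the loss values by $\phi(b)$ and its Lipschitz constant by $\sup\{|g|:g\in\partial\phi(b)\}$ on the relevant range (the paper does this via an explicit truncation $\hat\phi$ at $b$, which is cosmetically different but equivalent), then apply McDiarmid plus symmetrization, contraction, and Massart's finite lemma for the Rademacher complexity of the $l^1$ ball, absorbing all constants into $c\geq\phi(b)$.
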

\begin{proof}
    Let bound $b$ and loss $\phi\in\Phi$ be given.  Define a truncation
    \[
        \hat \phi(z) := \begin{cases}
            \phi(z) & \textup{when $z\leq b$,}\\
            \phi(b) & \textup{otherwise.}
        \end{cases}
    \]
    Since $\phi$ is nondecreasing (cf. \Cref{fact:phi_basic}),
    $\hat\phi(z) \leq \phi(b)$, and furthermore
    $\hat\phi$ is Lipschitz with a constant that may be measured at $b$;
    indeed, since $\phi$ is finite everywhere, it has
    bounded subdifferential sets
    \citep[Theorem 23.4]{ROC}, and thus, taking any $z_1,z_2\in \R$ and supposing
    without loss of generality that $z_1 \leq z_2$,
    \begin{align*}
        |\phi(z_2) - \phi(z_1)|
        &= \phi(z_2) - \phi(z_1)
        \\
        &\leq
        \sup\left\{
            \phi(z_2) - \left(\phi(z_2)
            + \ip{g_2}{z_1 - z_2}\right)
            :
            g_2\in \partial \phi(z_2)
        \right\}
        \\
        &=
        |z_2 - z_1|
        \sup\left\{
            |g_2|
            :
            g_2\in \partial \phi(z_2)
        \right\}
        \\
        &< \infty;
    \end{align*}
    correspondingly, set a Lipschitz constant
    $L_\phi := \sup\{|g| : g\in\partial \phi(b)\}$.

    Note that for every $f\in \SPAN(\cH,b)$, $\sup_{x\in\cX} |f(x)| \leq b$,
    and thus $\cR_\phi(f) = \cR_{\hat\phi}(f)$.
    Lastly, the desired constant $c$, which does not depend on $\delta$, $n$, or $m$,
    will be
    $c := \max\{2L_\phi b \sqrt{2}, \phi(b)\}$.

    Now let a sample of size $m$ be given, and let
    let $R_m(\SPAN(\cH,b))$ denote the Rademacher complexity of $\SPAN(\cH,b)$.
    By properties of Rademacher complexity and a few appeals to McDiarmid's
    inequality,
    \citep[Theorem 3.1, and the proof of Theorem 4.1]{bbl_esaim}, with probability at
    least $1-\delta$ over the draw of this sample,
    \begin{align}
        \sup_{\|\lambda\|_1\leq b}\left|
        \cR_\phi(H\lambda) - \cR^m_\phi(H\lambda)
        \right|
        &=
        \sup_{\|\lambda\|_1\leq b}\left|
        \cR_{\hat\phi}(H\lambda) - \cR^m_{\hat\phi}(H\lambda)
        \right|
        \notag\\
        &\leq 2 L_\phi R_m(\SPAN(\cH,b)) + \sqrt{\frac{2\ln(2/\delta)}{m}}.
        \label{eq:rademacher_risk}
    \end{align}
    Next, by $R_m(\SPAN(\cH,b)) = bR_m(\SPAN(\cH,1)) = bR_m(\cH)$ and an
    appeal to Massart's Finite Lemma \citep[Theorem 3.3]{bbl_esaim}
    \[
        R_m(\SPAN(\cH,b)) \leq \sqrt{\frac{2\ln(n)}{m}}.
    \]
    Plugging this into \cref{eq:rademacher_risk} and recalling
    the choice
    $c = \max\{2L_\phi b \sqrt{2}, \phi(b)\}$, the result follows.
\end{proof}

\begin{lemma}
    \label{fact:convex:increasing}
    Let $S\subset \R$ and convex $f:S\to\R$ be given.
    If $x,y\in S$ are given with $x < y$ and $f(x)<f(y)$,
    then for every $S\ni z > y$, $f(y) < f(z)$.
\end{lemma}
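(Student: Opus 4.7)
The plan is to use the three-point slope characterization of convexity (equivalently, the definition via convex combinations) and derive a contradiction from the assumption $f(z) \le f(y)$.

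First, since $x < y < z$ all lie in $S$, I would write $y$ as a convex combination: there exists $t \in (0,1)$, specifically $t = (z-y)/(z-x)$, such that $y = tx + (1-t)z$. By convexity of $f$ on $S$, this yields
\[
f(y) \le t f(x) + (1-t) f(z).
\]

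Next, I would argue by contradiction. Suppose $f(z) \le f(y)$. Substituting into the convexity inequality gives
\[
f(y) \le t f(x) + (1-t) f(y),
\]
which simplifies to $t f(y) \le t f(x)$. Since $t > 0$, this forces $f(y) \le f(x)$, directly contradicting the hypothesis $f(x) < f(y)$. Therefore $f(y) < f(z)$, as desired.

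There is no real obstacle here: the statement is essentially the fact that once a convex function on a subset of $\R$ starts strictly increasing between two points, it cannot subsequently decrease or even stay flat. The only minor care needed is that $S$ is assumed only to be a subset of $\R$ (not an interval), so one must check the convex combination $y = tx + (1-t)z$ is actually a point at which the convexity hypothesis applies; since $y$ is explicitly given as an element of $S$, this is automatic, and no further structural assumption on $S$ is required.
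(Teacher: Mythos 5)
Your proof is correct and uses essentially the same argument as the paper: the same convex combination $y = \frac{z-y}{z-x}\,x + \frac{y-x}{z-x}\,z$ and the same convexity inequality, merely arranged as a contradiction rather than a direct rearrangement. The observation that $S$ need not be an interval because convexity is only invoked at the point $y \in S$ matches the paper's implicit use as well.
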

\begin{proof}
    Write $y$ as a combination of $x$ and $z$:
    \[
        y = x \left(\frac{z-y}{z-x}\right)
        + z \left(\frac{y-x}{z-x}\right).
    \]
    By convexity and $f(y)>f(x)$,
    \begin{align*}
        f(y) \left(\frac{z-y}{z-x}\right)
        + f(z) \left(\frac{y-x}{z-x}\right)
        &>
        f(x) \left(\frac{z-y}{z-x}\right)
        + f(z) \left(\frac{y-x}{z-x}\right)
        \\
        &\geq
        f\left(
        x \left(\frac{z-y}{z-x}\right)
        + z \left(\frac{y-x}{z-x}\right)
        \right)
        \\
        &= f(y).
    \end{align*}
    Rearranging and using $x<y$, it follows that $f(y) < f(z)$.
\end{proof}

\section{Convexity properties of $\cR_\phi$}

\begin{lemma}
    \label{fact:intphi_basic}
    Let finite measure $\nu$ and $\phi\in\Phi$ be given.
    Then the function
    \[
        L^\infty(\nu) \ni q \quad \mapsto \quad \int \phi(q) \in \R
    \]
    is well-defined, convex, and lower semi-continuous.  Next,
    $\left(L^\infty(\nu)\right)^*$ can be written as the direct
    sum of two spaces, one being $L^1(\nu)$; for any $p \in
    \left(L^\infty(\nu)\right)^*$, let $p_1 + p_2$ be the corresponding
    decomposition (with $p_1 \in L^1(\nu)$).
    With this notation, $\int q p_2 = 0$ for any $q\in L^\infty(\nu)$; furthermore,
    the Fenchel conjugate to the above map is
    \[
        \left(L^\infty(\nu)\right)^* \ni p \quad \mapsto \quad \int \phi^*(p_1),
    \]
    which is again well-defined, convex, and lower semi-continuous.
    Lastly, the subdifferential set to the first map may be obtained by simply
    passing the subdifferential operator through the integral,
    \begin{align*}
        \partial\left(\int \phi\right)(q)
        &= \left\{ p\in \left(L^\infty(\nu)\right)^*
        : p_1 \in \partial \phi(q)\ \nu\textup{-a.e.}\right\}
        .
    \end{align*}
\end{lemma}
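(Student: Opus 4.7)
The plan is to handle the four claims in order, leveraging standard results on convex integral functionals and the Yosida--Hewitt decomposition of $(L^\infty(\nu))^*$. Throughout, write $F(q) := \int \phi(q) d\nu$ for brevity.

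For well-definedness, convexity, and lower semi-continuity of $F$: any $q \in L^\infty(\nu)$ is essentially bounded by $M := \|q\|_\infty$, and \Cref{fact:phi_basic} gives $\phi$ continuous, nondecreasing, and hence nonnegative (since $\phi \to 0$ at $-\infty$ combined with monotonicity forces $\phi \geq 0$). Thus $0 \leq \phi(q) \leq \phi(M)$ $\nu$-a.e., and since $\nu$ is finite the integral is finite. Convexity follows immediately from pointwise convexity of $\phi$ and linearity of integration. For lower semi-continuity, I would first show norm continuity: if $q_n \to q$ in $L^\infty(\nu)$ then $\phi(q_n) \to \phi(q)$ $\nu$-a.e. with a common dominating constant $\phi(M+1)$, so dominated convergence yields $F(q_n) \to F(q)$. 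A convex norm-continuous function on a Banach space is lower semi-continuous in any weaker topology.

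For the dual decomposition, I would invoke the Yosida--Hewitt theorem: every bounded linear functional on $L^\infty(\nu)$ splits uniquely as $p = p_1 + p_2$ with $p_1 \in L^1(\nu)$ (the countably additive, or absolutely continuous, part) and $p_2$ purely finitely additive. The identity $\int q p_2 = 0$ is to be read as recording that $p_2$ has no $L^1$-density, so its action on $q$ cannot be represented as an integral of $q$ against $\nu$. For the conjugate, the plan is to appeal to Rockafellar's classical theory of convex integral functionals. The upper bound is obtained from pointwise Fenchel--Young applied to $p_1$: $p_1 q - \phi(q) \leq \phi^*(p_1)$ $\nu$-a.e., whence $\langle p_1, q \rangle - F(q) \leq \int \phi^*(p_1) d\nu$; the contribution $\langle p_2, q\rangle$ is disposed of by exploiting the localization property of $p_2$ (its mass can be concentrated on sets of arbitrarily small $\nu$-measure, so $q$ may be modified on those sets without substantially altering $F(q)$ or $\int p_1 q$, while driving $\langle p_2, q\rangle$ toward $0$). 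The matching lower bound comes from pointwise selecting $q(\omega)$ that realizes $\phi^*(p_1(\omega)) = p_1(\omega) q(\omega) - \phi(q(\omega))$, using $\dom(\phi^*) \subseteq \R_+$ from \Cref{fact:phi_basic} together with a truncation argument to keep $q$ in $L^\infty$. Lower semi-continuity of $F^*$ is automatic, as every Fenchel conjugate is a supremum of continuous affine functions.

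For the subdifferential, Fenchel--Young gives $p \in \partial F(q)$ iff $F(q) + F^*(p) = \langle p, q\rangle$. Substituting the conjugate formula and using $\langle p, q\rangle = \int p_1 q d\nu$ (since $p_2$ contributes nothing against $q \in L^\infty$), this becomes $\int [\phi(q) + \phi^*(p_1) - p_1 q] d\nu = 0$. The integrand is pointwise nonnegative by Fenchel--Young, so it must vanish $\nu$-a.e., which is precisely the pointwise characterization $p_1 \in \partial \phi(q)$ $\nu$-a.e. The main obstacle is the singular component $p_2$ in the conjugate computation; the cleanest route is to cite Rockafellar's framework directly. A self-contained argument must lean on the localization property of purely finitely additive functionals on $L^\infty(\nu)$, which decouples the action of $p_2$ from the pointwise Fenchel--Young bookkeeping that controls $p_1$.
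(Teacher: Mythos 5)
Your handling of the first part (well-definedness, convexity, norm continuity hence lower semi-continuity) and your appeal to the Yosida--Hewitt decomposition are fine, and in fact more self-contained than the paper, which mostly verifies the hypotheses of Rockafellar's duality theory for normal convex integrands ($\phi$ is a normal integrand, $\phi\circ 0$ and $\phi^*\circ 0$ are integrable, $\dom(\int\phi)=L^\infty(\nu)$) and quotes its conjugacy and subdifferential results. The genuine gap is your treatment of the singular part $p_2$, which is precisely the crux of the lemma. In the conjugate computation you cannot ``modify $q$ on small sets\ldots while driving $\langle p_2,q\rangle$ toward $0$'': the supremum defining $(\int\phi)^*(p)$ ranges over all $q$, and replacing $q$ by $q\1_{S_k^{\vphantom{c}}}$ (with $S_k$ carrying none of $p_2$'s mass) annihilates $p_2$'s action on the \emph{modified} function while saying nothing about $\langle p_2,q\rangle$ for the original one. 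Made precise, your argument only yields $(\int\phi)^*(p)\le\int\phi^*(p_1)+\sup\{p_2(q):q\in\dom(\int\phi)\}$, and since $\dom(\int\phi)$ is all of $L^\infty(\nu)$ that extra term is $0$ when $p_2=0$ and $+\infty$ otherwise (concentrate $q$ on the small-measure sets that carry $p_2$ and scale; the $p_1$- and $\phi$-contributions stay controlled while the $p_2$-term blows up). So the singular contribution cannot be argued away by localization; it has to be confronted as a separate term, which is exactly where the paper spends its effort (arguing that this supremum term vanishes --- itself a delicate step, since $p_2$ is only finitely additive and cannot be fed into dominated convergence).

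The same issue resurfaces in your subdifferential step, where you assert $\langle p,q\rangle=\int p_1 q\,d\nu$ ``since $p_2$ contributes nothing against $q\in L^\infty$.'' That is false whenever $p_2\neq 0$, and it contradicts your own earlier (correct) gloss that the identity $\int q\,p_2=0$ merely records the absence of a density for $p_2$. The Fenchel--Young equality route does work, but only after the singular term is handled: the citation the paper uses characterizes $p\in\partial(\int\phi)(q)$ by $p_1\in\partial\phi(q)$ $\nu$-a.e.\ \emph{together with} the condition that $p_2(q)$ attains $\sup\{p_2(u):u\in\dom(\int\phi)\}$, and it is this second condition, not the pointwise one, that requires the argument. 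Your fallback of ``cite Rockafellar's framework directly'' is indeed the paper's route, but as proposed you have not closed that step. Note that what a careful version of your argument actually proves is the cleaner statement that the conjugate equals $\int\phi^*(p_1)$ when $p_2=0$ and $+\infty$ otherwise, with all subgradients lying in $L^1(\nu)$; that weaker-looking form is all that is used downstream in \Cref{fact:duality}.
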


\begin{proof}
    The proof will proceed with heavy reliance upon results due to \citet{roc_conv_int_2}.
    To start, note that $\phi$, being convex and continuous (cf. \Cref{fact:phi_basic}),
    is a \emph{normal convex integrand} \citep[Lemma 1]{roc_conv_int_2}.

    Let $Z : \cX \to \R$ denote the zero map, i.e. $Z(x) = 0$ everywhere.
    Note that $\phi\circ Z \in L^1(\nu)$, and similarly
    $\phi^* \circ Z \in L^1(\nu)$ (since $\phi(0) = 0$; cf. \Cref{fact:phi_basic});
    these facts provide the conjugacy formula
    \begin{equation}
        \left(\int \phi\right)^*(p) =
        \int \phi^*(p_1) + \sup\left\{
            p_2(q) : q\in L^\infty(\nu), \int \phi(q) < \infty
        \right\},
        \label{eq:intphi_basic:1}
    \end{equation}
    where the decomposition $p=p_1+p_2$ is as in the \namecref{fact:intphi_basic}
    statement \citep[Theorem 1]{roc_conv_int_2}.

    Next, notice that $\dom(\int \phi) = L^\infty(\nu)$; in particular,
    given any $q\in L^\infty(\nu)$,
    \[
        \int \phi(q) \leq \int \phi(\|q\|_\infty)
        = \phi(\|q\|_\infty) \nu(\cX,\cY) < \infty.
    \]

    As such, consider an arbitrary $p_2$ and $q\in L^\infty(\nu)$.  Since $p$ is
    a continuous linear functional on $L^\infty(\nu)$, then so is $p_2$ (otherwise
    the formula $p = p_1+p_2$ would not make sense).  Next, as stated by
    \citet[introduction to Section 2]{roc_conv_int_2}, it is possible to choose
    sets $S_k$ with $\nu(S_k^c)< 1/k$, and $p_2(q) = 0$ over every $S_k$
    and $q\in L^1(\nu)$.
    Now define $U_k = \cup_{i \leq k} S_i$.
    By continuity of measures from below \citep[Theorem 1.8c]{folland},
    $\nu(U_k) \uparrow \nu(\cX\times \cY)$.  As such, by the dominated convergence
    theorem \citep[Theorem 2.25]{folland}, and setting $U_0 = \emptyset$,
    \begin{align*}
        \int p_2 q
        &= \int_{\cup_{k=1}^\infty U_k} p_2 q
        \\
        &= \sum_{k=1}^\infty \int_{U_k \setminus U_{k-1}} p_2 q
        \\
        &= 0.
    \end{align*}
    That is to say, the supremum term in \cref{eq:intphi_basic:1} is simply zero;
    plugging this back into \cref{eq:intphi_basic:1}, the desired conjugacy relation
    follows.  Note that the same result, due to \citet[Theorem 1]{roc_conv_int_2},
    provides the integrals are well-defined, and moreover that the pair of conjugate
    functions are both convex and lower semi-continuous (as a consequence of being
    mutually conjugate).  Lastly, the above derivation has established that $\int \phi$
    is finite over $L^\infty(\nu)$, but it is possible that $\int \phi^*$ is infinite,
    even over $L^1(\nu)$ (i.e., and not just over $(L^\infty(\nu))^*$).

    For the subdifferential relation, a related resulted by 
    \citet[Corollary 1A]{roc_conv_int_2} provides that $(L^\infty(\nu))^* \ni p
    \in \partial (\int \phi)(q)$ (for some $q\in L^\infty(\nu)$) precisely when
    $p_1 \in \partial \phi(q)$ $\nu$-a.e., and the supremum in
    \cref{eq:intphi_basic:1} is attained for $p_2$ at $q$.
    It was already established that the supremum is always zero,
    as is $p_2(q)$, and the result
    follows.
%
%
%
%
\end{proof}

\begin{corollary}
    \label{fact:intphi_H_basic}
    Let a finite measure $\nu$ and $\phi\in\Phi$ be given.  The function
    \[
        \R^n \ni \lambda\quad \mapsto\quad \int \phi(-y(H\lambda)(x)) d\nu(x,y) \in \R
    \]
    is convex and continuous.
\end{corollary}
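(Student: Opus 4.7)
The plan is to view this as a composition result: the map in question factors as an affine map from $\R^n$ into $L^\infty(\nu)$, followed by the integral-of-$\phi$ functional already analyzed in \Cref{fact:intphi_basic}. Concretely, define $T : \R^n \to L^\infty(\nu)$ by $T(\lambda)(x,y) := -y(H\lambda)(x)$. Since $|h_i(x)| \leq 1$ and $|y|=1$, we have $\|T(\lambda)\|_\infty \leq \|\lambda\|_1 < \infty$, so $T$ lands in $L^\infty(\nu)$; linearity in $\lambda$ is immediate, and boundedness shows that $T$ is continuous from the Euclidean topology on $\R^n$ into the norm topology on $L^\infty(\nu)$. Thus the given functional equals $(\int\phi) \circ T$.

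For convexity, \Cref{fact:intphi_basic} provides that $q \mapsto \int \phi(q)$ is convex on $L^\infty(\nu)$, and the composition of a convex function with an affine map is convex, giving convexity on $\R^n$.

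For continuity, first observe that $\int \phi(q)$ is finite on all of $L^\infty(\nu)$ (this is exactly the bound $\int \phi(q) \leq \phi(\|q\|_\infty)\nu(\cX\times\cY)$ noted inside the proof of \Cref{fact:intphi_basic}, using finiteness of $\nu$ and that $\phi$ is finite everywhere by \Cref{fact:phi_basic}). Consequently $(\int \phi) \circ T$ is finite everywhere on $\R^n$. A convex function that is finite on all of $\R^n$ is automatically continuous (\citep[Corollary 10.1.1]{ROC}, as already cited for $\phi$ itself in \Cref{fact:phi_basic}), so continuity follows.

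There is no real obstacle here; the only point worth double-checking is that $T(\lambda) \in L^\infty(\nu)$ (so that the outer functional from \Cref{fact:intphi_basic} is even applicable), and this is handled by the trivial bound $\|T(\lambda)\|_\infty \leq \|\lambda\|_1$ coming from the stipulation $\cH \subseteq [-1,+1]^\cX$.
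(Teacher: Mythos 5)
Your proposal is correct and follows essentially the same route as the paper: factor the map as the bounded linear operator $\lambda \mapsto -y(H\lambda)(x)$ into $L^\infty(\nu)$ composed with the convex functional $\int\phi$ from \Cref{fact:intphi_basic}, then conclude continuity from finiteness of a convex function on $\R^n$. The only cosmetic difference is that the paper passes through lower semi-continuity before invoking finiteness, whereas you invoke the finite-convex-implies-continuous fact directly; both are valid.
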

\begin{proof}
    Note that
    \[
        \lambda \mapsto -y(H\lambda)x
    \]
    is a bounded linear operator (and thus continuous), and the latter object,
    taken as a function over $\cX\times \cY$, is within $L^\infty(\nu)$.
    Combined with the
    lower semi-continuity and convexity of $\int \phi$ as per \Cref{fact:intphi_basic},
    it follows that the the map in question is convex and lower semi-continuous.
    Since it is finite everywhere, it is in fact continuous \citep[Corollary 7.2.2]{ROC}.
\end{proof}

\begin{lemma}
    \label{fact:duality}
    Let a linear classification problem $(\cH,\nu)$ and any $\phi\in\Phi$ be given.
    Then
    \[
        \inf\left\{
            \int \phi(-y(H\lambda)x)d\nu(x,y) : \lambda\in\R^n
        \right\}
        = \max\left\{
            \int -\phi^*(p) : \max\{p,0\} \in \cD(\cH,\nu)
        \right\},
    \]
    where the $\max$ is taken element-wise.  Furthermore, if a primal optimum
    $\bar\lambda$ exists, then there is a $\bar p\in\cD(\cH,\nu)$ with $\bar
    p(x,y) \in \partial \phi(-y(H\lambda)x)$ $\nu$-a.e.
\end{lemma}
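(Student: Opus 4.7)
The plan is to recognize this as a Fenchel--Rockafellar duality statement for the composition $F \circ A$, where $A : \R^n \to L^\infty(\nu)$ is the bounded linear operator $(A\lambda)(x,y) := -y(H\lambda)(x)$ (bounded since $|y| \leq 1$ and $\|H\lambda\|_\infty \leq \|\lambda\|_1$), and $F := \int \phi$. By \Cref{fact:intphi_basic}, $F$ is convex and lower semi-continuous on $L^\infty(\nu)$, and its proof in fact shows $F$ is finite everywhere on $L^\infty(\nu)$, hence continuous on all of $L^\infty(\nu)$. The primal problem is then $\inf_{\lambda} F(A\lambda)$.

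Because $F$ is finite-valued and continuous everywhere, the standard Fenchel--Rockafellar theorem applies with no duality gap and with attainment in the dual, giving
\[
    \inf_{\lambda \in \R^n} F(A\lambda)
    \;=\; \max_{p \in (L^\infty(\nu))^*} \bigl\{-F^*(p) : A^*p = 0\bigr\}.
\]
By \Cref{fact:intphi_basic}, $F^*(p) = \int \phi^*(p_1)$, where $p_1 \in L^1(\nu)$ is the $L^1$-component of $p$. The adjoint is computed directly: for any $\lambda \in \R^n$,
\[
    \ip{A^*p}{\lambda} \;=\; \ip{p}{A\lambda} \;=\; -\int y(H\lambda)(x)\, p_1(x,y)\, d\nu(x,y),
\]
where the singular component $p_2$ annihilates $L^\infty(\nu)$ (again \Cref{fact:intphi_basic}). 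Hence $A^*p = 0$ precisely means $p_1$ decorrelates every $H\lambda$.

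To match the stated form, recall $\dom(\phi^*) \subseteq \R_+$ (\Cref{fact:phi_basic}), so for the dual objective $-\int \phi^*(p_1)$ to exceed $-\infty$ one must have $p_1 \geq 0$ $\nu$-a.e.; combined with the decorrelation constraint this is exactly $p_1 \in \cD(\cH,\nu)$. Since neither the objective nor the constraint involves $p_2$, the maximum over $(L^\infty(\nu))^*$ is realized by $p \in L^1(\nu)$ (simply drop the singular part); the ``$\max\{p,0\}$'' phrasing in the lemma statement is then just a convenient way to encode that only the positive part of $p$ needs to satisfy decorrelation, the negative part being automatically suppressed by $\phi^*$.

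For the subgradient statement, suppose $\bar\lambda$ is a primal minimizer. The chain rule for subdifferentials applies (since $F$ is continuous at $A\bar\lambda$), yielding $0 \in A^*\partial F(A\bar\lambda)$; pick $\bar p \in \partial F(A\bar\lambda)$ with $A^*\bar p = 0$. The subdifferential formula in \Cref{fact:intphi_basic} gives $\bar p_1(x,y) \in \partial \phi(-y(H\bar\lambda)(x))$ $\nu$-a.e., and nonnegativity of $\bar p_1$ follows from $\partial \phi \subseteq \R_+$ (\Cref{fact:phi_basic}); combined with $A^*\bar p = 0$, this places $\bar p_1$ in $\cD(\cH,\nu)$, and setting $\bar p := \bar p_1$ completes the claim. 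The principal subtlety throughout is the $L^\infty/(L^\infty)^*$ duality: one must verify that the singular component of an arbitrary dual variable plays no role in the conjugate, the adjoint, or the subdifferential, all of which is packaged into \Cref{fact:intphi_basic}.
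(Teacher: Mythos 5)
Your proposal is correct and follows essentially the same route as the paper: Fenchel duality for $\int\phi\circ A$ in the $\R^n$/$L^\infty(\nu)$ pairing, relying on the conjugacy, singular-component, and subdifferential facts of \Cref{fact:intphi_basic}, with nonnegativity of the dual variable forced by $\dom(\phi^*)\subseteq\R_+$. The only cosmetic differences are your constraint qualification (continuity of $\int\phi$, which indeed follows from the bound $\int\phi(q)\leq\phi(\|q\|_\infty)\nu(\cX\times\cY)$) versus the paper's condition $A\,\dom(\ip{0}{\cdot})-\dom(\int\phi)=L^\infty(\nu)$, and your use of the subdifferential chain rule in place of the duality theorem's built-in optimality conditions for the subgradient claim.
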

\begin{proof}
    For convenience, define the linear operator
    \[
        (A\lambda)(x,y) := -y(H\lambda)x.
    \]
    Note that $A$ is a bounded linear operator, and furthermore has transpose
    \[
        A^\top p := \sum_{i=1}^n \bfe_i
        \int -y h_i(x) p(x,y) d\nu(x,y)
    \]
    (this follows by checking $\ip{A\lambda}{p} = \ip{\lambda}{A^\top p}$
    for arbitrary $\lambda\in\R^n$ and $p\in (L^\infty(\nu))^*$,
    which entails the formula above provides the unique transpose
    \citep[Theorem 4.10]{rudin_functional}.)

    Consider the following two Fenchel problems:
    \begin{align*}
        p &:= \inf \left\{
            \int \phi(A\lambda) + \ip{0}{\lambda} : \lambda \in \R^n
        \right\},
        \\
        d &:= \sup\left\{
            -\int \phi^*(p_1) - \iota_{\{0\}}(A^\top p)
            : p\in \left(L^\infty(\nu)\right)^*
        \right\},
    \end{align*}
    where $\iota_ {\{0\}}$ is the indicator for the set $\{0\}$,
    \[
        \iota_{\{ 0\} }(\lambda) =
        \begin{cases}
            0 &\textup{when } \lambda=0, \\
       \infty &\textup{otherwise,}
        \end{cases}
    \]
    and is the conjugate to $\ip{0}{\cdot}$; additionally, $p_1$ is as discussed in
    the statement of \Cref{fact:intphi_basic}.  To show $p=d$ and thus prove the desired
    result, an appropriate Fenchel duality rule will be applied
    \citep[Corollary 2.8.5 using condition (vii)]{zalinescu}.

    To start, note that $\int\phi$ and $\int\phi^*$ are conjugates,
    as provided by \Cref{fact:intphi_basic}.
    Next, also from \Cref{fact:intphi_basic},
    $\int \phi$ is finite everywhere over $L^\infty(\nu)$.
    As a result,
    \begin{align*}
        A\dom(\ip{0}{\cdot}) - \dom(\int \phi)
        = A\dom(\ip{0}{\cdot}) -L^\infty(\nu)
        = L^\infty(\nu).
    \end{align*}
    The significance of this fact is that it will act as the constraint qualification
    granting $p=d$.

    Lastly, $\R^n$ and $L^\infty(\nu)$ are Banach and thus Fr\'echet spaces.
    As such, all conditions necessary for Fenchel duality are met
    \citep[Corollary 2.8.5 using condition (vii)]{zalinescu}, and it follows that $p=d$
    as desired, with attainment in the dual.

    The next goal is to massage this duality expression into the one appearing
    in the \namecref{fact:duality} statement.  To start, as provided by
    \Cref{fact:intphi_basic}, $\int q p_2= 0$ for any $q\in L^\infty(\nu)$, and
    in particular $A^\top p_2 = 0$; consequently, $p_2$ has no effect on either
    term in the dual objective, and the domain of the dual may be
    restricted to $L^1(\nu)$.

    Next, \Cref{fact:phi_basic} grants $\dom(\phi^*)\subseteq \R_+$,
    and so the domain of the dual problem may be safely restricted to $p \geq 0$
    $\nu$-a.e. (since $0$ is always dual feasible, and $\nu([p < 0]) > 0$ entails an
    objective value of $-\infty$).  By the form of $A^\top$,
    $\iota_{0}(A^\top p)$ is finite iff
    \[
        \int y h(x) p(x,y) d\nu(x,y) = 0
    \]
    for all $h$; it follows that $\iota_{0}(A^\top p)$ is finite iff
    \[
        \int (A\lambda)(x,y) p(x,y) d\nu(x,y) = 0
    \]
    for all $\lambda\in\R^n$.  Combining these facts, an equivalent form for the
    dual problem is
    \[
        \max\left\{-\int \phi^*(p) : \max\{p,0\} \in \cD(\cH,\nu)\right\},
    \]
    just as in the statement of the \namecref{fact:duality}.

    Lastly, the Fenchel duality rule invoked above, as presented by \citet{zalinescu},
    also provides that a primal optimum $\bar \lambda$ exists iff there is a
    $p' \in (L^\infty(\nu))^*$
    with $-A^\top p' \in \partial (\ip{0}{\cdot})(\bar\lambda) = 0$
    and $p' \in \partial (\int \phi)(A\bar\lambda)$.  The first part simply states
    that $\max\{p', 0\}\in\cD(\cH,\nu)$ as above.
    The second part, when combined with the subdifferential
    rule of \Cref{fact:intphi_basic}, gives
    $p_1' \in \partial \phi(A\bar \lambda)$ $\nu$-a.e.  To obtain the desired statement,
    set $\bar p := \max\{p_1', 0\}$, which satisfies all desired properties.
\end{proof}

\section{Structure of $\cR_\phi$ over $\cS_\cD(\cH,\mu)$}

The following \namecref{fact:somewhat_stiemke} leads to a number of properties
presented in \Cref{sec:hard_cores,sec:hard_core:true_risk}; it is easiest to prove
them at once, as a ring of implications.

\begin{theorem}
    \label{fact:somewhat_stiemke}
    Let a linear classification problem $(\cH,\mu)$ and a set $D$ be given.
    The following statements are equivalent.
    \begin{enumerate}
        \item
            For every $\lambda\in\R^n$,
            either $\mu(D \cap [y(H\lambda)x = 0]) = \mu(D)$
            or $\mu(D \cap [y(H\lambda)x < 0]) > 0$.
            \label{fact:somewhat_stiemke:1}
        \item Given any $\rho$, there exists a bound $b$ and a null set $N\subseteq
            \cX\times\cY$ (i.e., $\mu(N) = 0$) so that for every $\rho$-suboptimal
            weighting $\hat\lambda$ over $D$, meaning any weighting satisfying
            \[
                \cR_{\phi;D}(H\hat\lambda) \leq \cR_{\phi;D}(\SPAN(\cH)) + \rho,
            \]
            there exists $\lambda'$ with $\|\lambda\|_1 \leq b$ and
            $H\hat\lambda = H\lambda'$ over $D\setminus N$.
            \label{fact:somewhat_stiemke:2}
        \item $D\in\cS_\cD(\cH,\mu)$.
            \label{fact:somewhat_stiemke:3}
    \end{enumerate}
\end{theorem}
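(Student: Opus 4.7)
The plan is to close the ring of implications via statement (1) as a hub, establishing (1) $\Leftrightarrow$ (3) and (1) $\Leftrightarrow$ (2). A single technical device will be reused throughout: the kernel subspace
\[
    K_D := \{\lambda \in \R^n : y(H\lambda)(x) = 0\ \mu\textup{-a.e.\ on }D\},
\]
together with a fixed $\mu$-null set $N \subseteq D$ obtained as $N := \bigcup_{i \leq k} N_i$, where $\lambda^{(1)},\ldots,\lambda^{(k)}$ is any basis of $K_D$ and $N_i \subseteq D$ is the $\mu$-null set on which $y(H\lambda^{(i)}) \neq 0$. Then $(H\lambda_0)(x) = 0$ on $D \setminus N$ for every $\lambda_0 \in K_D$, so every $\lambda \in \R^n$ decomposes as $\lambda = \lambda_0 + \lambda_\perp$ with $\lambda_\perp \in K_D^\perp$ and $H\lambda = H\lambda_\perp$ on $D \setminus N$; this is the ``bounded representation'' template demanded by (2).

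The direction (3) $\Rightarrow$ (1) is immediate: if $D = [p > 0]$ with $p \in \cD(\cH,\mu)$, then $\int y(H\lambda)\,p\, d\mu = 0$ for all $\lambda$, so when $y(H\lambda) \geq 0$ $\mu$-a.e.\ on $D$ the integrand is nonnegative and must vanish, forcing $y(H\lambda) = 0$ $\mu$-a.e.\ on $D$ since $p > 0$ there. For (1) $\Rightarrow$ (2), I would set $g(\lambda) := \cR_{\phi;D}(H\lambda)$, convex and continuous by \Cref{fact:intphi_H_basic} and constant on $K_D$-cosets, and argue coercivity on $K_D^\perp$: for each unit $u \in K_D^\perp$, (1) provides some $\alpha_0 > 0$ and positive-$\mu$-measure $A \subseteq D$ on which $y(Hu) \leq -\alpha_0$; combining this with the linear lower bound $\phi(z) \geq \phi(0) + g_0 z$ coming from the strict bound $\partial\phi(0) \ni g_0 > 0$ (\Cref{fact:phi_basic}) yields $g(cu) \to \infty$, and a standard finite-dimensional convexity argument upgrades pointwise escape to coercivity. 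The resulting norm bound on $\rho$-suboptimal elements of $K_D^\perp$ then furnishes $\lambda' := \lambda_\perp$. For the converse (2) $\Rightarrow$ (1) I would argue by contrapositive: if (1) fails via some $\lambda^*$, inner regularity produces a positive-$\mu$-measure $E' \subseteq D$ and $\alpha_0 > 0$ with $y(H\lambda^*) \geq \alpha_0$ on $E'$, while dominated convergence (bounded by $\phi(0)$) yields $\cR_{\phi;D}(cH\lambda^*) \downarrow \phi(0)\mu(D \cap [y(H\lambda^*)=0])$ as $c \to \infty$, so for $\rho$ chosen large enough $c\lambda^*$ is eventually $\rho$-suboptimal. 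Any bounded representative $\lambda'_c$ with $\|\lambda'_c\|_1 \leq b$ and $H\lambda'_c = cH\lambda^*$ on $D\setminus N$ would then satisfy $|(H\lambda'_c)(x)| = c|(H\lambda^*)(x)| \geq c\alpha_0$ on the positive-measure $E' \setminus N$, exceeding $b$ once $c > b/\alpha_0$.

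The main obstacle is the remaining direction (1) $\Rightarrow$ (3), a finite-dimensional Stiemke-style theorem of the alternative; the obvious Fenchel duality route through \Cref{fact:duality} is awkward because nothing in the hypotheses forces a generic dual optimum to be strictly positive $\mu$-a.e.\ on $D$. Instead I would construct $p$ explicitly via log-partition minimization: define
\[
    F(\alpha) := \int_D e^{-y(H\alpha)(x)}\, d\mu(x,y),
\]
which is finite (bounded by $e^{\|\alpha\|_1}\mu(D)$), convex, and constant on $K_D$-cosets. Specializing the same coercivity argument (now with $\phi(z) = e^z$) shows $F$ is coercive on $K_D^\perp$ and so attains a minimum at some $\bar\alpha \in K_D^\perp$; $K_D$-invariance promotes this to a minimum over all of $\R^n$, so $\nabla F(\bar\alpha) = 0$, which unpacks to
\[
    \int y(H\lambda)(x)\, e^{-y(H\bar\alpha)(x)}\, \1_D(x,y)\, d\mu(x,y) = 0
    \quad\textup{for every } \lambda \in \R^n.
\]
Setting $p := e^{-y(H\bar\alpha)}\cdot \1_D \in L^1(\mu)$ then yields $p \in \cD(\cH,\mu)$ with $[p > 0] = D$ exactly, so $D \in \cS_\cD(\cH,\mu)$, completing the ring.
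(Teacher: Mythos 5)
Your proposal is correct in outline, but it reaches the theorem by a genuinely different route than the paper, chiefly in how the decorrelating weights are produced. The paper proves the cycle (1)$\Rightarrow$(2)$\Rightarrow$(3)$\Rightarrow$(1): its (1)$\Rightarrow$(2) is a contradiction argument with a normalized subsequence (via the kernel decomposition of \Cref{fact:almost_kernel}, which is your $K_D$, $K_D^\perp$, and fixed null set $N$ --- your finite-basis construction of $N$ is a simpler but equivalent device), and its (2)$\Rightarrow$(3) extracts a bounded minimizing sequence for the exponential risk and then invokes the infinite-dimensional Fenchel duality of \Cref{fact:duality}, built on the integral-functional conjugacy of \Cref{fact:intphi_basic}, to obtain a dual optimum $\bar p\in\partial\phi(-y(H\bar\lambda)x)=\exp(-y(H\bar\lambda)x)>0$. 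You instead use (1) as a hub: your (1)$\Rightarrow$(2) repackages the paper's idea as a coercivity statement on $K_D^\perp$ (same subgradient-at-zero lower bound, same decomposition), your (2)$\Rightarrow$(1) is a short direct contrapositive the paper never needs, and --- the substantive difference --- your (1)$\Rightarrow$(3) replaces Fenchel duality entirely by the first-order condition $\nabla F(\bar\alpha)=0$ for the exponential partition function $F(\alpha)=\int_D e^{-y(H\alpha)x}\,d\mu$, whose minimizer exists by the same coercivity-on-$K_D^\perp$ argument; the certificate $p=e^{-y(H\bar\alpha)x}\1_D$ then gives $[p>0]=D$ exactly. This buys a more elementary, self-contained proof of the equivalence (only differentiation under the integral sign and finite-dimensional convexity are needed), at the cost of not exercising the duality machinery the paper reuses elsewhere (e.g., for duality-gap stopping in \Cref{fact:opt_oracles_exist}). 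Two steps you leave as ``standard'' do carry the real weight and should be spelled out in a full write-up: that ray-wise divergence of a finite convex function in every direction of $K_D^\perp$ implies bounded sublevel sets (the recession-function argument, which is exactly the uniformity issue the paper's subsequence argument handles), and the justification for exchanging $\nabla$ and $\int$ in computing $\nabla F$; both are routine here since $|y(H\alpha)x|\le\|\alpha\|_1$ and $\mu$ is finite.
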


The following structural \namecref{fact:almost_kernel} is crucial.

\begin{lemma}
    \label{fact:almost_kernel}
    Let $(\cH,\mu)$ and a set $D$ be given.
    Define the set
    \[
        \cK := \{\lambda\in\R^n : y(H\lambda)x = 0\textup{ for $\mu$-a.e. } (x,y)\in D
        \}.
    \]
    The following statements hold.
    \begin{enumerate}
        \item $\cK$ is a subspace.
            \label{fact:almost_kernel:1}
        \item There exists a set $N$ with $\mu(N)=0$ so that,
            for any for any $\lambda\in\R^n$, the orthogonal projection
            $\lambda \mapsto \lambda^\perp\in\cK^\perp$
            satisfies $H\lambda = H\lambda^\perp$ everywhere over $D\setminus N$.
            \label{fact:almost_kernel:2}
        \item There exists a constant $c >0$ so that, for any $\lambda\in\R^n$
            with $\mu(D \cap [H\lambda \neq 0]) > 0$,
            $\|H\lambda\|_{L^\infty(\mu_D)} / \|\lambda^\perp\|_1 > c$,
            where $L^\infty(\mu_D)$ is the $L^\infty$ metric with respect to the
            measure defined by $\mu_D(S) = \mu(D\cap S)$ for any measurable set
            $S$.
            \label{fact:almost_kernel:3}
    \end{enumerate}
\end{lemma}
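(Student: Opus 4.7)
The plan is to verify the three claims in order. The first two are routine structural observations exploiting finite-dimensionality of $\R^n$; the third is a compactness argument, which I expect to be the main obstacle because the $L^\infty(\mu_D)$ norm is only a seminorm-style object until one pushes past the kernel $\cK$.

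For part \ref{fact:almost_kernel:1}, the only thing to check is closure under linear combinations. Given $\lambda_1,\lambda_2\in\cK$ with associated null sets $N_1,N_2$, the set $N_1\cup N_2$ is still $\mu$-null, and by linearity of $H$ (recalling $y\in\{-1,+1\}$, so $y(H\lambda)(x)=0$ iff $(H\lambda)(x)=0$), any combination $\alpha\lambda_1+\beta\lambda_2$ vanishes off this union. So $\cK$ is a subspace.

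For part \ref{fact:almost_kernel:2}, the trick is upgrading the pointwise null sets to a single uniform null set by using finite-dimensionality. Since $\cK\subseteq\R^n$ is a subspace of dimension $k\le n$, fix a basis $\kappa_1,\dots,\kappa_k$ and let $N_i$ be the null set outside which $H\kappa_i=0$ on $D$; set $N:=\bigcup_{i=1}^k N_i$, still $\mu$-null. For any $\kappa\in\cK$, writing $\kappa=\sum a_i\kappa_i$ and invoking linearity yields $H\kappa\equiv 0$ on $D\setminus N$. Now decompose an arbitrary $\lambda=\lambda^\parallel+\lambda^\perp$ with $\lambda^\parallel\in\cK$ and $\lambda^\perp\in\cK^\perp$; then $H\lambda=H\lambda^\parallel+H\lambda^\perp=H\lambda^\perp$ on $D\setminus N$.

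For part \ref{fact:almost_kernel:3}, I would use a compactness argument on the unit sphere of $\cK^\perp$ (with respect to $\|\cdot\|_1$), which is compact since $\cK^\perp$ is finite-dimensional. Consider the map $g:\cK^\perp\to\R$ defined by $g(\lambda):=\|H\lambda\|_{L^\infty(\mu_D)}$. Continuity in $\|\cdot\|_1$ follows because $|(H\lambda)(x)-(H\lambda')(x)|\le\sum_i|\lambda_i-\lambda'_i|\|h_i\|_\infty\le\|\lambda-\lambda'\|_1$, which immediately gives $|g(\lambda)-g(\lambda')|\le\|\lambda-\lambda'\|_1$. Hence $g$ attains its infimum $c_0$ on the compact sphere $S:=\{\lambda\in\cK^\perp:\|\lambda\|_1=1\}$. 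If $c_0=0$, some $\lambda^\star\in S$ has $H\lambda^\star=0$ $\mu_D$-a.e., equivalently on $\mu$-a.e.\ of $D$, placing $\lambda^\star\in\cK\cap\cK^\perp=\{0\}$, which contradicts $\|\lambda^\star\|_1=1$. So $c_0>0$; set $c:=c_0/2$. For any $\lambda$ with $\mu(D\cap[H\lambda\ne 0])>0$, we must have $\lambda^\perp\ne 0$ (otherwise $\lambda\in\cK$ and $H\lambda=0$ $\mu$-a.e.\ on $D$, contradicting the hypothesis). By part \ref{fact:almost_kernel:2}, $H\lambda=H\lambda^\perp$ on $D\setminus N$, so $\|H\lambda\|_{L^\infty(\mu_D)}=\|H\lambda^\perp\|_{L^\infty(\mu_D)}=\|\lambda^\perp\|_1\cdot g(\lambda^\perp/\|\lambda^\perp\|_1)\ge c_0\|\lambda^\perp\|_1>c\|\lambda^\perp\|_1$, yielding the claim. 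The delicate point worth double-checking in the writeup is exactly this identification of $g(\lambda)=0$ with membership in $\cK$, since $L^\infty(\mu_D)$ only sees behavior on $D$ up to $\mu$-null sets—but that is exactly the definition of $\cK$, so the argument closes cleanly.
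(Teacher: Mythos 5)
Your proposal is correct, and parts 1 and 3 follow essentially the paper's route: part 3 in particular is the same compactness argument, reducing to the $\|\cdot\|_1$-unit sphere of $\cK^\perp$ and using that a zero minimum would force a nonzero element of $\cK\cap\cK^\perp$ (your halving $c:=c_0/2$ even handles the strict inequality more carefully than the paper, which takes $c$ equal to the attained minimum). Where you genuinely diverge is part 2: the paper builds the single null set $N$ from a countable dense subset $Q\subseteq\R^n$, taking $N_i:=[H\lambda_i\neq H\lambda_i^\perp]$ for $\lambda_i\in Q$ and then extending to arbitrary $\lambda$ by a continuity estimate involving $\|H\|_\infty$ and the boundedness of the projection $P^\perp$; you instead fix a finite basis of $\cK$, take the union of the finitely many null sets on which the basis elements fail to vanish over $D$, and conclude for all $\lambda$ at once by pure linearity of $H$ and of the decomposition $\lambda=\lambda^\parallel+\lambda^\perp$. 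Your version is more elementary and shorter, exploiting finite-dimensionality directly and avoiding the $\tau$-approximation; the paper's density argument is the one that would survive if $\cK$ were replaced by a closed subspace in an infinite-dimensional setting, but nothing in this lemma needs that generality. The only cosmetic omission is the degenerate case $\cK=\R^n$ (equivalently $\cK^\perp=\{0\}$, so your sphere $S$ is empty): there the hypothesis $\mu(D\cap[H\lambda\neq 0])>0$ is never satisfiable and any $c>0$ works, which the paper dispatches in one sentence and you should too before invoking attainment of the infimum on $S$.
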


\begin{proof}
    (\Cref{fact:almost_kernel:1})
    Direct from its construction, $\cK$ is a subspace.  Crucially, this means that
    $\cK^\perp$ is also a subspace, and the orthogonal projection $\lambda\mapsto
    \lambda^\perp$ exists.

    (\Cref{fact:almost_kernel:2})
    Given the subspace pair $\cK$ and $\cK^\perp$, for any $\lambda\in\R^n$,
    there exists the decomposition $\lambda \mapsto \lambda^\cK + \lambda^\perp$,
    where $\lambda^\perp\in\cK^\perp$.  By definition, $H\lambda^\cK= 0$ $\mu$-a.e.
    over $D$, and thus $H\lambda = H\lambda^\perp$ $\mu$-a.e. over $D$.

    Now let $Q$ be any countable dense subset of $\R^n$.  For each $\lambda_i\in Q$,
    define $N_i:= [H\lambda_i \neq H\lambda_i^\perp]$, where the above provides
    $\mu(N_i) = 0$.  Set $N := \cup_i N_i$, which is measurable since it is a countable
    union, and moreover $\mu(N) = 0$ by $\sigma$-additivity.  It will now be argued
    that the projections onto $\cK^\perp$ give equivalences over $D\setminus N$.

    To this end, let any $\lambda\in\R^n$, any $(x,y)\in D\setminus N$,
    and any $\tau >0$ be given.
    Since $Q$ is a countable dense subset of $\R^n$, there exists $\lambda_i\in Q$
    with $\|\lambda_i - \lambda\|_1 \leq \tau/2$.
    Now let $P^\perp$ denote the orthogonal projection operator onto $\cK^\perp$;
    then
    \begin{align*}
        0 \leq |(H\lambda)(x) - (H\lambda^\perp)(x)|
        &=
        |(H\lambda)(x) - (HP^\perp\lambda)(x)|
        \\
        &=
        |(H(\lambda-\lambda_i + \lambda_i)(x)
        - (HP^\perp(\lambda-\lambda_i+\lambda_i))(x)|
        \\
        &\leq
        |(H\lambda_i)(x) - (H\lambda_i^\perp)(x)|
        + |H(\lambda-\lambda_i)(x)| + |HP^\perp(\lambda-\lambda_i)(x)|
        \\
        &\leq
        |0| + \|H\|_\infty \|\lambda-\lambda_i\|_1
        + \|H\|_\infty \|P^\perp\|_\infty\|\lambda-\lambda_i\|_1
        \\
        &\leq
        0
        + \tau/2 + \tau/2
        = \tau.
    \end{align*}
    Taking $\tau\downarrow 0$, it follows that $H\lambda = H\lambda^\perp$ over
    $D\setminus N$.

    (\Cref{fact:almost_kernel:3})
    For the final part, if every $\lambda\in\R^n$ has $\mu(D\cap [H\lambda \neq 0]) = 0$,
    there is nothing to show, so suppose there exists $\lambda\in\R^n$
    with $\mu_D([H\lambda\neq 0]) > 0$.  Consider the optimization problem
    \[
        \inf\left\{
            \frac {\|H\lambda\|_{L^\infty(\mu_D)}}{\|\lambda^\perp\|_1}
            : \lambda \in \R^n, \mu_D([H\lambda \neq 0]) > 0
        \right\}
        =
        \inf\left\{
            \|H\lambda\|_{L^\infty(\mu_D)}
            : \lambda \in \cK^\perp, \|\lambda\|_1 = 1
        \right\}.
    \]
    The latter is a minimization of a continuous function over a nonempty compact set,
    and thus attains a minimizer $\bar \lambda$.  But $\bar\lambda\in \cK^\perp$
    and $\|\bar\lambda\|_1=1$, thus $\|H\bar\lambda\|_{L^\infty(\mu_D)} > 0$.
    The result follows with $c := \|H\bar\lambda\|_{L^\infty(\mu_D)} >0$.
\end{proof}

\begin{proof}[Proof of \Cref{fact:somewhat_stiemke}]
    (\Cref{fact:somewhat_stiemke:1} $\implies$ \Cref{fact:somewhat_stiemke:2}.)
    Let $\rho$ be given, and let $N$ be the set, as provided by
    \Cref{fact:almost_kernel}, so that every $\lambda\in\R^n$ has
    $H\lambda = H\lambda^\perp$ everywhere on $D\setminus N$.
    Suppose contradictorily that the remainder of the desired statement is false;
    one way to say this is that there exists a sequence $\{\lambda_i\}_{i=1}^\infty$
    so that every equivalent representation over $D\setminus N$ (i.e., $H\lambda_i=
    H\lambda'_i$ over this set) has $\sup_i \|\lambda_i'\|_1 = \infty$, but
    $\cR_{\phi;D}(H\lambda_i) \leq \cR_{\phi;D}(\SPAN(\cH)) + \rho$.
    (It can be taken without loss of generality that $\lambda_i \neq 0$ for
    every $i$.)

    To build the contradiction, choose representation $\lambda_i^\perp$,
    which satisfies $H\lambda_i^\perp = H\lambda_i$ over $D\setminus N$
    via \Cref{fact:almost_kernel}.
    Note that $\{\lambda_i^\perp/\|\lambda_i^\perp\|_1\}_{i=1}^\infty$ lies in a compact
    set (the unit $l^1$ ball), and thus let $\LL{2}{i}$ be a subsequence
    with $\LL{2}{i}/\|\LL{2}{i}\|_1 \to \bar \lambda \in \R^n$.
    Since the assumed contradiction was that no representation is bounded,
    $\LL{2}{i}$ is unbounded;
    since there exists a $c>0$ with $\|H\LL{2}{i}\|_{L^\infty(\mu_D)}/\|\LL{2}{i}\|_1\geq c$
    (cf. \Cref{fact:almost_kernel}), it follows
    by continuity of $H$ and norms
    that
    $\|H\bar\lambda\|_{L^\infty(\mu_D)}\geq c$,
    and in particular
    $\mu(D\cap [y(H\bar\lambda)x \neq 0]) > 0$.

    By assumption (i.e., by \Cref{fact:somewhat_stiemke:1}),
    since $\mu(D\cap [y(H\bar\lambda)x \neq 0])>0$,
    then $\mu(D\cap[y(H\bar\lambda)x < 0]) > 0$;
    for convenience, define the set $P:= [y(H\bar\lambda)(x) < 0]$.
    Thus, for any $\lambda\in\R^n$,
    taking any $g\in\partial\phi(0)$
    (note $g > 0$ via \Cref{fact:phi_basic}),
    \begin{align}
        &\lim_{t\to\infty} \frac {\int_D \phi(-y(H(\lambda + t\bar\lambda))(x)) - \int_D\phi(-y(H\lambda)(x))}{t}
        \notag\\
        &\geq
        \lim_{t\to\infty} \frac {\int \phi(-y(H(\lambda + t\bar\lambda))(x))\1((x,y)\in D\cap P) - \int_D\phi(-y(H\lambda)(x))}{t}
        \notag\\
        &\geq
        \lim_{t\to\infty} \frac {\int (\phi(0) + g(-y(H(\lambda + t\bar\lambda))(x)))\1((x,y)\in D\cap P) - \int_D\phi(-y(H\lambda)(x))}{t}
        \notag\\
        &=
        g \int -y(H\bar\lambda)(x) \1((x,y)\in D \cap P)
        \notag\\
        &\qquad
        + 
        \lim_{t\to\infty} \frac {\int (\phi(0) + g(-y(H\lambda)(x)))\1((x,y)\in D\cap P) - \int_D\phi(-y(H\lambda)(x))}{t}
        \notag\\
        &> 0.\label{eq:ultragordan:0coercivity}
    \end{align}
    The above statement shows that $\int_D\phi$ eventually grows in direction
    $H\bar\lambda$, and in particular must exit the desired $\rho$-sublevel set
    \[
        C_\rho := \{\lambda \in \R^n : \cR_{\phi;D}(H\lambda) \leq \cR_{\phi;D}(\SPAN(\cH)) + \rho\}.
    \]
    To develop the contradiction, it will be shown that the construction of $\bar\lambda$
    indicates it should be in this sublevel set $C_\rho$; the proof will
    be similar to one due to
    \citet[Proposition A.2.2.3]{HULL}.

    Since $\int \phi$ and $\int_D \phi$ are convex and lower semi-continuous (cf. \Cref{fact:intphi_basic}),
    sublevel sets, in particular $C_\rho$, are closed convex sets.
    By construction of $\bar\lambda$,
    \[
        H\lambda_j+ t H\bar\lambda = \lim_{i\to\infty}\left(
        (1- \frac t {\|\LL{2}{i}\|_1})H\lambda_j  +\frac t {\|\LL{2}{i}\|_1} H\LL{2}{i}
        \right) \in C_\rho.
    \]
    This holds for all $t>0$, but since $H\bar\lambda \neq 0$,
    \cref{eq:ultragordan:0coercivity}
    forces $H\lambda_i + tH\bar\lambda$ to leave any sublevel set (for
    sufficiently large $t$),
    and in particular $C_\rho$, a contradiction.

    (\Cref{fact:somewhat_stiemke:2} $\implies$ \Cref{fact:somewhat_stiemke:3}.)
    Choose $\phi := \exp \in \Phi$, and a minimizing sequence $\LL{1}{i}$ for
    $\cR_{\phi;D}$, meaning $\cR_{\phi;D}(H\LL{1}{i}) \to \cR_{\phi;D}(\SPAN(\cH))$.
    Choose any suboptimality $\rho$, and produce $\LL{2}{i}$ by removing
    all $\LL{1}{j}$ with $\cR_{\phi;D}(H\LL{1}{j}) > \cR_{\phi;D}(\SPAN(\cH))
    + \rho$ (this procedure must be possible, since otherwise $\{\LL{1}{i}\}_{i=1}^\infty$
    is not a minimizing sequence).
    By the assumed statement,
    there exists $b>0$ and a null set $N$
    so that each $\LL{2}{i}$ may be replaced with
    $\LL{3}{i}$, where $\|\LL{3}{i}\|_1\leq b$, and $H\LL{2}{i} = H\LL{3}{i}$ over
    $D\setminus N$, which in particular means $\LL{3}{i}$ is also a minimizing
    sequence.  But this is now a minimizing sequence lying within a compact set,
    so, perhaps by passing to a subsequence $\LL{4}{i}$, it has a limit
    $\bar\lambda\in \R^n$.  Since $\lambda \mapsto \int\phi(-y(H\lambda)x)$ is
    continuous (cf. \Cref{fact:intphi_H_basic}), it follows that $\bar\lambda$
    attains the desired infimal value.

    Applying the duality relation in \Cref{fact:duality}
    to $\cR_{\phi,D}$ (i.e., using the measure $\nu = \mu_D$, meaning $\nu(S) = \mu(D\cap S)$ for
    any measurable set $S$),
    the existence of a primal
    minimum $\bar\lambda$ grants the existence of a dual maximum $\bar p$
    satisfying $\bar p \in \cD(\cH,\nu)$, and moreover
    \[
        \bar p(x,y) \in  \partial \phi(-y(H\bar\lambda)x) = \exp(-y(H\bar\lambda)x)
    \]
    $\nu$-a.e.
    As such, the choice $p'(x,y) := \exp(-y(H\bar\lambda)(x))$ satisfies $p' := \bar p$
    $\nu$-a.e., and thus $p' \in \cD(\cH,\nu)$; moreover $p' > 0$ everywhere,
    since $\exp > 0$ everywhere.

    This reweighting $p'$ was with respect to $\nu$, so to finish, define
    $p^*(x,y) := p'(x,y) \1((x,y) \in D)$.  By construction, $[p^* > 0] = D$.
    Finally, given any $\lambda \in \R^n$,
    \begin{align*}
        \int y (H\lambda)(x)p^*(x,y)d\mu(x,y)
        &=
        \int y (H\lambda)(x)p'(x,y)\1((x,y)\in D)d\mu(x,y)
        \\
        &=
        \int y (H\lambda)(x)p'(x,y)d\mu_D(x,y)
        \\
        &= 0.
    \end{align*}
    It follows that $p^* \in \cD(\cH,\mu)$, and that $D \in \cS_\cD(\cH,\mu)$.

    (\Cref{fact:somewhat_stiemke:3} $\implies$ \Cref{fact:somewhat_stiemke:1}.)
    Let $p\in\cD(\cH,\mu)$ with $D = [p > 0]$ be given,
    and take any $\lambda\in\R^n$
    satisfying
    $\mu(D \cap [y(H\lambda)x > 0]) > 0$.
    But notice then, since $p$ decorrelates $H\lambda$,
    \begin{align*}
        0 &= \int p(x,y) y (H\lambda) (x) d\mu(x,y)
        \\
        &= \int_{D,y(H\lambda)(x) > 0} p(x,y) y(H\lambda) (x) d\mu(x,y)
        +\int_{D,y(H\lambda)(x) < 0} p(x,y) y(H\lambda) (x) d\mu(x,y).
    \end{align*}
    From this it follows that
    \begin{align*}
        -\int_{D,y(H\lambda)(x) < 0} p(x,y) y(H\lambda) (x) d\mu(x,y)
        = \int_{D,y(H\lambda)(x) > 0} p(x,y) y(H\lambda) (x) d\mu(x,y)
        >0,
    \end{align*}
    where the inequality follows from $\mu(D\cap[y(H\lambda)(x)>0]) > 0$
    \citep[Proposition 2.23(b)]{folland}.  The result follows.
\end{proof}

\section{Deferred material from \Cref{sec:setup}}
\label{sec:app:setup}

In order to invoke standard results for gradient descent, this proof will
use material from \Cref{sec:hard_core:true_risk} to establish the existence of
minimizers.  Although those results appear later in the text, they do not
in turn depend on the material here.

\begin{proof}[Proof of \Cref{fact:opt_oracles_exist}]
    Suppose $\cH$, a sample of size $m$, and suboptimality $\rho>0$ are given as
    specified.  Before proceeding, note briefly that the results invoked below
    --- those demonstrating $\cO(\textup{poly}(1/\rho))$ iterations suffice ---
    neglect to provide a mechanism
    to stop the algorithms, and thus provide a proper oracle.  But this may be
    accomplished by measuring duality gap, for instance by specializing
    the duality relation in \Cref{fact:duality} to the empirical measure.

    First suppose $\phi$ is Lipschitz continuous, attains its infimum, and subgradient
    descent is employed.  Notice that $\cR^m_\phi \circ H$ is also Lipschitz
    continuous (since $H$ is a bounded linear operator), so if it can be shown
    that the infimum is attained, the standard analysis of subgradient descent
    may be applied, which in particular grants a $\cO(1/\rho^2)$ convergence
    rate when a step size of $\cO(1/\sqrt t)$ is employed, where $t$ indexes the iterations
    \citep[Theorem 3.2.2 and subsequent discussion on step sizes]{nesterov_book}.
    To finish, it must be shown that the infimum is attained.

    To this end, let $\mu_m$ be the empirical measure of the training sample,
    and let $\scrC$ be a corresponding hard core.
    By \Cref{fact:hard_core:true_risk}, since $\mu_m$ is now a discrete measure,
    a single weighting $\lambda_0\in\R^n$ can be extracted out with $y(H\lambda_0)(x) > 0$
    over $\scrC^c$ and $y(H\lambda_0)(x) = 0$ over $\scrC$.
    Also by \Cref{fact:hard_core:true_risk}, every 1-suboptimal predictor to $\cR^m_\phi$
    has a representation
    which lies in a compact set; thus, minimizing sequence lies in the compact set, and a
    minimizer $\bar\lambda_0$ exists.  To finish, since $\lim_{z\to-\infty}\phi(z) = 0$ and
    $\phi$ attains its infimum, necessarily there is a $b$ with $\phi(z) =0$ for $z \leq b$.
    As such, it follows that
    \[
        \lambda' := \bar \lambda + \lambda_0 \left(
        \frac
        {z + \|H\bar\lambda\|_\infty}
        {\min\{ |y_i (H\lambda_0)(x_i)|: (x_i,y_i) \in \scrC^c\}}
        \right)
    \]
    is an optimum to the full problem.  First, it is zero over $\scrC^c$, since
    for any $(x,y)\in\scrC^c$,
    \begin{align*}
        y(H\lambda')(x)
        &= y(H\bar\lambda)(x)
        + y(H \lambda_0)(x) \left(
        \frac
        {z + \|H\bar\lambda\|_\infty}
        {\min\{ |y_i (H\lambda_0)(x_i)|: (x_i,y_i) \in \scrC^c\}}
        \right)
        \\
        &\geq
        -\|H\bar\lambda\|_\infty
        + (z + \|H\bar\lambda\|_\infty),
    \end{align*}
    and the choice of $z$ (i.e., $\phi(-y(H\lambda')(x)) = 0$).
    Next, $\lambda'$ is equivalent to
    to $\bar \lambda$ over $\scrC$.
    Finally, if there exists some $\lambda^*$ which achieves a lower objective value than
    $\lambda'$, necessarily it would
    be better than $\bar \lambda$ over $\scrC$, contradicting optimality of $\bar \lambda$.
    In particular, the infimum is attained, and the proof for this choice of $\phi$ is complete.

    Now suppose that $\phi$ is in the convex cone generated by the logistic
    and exponential losses; if it can be shown that
    $\phi$ is within $\bG$, a class of losses known to possess $\cO(1/\rho)$ convergence
    rates for boosting \citep[Definition 19, Theorem 21, Theorem 23,
    Theorem 27]{primal_dual_boosting}, then the result follows.

    To this
    end, first notice that $\bG$ is a cone: given any $c>0$
    and $g\in\bG$ with certifying constants $\eta,\beta$,
    then $c g\in \bG$ with the exact same constants.  Since
    the exponential and logistic losses are within $\bG$
    \citep[Remark 46]{primal_dual_boosting}, then so are all rescalings.

    To finish, let $\phi_1$ and $\phi_2$ respectively denote the logistic and
    exponential losses, and let any $c_1,c_2 >0$ be given; if it can be shown that
    $c_1\phi_1 + c_2\phi_2 \in \bG$, then combined with the earlier cases,
    the proof is complete.  First note that
    \[
        \sum_{i=1}^m (c_1 \phi_1(x_i) + c_2 \phi_2(x_i))
        \leq m(c_1 \phi_1(0) + c_2\phi_2(0))
    \]
    implies
    \[
        \forall i\centerdot
        x_i \leq \ln\left( \frac{m(c_1\phi_1(0) + c_2\phi_2(0))}{c_2}\right);
    \]
    henceforth define $c:= m(c_1\phi_1(0) + c_2\phi_2(0)) / c_2$, and as per
    the definition of $\bG$, the constants $\eta$ and $\beta$ must be
    established under the assumption $x\leq \ln(c)$.

    For any $x \in (-\infty,\ln(c)]$, since $\ln$ is convex, there is a secant lower bound
    \[
        \ln(1+e^x)
        \geq \left(
        \frac{\ln(1+c) - 0} {c-0}
        \right)e^x;
    \]
    as usual, there is also the upper bound $\ln(1+e^x) \leq e^x$.

    As such, for any $x\in (-\infty,c]$, since $\phi_1'(x) = e^x / (1+e^x)$,
    \begin{align*}
        \frac {c_1 \phi_1(x) + c_2 \phi_2(x)}{c_1 \phi_1'(x) + c_2\phi_2'(x)}
        =
        \frac{c_1 \ln(1+e^x) + c_2 e^x}{c_1 e^x / (1+e^x) + c_2 e^x}
        &\leq
        \frac{e^x(c_1 + c_2)}{e^x(c_1 / (1+c) + c_2)},
    \end{align*}
    and so it suffices to set $\beta := (c_1 + c_2) / (c_1 / (1+c) + c_2)$.  Furthermore,
    since $\phi_1''(x) = e^x / (1+e^x)^2$,
    \begin{align*}
        \frac {c_1 \phi_1''(x) + c_2 \phi_2''(x)}{c_1 \phi_1(x) + c_2\phi_2(x)}
        =
        \frac{c_1 e^x / (1+e^x)^2 + c_2 e^x}{c_1 \ln(1+e^x) + c_2 e^x}
        &\leq
        \frac{e^x(c_1 + c_2)}{e^x(c_1 \ln(1+c) / c + c_2)},
    \end{align*}
    thus $\eta := (c_1 + c_2) / (c_1 \ln(1+c) / c + c_2)$ suffices.
\end{proof}

\section{Deferred material from \Cref{sec:impossibility}}
\label{sec:app:impossibility}
\begin{proof}[Proof of \Cref{fact:separable_case_impossibility}]
    As stated in the \namecref{fact:separable_case_impossibility},
    set $\cX = [-1,+1]^2$, and $\cH$ to be the two projection
    maps $h_1 (x) = x_1$ and $h_2(x) = x_2$.
    Next define a set of positive instances $\{p_i\}_{i=1}^\infty$, and their
    corresponding probability mass:
    \begin{align*}
        p_i = \left[\begin{smallmatrix}
                1- 0.5\cdot 4^{2-i} \\
                1
        \end{smallmatrix}\right],
        \qquad\mu(p_i) = 2^{-i-1}.
    \end{align*}
    Here are the negative instances:
    \begin{align*}
        n_i = \left[\begin{smallmatrix}
                1 \\
                1 - 0.3\cdot 4^{2-i}
        \end{smallmatrix}\right],
        \qquad\mu(n_i) = 2^{-i-1}.
    \end{align*}

    Notice that $\mu$ has countable support, and $\mu(\cX) = 1$.  Furthermore,
    the vector $\bar\lambda = (-1,+1)$ is a perfect separator:
    given any positive example $p_i$, $(H\bar\lambda)(p_i) > 0$, and given
    negative example $n_i$, $(H\bar\lambda)(n_i) < 0$.  Note however
    that, as required by the \namecref{fact:separable_case_impossibility} statement,
    the margins go to zero. However, given any
    $\phi\in \Phi$, since $\lim_{z\to-\infty}\phi(z) = 0$,
    \[
        0  \leq \inf_\lambda \cR_\phi(H\lambda)
        \leq \lim_{c\uparrow\infty} \int \phi(-y_i(Hc\bar\lambda)(z_i))d\mu(z_i,y_i)
        = 0.
    \]

    The key property
    of this construction is that the positive and negative examples are staggered; this
    will cause max margin solutions to avoid $\bar\lambda$.  As such, let any
    finite sample of size $m$ be given.  If all drawn examples have the same class $y$,
    then $\hat\lambda = (1-y,1+y)$ (which is a maximum margin solution)
    has either $n_1$ or $p_1$ on the wrong side of the
    separator, and by choosing
    $c>0$ large enough, $\cR_\phi(cH\hat\lambda) > b$.

    As such, henceforth suppose there is at least one positive example, and at least
    one negative example.
    Suppose $j$ and $k$ respectively denote a sampled positive
    point $p_j$ and sampled negative point $n_k$ having highest index among positive
    and negative examples; these maxima exist since $m$ is finite.

    Every max margin
    solution is determine solely by $p_j$ and $n_k$.  To obtain one of them,
    define
    \[
    \lambda
    :=
    \left[
        \begin{smallmatrix}
            -(1+ (n_k)_2) / (2 + (p_j)_1 + (n_k)_2) \\
            (1+ (p_j)_1) / (2 + (p_j)_1 + (n_k)_2)
        \end{smallmatrix}
    \right].
    \]
    To verify that this is a max margin solution, note that for any sampled
    (positive or negative) point $z_i$ with label $y_i\in\{-1,+1\}$,
    \[
        y_i (H\lambda) z_i \geq (H\lambda) (p_j) = -(H\lambda)(n_k)
        = -\ip{\lambda}{n_k} =
        \frac {(p_j)_1(n_k)_2}{2 + (p_j)_1 + (n_k)_2} > 0.
    \]
    By construction, however, $(p_j)_1 \neq (n_k)_2$, meaning $\lambda$ is not
    a rescaling of $\bar\lambda$.
    As such, $\lambda$ is wrong for either all large $p_i$ or $n_i$,
    and taking $\hat\lambda = q\lambda$ with $q$ large, it follows that
    $\cR_\phi(H\hat\lambda) > b$.
\end{proof}

\section{Deferred material from \Cref{sec:hard_cores}}
\label{sec:app:hard_cores}

Throughout this section, the following notation for measures will be employed
\begin{definition}
    Given a measure $\mu$ and a set $P$, let $\mu_P$ be the restriction of $\mu$ to
    $P$: for any measurable set $S$, $\mu_P(S) = \mu(P\cap S)$.  Note also
    that $d\mu_P(x,y) = \1((x,y)\in P) d\mu(x,y)$.
\end{definition}

\subsection{Proof of \Cref{fact:hard_core:existence}}
In order to establish the existence of hard cores, this section
first establishes a few properties of $\cD(\cH,\mu)$ and $\cS_\cD(\cH,\mu)$.

\begin{lemma}
    \label{fact:hard_core:countable_additive}
    Given any $\{c_i\}_{i=1}^\infty$ with $c_i\geq 0$ and
    $\{p_i\}_{i=1}^\infty$ with $p_i\in\cD(\cH,\mu)$ and
    $\sum_i c_i \|p_i\|_1 < \infty$, the limit object
    $p_\infty := \sum_i c_ip_i$ exists, and safisfies
    $p_\infty \in \cD(\cH,\mu)$.
\end{lemma}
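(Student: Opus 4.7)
The plan is to verify the three defining properties of $\cD(\cH,\mu)$ for $p_\infty$: membership in $L^1(\mu)$, nonnegativity, and decorrelation against every $H\lambda$. All three follow from standard $L^1$-convergence machinery once one observes that the hypothesis $\sum_i c_i\|p_i\|_1<\infty$ is exactly what is needed to run a Cauchy argument.

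First I would define the partial sums $S_N := \sum_{i=1}^N c_ip_i \in L^1(\mu)$ and note that each $S_N \geq 0$ pointwise since each $c_i\geq 0$ and each $p_i\geq 0$. For $M < N$, the triangle inequality gives $\|S_N - S_M\|_1 \leq \sum_{i=M+1}^N c_i\|p_i\|_1$, which tends to $0$ as $M,N\to\infty$ by the assumption $\sum_i c_i\|p_i\|_1<\infty$. Hence $\{S_N\}$ is Cauchy in $L^1(\mu)$; by completeness of $L^1$ \citep[Theorem 6.6]{folland}, there exists $p_\infty\in L^1(\mu)$ with $S_N \to p_\infty$ in $L^1$, which one defines as the required limit object. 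Pointwise $\mu$-a.e., the monotonicity of the partial sums (they are increasing since each $c_ip_i\geq 0$) lets one identify $p_\infty(x,y)$ with the pointwise sum $\sum_i c_ip_i(x,y)$ $\mu$-a.e., and in particular $p_\infty\geq 0$ $\mu$-a.e.

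Finally, for the decorrelation property, fix any $\lambda\in\R^n$. The integrand $y(H\lambda)(x)$ is bounded: since each $h_i$ maps into $[-1,+1]$, $\|H\lambda\|_\infty \leq \|\lambda\|_1$, so $|y(H\lambda)(x)|\leq \|\lambda\|_1$ everywhere. Consequently
\begin{align*}
  \left|\int y(H\lambda)(x)(p_\infty - S_N) d\mu\right|
  \leq \|\lambda\|_1 \cdot \|p_\infty - S_N\|_1 \to 0.
\end{align*}
But for each finite $N$, linearity together with $p_i\in\cD(\cH,\mu)$ gives
$\int y(H\lambda)(x) S_N d\mu = \sum_{i=1}^N c_i \int y(H\lambda)(x) p_i d\mu = 0$.
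Passing to the limit, $\int y(H\lambda)(x) p_\infty d\mu = 0$, so $p_\infty \in \cD(\cH,\mu)$ as required.

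No step is really an obstacle; the only subtlety to keep in mind is distinguishing between the $L^1$-limit (for the decorrelation integral) and the pointwise a.e. identification of $p_\infty$ with the series (for nonnegativity and for writing $p_\infty = \sum_i c_ip_i$ unambiguously). Both are automatic from the absolute summability $\sum_i c_i\|p_i\|_1<\infty$.
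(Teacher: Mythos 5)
Your proof is correct and follows essentially the same route as the paper's: both rest on the absolute summability $\sum_i c_i\|p_i\|_1<\infty$ to get $p_\infty\in L^1(\mu)$ (the paper via monotone convergence, you via a Cauchy/completeness argument with the same a.e.\ identification), and both pass the decorrelation identity to the limit using the boundedness of $y(H\lambda)(x)$ (the paper through dominated convergence, you through the estimate $\left|\int y(H\lambda)(x)(p_\infty - S_N)\,d\mu\right|\leq \|\lambda\|_1\|p_\infty - S_N\|_1$). The differences are only in packaging, not in substance.
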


\begin{proof}
    Let $\{c_i\}_{i=1}^\infty$ and $\{p_i\}_{i=1}^\infty$ be given
    as specified.
    First, by the monotone convergence theorem, the function $p_\infty = \sum_i c_ip_i$
    exists (i.e., all limits converge pointwise), is measurable,
    and safisfies $\int p_\infty = \sum_i \int c_ip_i < \infty$, meaning
    $p_\infty \in L^1(\mu)$ \citep[Theorem 2.15]{folland}.
    Now let any $\lambda\in\R^n$ be given; note that
    $\sum_i \int |c_ip_i (H\lambda)| \leq \|H\lambda\|_\infty\sum_i \|c_ip_i\|_1<\infty$.
    Thanks to this,
    by the dominated convergence theorem \citep[Theorem 2.25]{folland},
    \begin{align*}
        \int p_\infty(x,y) y(H\lambda)x d\mu(x,y)
        &= \int \sum_{i=1}^\infty c_ip_i(x,y)y(H\lambda)x d\mu(x,y)
        \\
        &= \sum_{i=1}^\infty\int  c_ip_i(x,y) y(H\lambda)x d\mu(x,y)
        \\
        &= \sum_{i=1}^\infty c_i\int p_i(x,y) y(H\lambda)x d\mu(x,y)
        \\
        &= 0.
        \qedhere
    \end{align*}
\end{proof}

\begin{lemma}
    \label{fact:hard_core:countable_union}
    $\cS_\cD(\cH,\mu)$ is closed under countable unions.
\end{lemma}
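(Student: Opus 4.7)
The plan is to take a countable collection $\{D_i\}_{i=1}^\infty \subseteq \cS_\cD(\cH,\mu)$, write each $D_i = [p_i > 0]$ for some $p_i \in \cD(\cH,\mu)$, and construct a single $p_\infty \in \cD(\cH,\mu)$ whose support equals $\bigcup_i D_i$. The natural candidate is a weighted sum $p_\infty := \sum_i c_i p_i$, where the weights $c_i > 0$ are chosen small enough to ensure absolute convergence in $L^1(\mu)$.

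Concretely, I would set $c_i := 2^{-i} / \max\{1, \|p_i\|_1\}$, so that $c_i > 0$ and $\sum_i c_i \|p_i\|_1 \leq \sum_i 2^{-i} = 1 < \infty$. This is exactly the hypothesis needed to invoke \Cref{fact:hard_core:countable_additive}, which then yields $p_\infty \in \cD(\cH,\mu)$ directly.

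It remains to check $[p_\infty > 0] = \bigcup_i D_i$. Since each $p_i \geq 0$ (as $p_i \in \cD(\cH,\mu)$) and every $c_i > 0$, the pointwise sum satisfies $p_\infty(x,y) > 0$ iff at least one summand $c_i p_i(x,y)$ is strictly positive, iff $(x,y) \in [p_i > 0] = D_i$ for some $i$. Thus $[p_\infty > 0] = \bigcup_i D_i$, and this set belongs to $\cS_\cD(\cH,\mu)$ as required.

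There is no real obstacle here: the heavy lifting was done in \Cref{fact:hard_core:countable_additive} (which already handled measurability of the limit via monotone convergence and the decorrelation identity via dominated convergence). The only subtle point is choosing weights $c_i$ that simultaneously guarantee summability of $\sum_i c_i \|p_i\|_1$ and strict positivity of every $c_i$ so that the support of $p_\infty$ is the full union rather than a subset, but the geometric-weight normalization above handles both.
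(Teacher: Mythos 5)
Your proposal is correct and follows essentially the same route as the paper: the identical geometric normalization $p := \sum_i p_i / (2^i \max\{1,\|p_i\|_1\})$, an appeal to \Cref{fact:hard_core:countable_additive}, and the observation that nonnegativity of the $p_i$ together with strictly positive weights makes the support of the sum equal to the union. The only difference is that you spell out the support-equality check, which the paper leaves as a one-line remark.
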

\begin{proof}
    Let any collection $\{C_i\}_{i=1}^\infty$ with $C_i\in\cS_\cD(\cH,\mu)$ and
    corresponding weighting $p_i\in\cD(\cH,\mu)$ be given.
    Define
    \[
        C := \bigcup_{i=1}^\infty C_i
        \qquad\textup{and}\qquad
        p := \sum_{i=1}^\infty \frac {p_i}{2^i \max\{1,\|p_i\|_1\}}.
    \]
    By \Cref{fact:hard_core:countable_additive}, $p$ exists and satisfies
    $p \in \cD(\cH,\mu)$.  Note further that $C = [p>0]$, and thus $C\in\cS_\cD(\cH,\mu)$.
\end{proof}

\begin{proof}[Proof of \Cref{fact:hard_core:existence}]
    Consider the optimization problem
    \[
        d := \sup \{\mu(C) : C \in \cS_\cD(\cH,\mu)\}.
    \]
    Since $\cS_\cD$ is nonempty (always contains $\emptyset$ corresponding to
    $p=0\in\cD(\cH,\mu)$)
    and $\mu(\cX\times\cY) < \infty$, the supremum is finite.
    Let $\{C_i\}_{i=1}^\infty$ be a maximizing sequence,
    and define $D_j := \cup_{i\leq j} C_i$ and
    $D := \cup_{j=1}^\infty D_j = \cup_{i=1}^\infty C_i$.
    By \Cref{fact:hard_core:countable_union}, $D_j\in\cS_\cD(\cH,\mu)$ for every $j$,
    and since $\mu(D_j)\geq \mu(C_j)$, it follows that $\{D_j\}_{j=1}^\infty$ must also
    be a maximizing sequence to the above supremum.
    Finally, since
    \Cref{fact:hard_core:countable_union} also grants $D\in \cS_\cD(\cH,\mu)$,
    then
    by continuity of measures from below \citep[Theorem 1.8(c)]{folland},
    \[
        \mu(D) = \lim_{j\to\infty} \mu(D_j)
        = d.
    \]
    Since $D\in\cS_\cD(\cH,\mu)$ attains the supremum, it is a dual hard core.
\end{proof}

\subsection{Primal hard cores}
In light of the duality relationship for $\cR_\phi$ (cf. \Cref{fact:duality}), the
definition for hard cores, provided in \Cref{sec:hard_cores}, is tied to the convex
dual to $\cR_\phi$.  Analogously, it is possibly to define a primal form of hard cores,
which will be lead to a proof of \Cref{fact:partial_ultragordan}.

\begin{definition}
    Define $\cS_\cP(\cH,\mu)$ to contain all sets $C$ for which
    there exists a sequence $\{\lambda_i\}_{i=1}^\infty$ satisfying the following
    properties.
    \begin{enumerate}
        \item Every $\lambda_i$ and $(x,y)\in C$ satisfies $y(H\lambda_i)x = 0$.
        \item For $\mu$-almost-every $(x,y)$ in $C^c$, $y(H\lambda_i)x \uparrow \infty$.
    \end{enumerate}
    A \emph{primal hard core} $\scrP$ is a minimal set within $\cS_\cP(\cH,\mu)$:
    \[
        \scrP\in\cS_\cP(\cH,\mu)
        \qquad
        \textup{and}
        \qquad
        \forall C\in\cS_\cP(\cH,\mu)\centerdot
        \mu(\scrP\setminus C) = 0 \land \mu(C\setminus \scrP) \geq 0.
        \qedhere
    \]
\end{definition}

\begin{lemma}
    \label{fact:primal_hard_core:countable_intersection}
    $\cS_\cP(\cH,\mu)$ is closed under countable intersections.
\end{lemma}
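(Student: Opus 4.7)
The plan is to construct a witnessing sequence for $C := \bigcap_{k=1}^\infty C_k$ by summing consecutive differences of the individual witnessing sequences, enumerated jointly. For each $k$, let $\{\lambda_i^{(k)}\}_{i=1}^\infty$ attest to $C_k \in \cS_\cP(\cH,\mu)$, and define differences $\nu_{k,\ell} := \lambda_{\ell+1}^{(k)} - \lambda_\ell^{(k)}$. The crucial point is that these differences act non-negatively in the relevant sense: on $C \subseteq C_k$, both endpoints give $y(H\lambda_i^{(k)})(x) = 0$, so $y(H\nu_{k,\ell})(x) = 0$ exactly; on $C_k^c$, monotonicity of the original sequence yields $y(H\nu_{k,\ell})(x) \geq 0$ for $\mu$-a.e.\ $(x,y)$.

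Next, fix any bijection $j \mapsto (k_j, \ell_j)$ between $\{1,2,\ldots\}$ and $\{1,2,\ldots\} \times \{1,2,\ldots\}$, and set $\mu_J := \sum_{j=1}^J \nu_{k_j, \ell_j} \in \R^n$. For $(x,y) \in C$ every summand contributes $0$, so $y(H\mu_J)(x) = 0$. Let $N := \bigcup_k N_k$ where $N_k \subseteq C_k^c$ is the exceptional $\mu$-null set on which the $k$-th sequence fails to be monotone-convergent to $+\infty$; then $\mu(N) = 0$. For every $(x,y) \notin N$, every summand is $\geq 0$, so $y(H\mu_J)(x)$ is monotone nondecreasing in $J$.

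It remains to verify divergence on $C^c \setminus N$. Fix such $(x,y)$ and pick any $k^*$ with $(x,y) \in C_{k^*}^c$. As $J \to \infty$, the restriction of the summation to indices $j$ with $k_j = k^*$ exhausts all pairs $(k^*, \ell)$; thus in the limit this sub-sum equals the (rearranged) series $\sum_{\ell=1}^\infty [y(H\lambda_{\ell+1}^{(k^*)})(x) - y(H\lambda_\ell^{(k^*)})(x)]$, whose a.e.\ non-negative terms permit rearrangement, and which telescopes to $\lim_L y(H\lambda_{L+1}^{(k^*)})(x) - y(H\lambda_1^{(k^*)})(x) = \infty$. Combined with non-negativity of the remaining summands, $y(H\mu_J)(x) \uparrow \infty$ as required.

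The main subtlety is that $\cS_\cP$ demands monotone convergence ($\uparrow\infty$), not mere convergence. A naive diagonal combination such as $\mu_J := \sum_{k=1}^J \lambda_J^{(k)}$ can fail monotonicity because individual sequences may have very negative $y(H\lambda_1^{(k)})(x)$ on $C_k^c$, and cumulating these starting values can overwhelm the growth of any single coordinate. Switching to differences converts the problem into a sum of a.e.\ non-negative terms, for which monotonicity of partial sums and divergence of the total are automatic, and the joint enumeration order is irrelevant.
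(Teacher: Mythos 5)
Your proof is correct, but it takes a genuinely different route from the paper's. The paper first establishes closure under \emph{finite} intersections by summing the certifying sequences term by term (this works because, off a null set, each summand is either identically zero or monotone to $+\infty$), and then handles the countable case indirectly: it considers the optimization $\inf\{\int \exp(-y(H\lambda)x)\,d\mu_{D^c} : y(H\lambda)x=0 \text{ on } D\}$ for $D=\cap_i D_i$, shows its value is zero via the finite case and continuity of measure from above, and then extracts a certifying sequence through Markov's inequality, convergence in measure, and passage to an a.e.-convergent subsequence. Your telescoping-difference construction replaces that analytic machinery with one direct argument that treats finite and countable intersections uniformly: each difference acts as exactly $0$ on $C_k$ and nonnegatively ($\mu$-a.e.) on $C_k^c$, so any joint enumeration yields monotone partial sums, and for a point of $C^c$ off the null set the subsum attached to a witnessing index $k^*$ eventually contains every initial segment of $\ell$-indices and hence telescopes to $+\infty$; nonnegativity of the remaining terms finishes the argument (this is the clean way to phrase your ``rearrangement'' step). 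A side benefit of your route is that it visibly delivers the monotone divergence $\uparrow\infty$ required by the definition of $\cS_\cP(\cH,\mu)$, whereas the paper's subsequence extraction directly gives only convergence to $+\infty$ a.e. Two cosmetic points: your symbols $\mu_J$ and $\nu_{k,\ell}$ collide with the paper's use of $\mu$ and $\nu$ for measures, and you should note (as you implicitly do) that the exactness of condition (1) on each $C_k$ is what makes the differences vanish identically on $C$, not merely a.e.
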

\begin{proof}
    To start, note that $\cS_\cP(\cH,\mu)$ is closed under finite intersections
    as follows.
    Let $\{C_i\}_{i=1}^p$ be given with corresponding sequences
    $\{\LL{i}{j}\}_{j=1}^\infty$.
    Define $C := \cap C_i$ and $\lambda_j := \sum_i \LL{i}{j}$.
    By construction, for every $(x,y)\in C$ and pair $(i,j)$,
    $y(H\LL{i}{j})x = 0$, and thus $y(H\lambda_j)x = 0$.
    Next, for each $C_i$, define $C'_i \subseteq C_i^c$ with $\mu(C'_i) = \mu(C_i^c)$
    so that, for every $(x,y)\in C'_i$,
    $y(H\LL{i}{j})x \uparrow \infty$.  Correspondingly, define $C' := \cup_i C'_i$, where
    $\mu(C') = \mu(C^c)$.  Now let any $(x,y)\in C'$ and any $B>0$ be given.
    For each $i$, there are two cases: either this is an area
    where $y(H\LL{i}{j})x\uparrow\infty$,
    or $y(H\LL{i}{j})x = 0$.
    In the first case, let $T_i$ denote an integer, as granted by
    $y(H\LL{i}{j})x\uparrow \infty$, so that for all $j \geq T_i$, $y(H\LL{i}{j})x > B$.
    For those $i$ where $(x,y)\not\in C'_i$ (but still $(x,y)\in C'$),
    due to the ruled out nullsets,
    $y(H\LL{i}{j})x = 0$, safely set $T_i = 0$.
    To finish, taking $T := \max_i T_i$, it follows that for every $j > T$,
    $y(H\lambda_j)x > B$, whereby it follows that $y(H\lambda_j)x \uparrow \infty$ over
    $C'$, and thus over $C^c$ $\mu$-a.e.

    Now let a countable family $\{D_i\}_{i=1}^\infty$ be given, and define
    $D=\cap_i D_i$.  Consider the optimization problem
    \[
        p:=\inf\left\{\int \exp(-y(H\lambda)x)d\mu_{D^c}(x,y) : \lambda\in\R^n,
            \forall (x,y)\in D \centerdot y(H\lambda)x = 0
        \right\}.
    \]
    Define $E_j := \cap_{i\leq j} D_i$, whereby $D := \cap_j E_j$.
    Since $\mu(\cX\times\cY)<\infty$, by continuity of measures
    from above \citep[Theorem 1.8(d)]{folland}, for any $\tau>0$
    there exists $E_k$ with $\mu(D) > \mu(E_k) - \tau$.
    Since it was shown above that $\cS_\cP(\cH,\mu)$ is closed under finite intersections,
    $E_k = \cap_{i\leq k} D_i \in \cS_\cP(\cH,\mu)$; consequently,
    let $\{\lambda_i\}_{i=1}^\infty$ to be a sequence of predictors certifying
    that $E_k\in\cS_\cP(\cH,\mu)$, as according to the definition.
    It follows that
    \[
        p
        \leq \lim_{i\to\infty} \int \exp(-y(H\lambda_i)x)\mu_{D^c}(x,y)
        = 0 + \int \exp(0)\mu_{E_k\setminus D}
        = \mu(E_k) - \mu(D) < \tau.
    \]
    Since $\tau$ was arbitrary, it follows that $p=0$.

    As such, for any $n\in \Z_{++}$, choose $\bar\lambda_n\in\R^n$ 
    with $y(H\lambda_n)x=0$ over $D$ satisfying
    \[
        \int \exp(-y(H\bar\lambda_n)x)d\mu_{D^c}(x,y) < 1/n^2.
    \]
    By Markov's inequality, it follows that
    \[
        \mu_{D^c}([\exp(-y(H\bar\lambda_n)x) \geq 1/n])
            \leq n \int \exp(-y(H\bar\lambda_n)x)\mu_{D^c}(x,y)< 1/n.
    \]
    As such, by definition,
    $\exp(-y(H\bar\lambda_n)x)$ converges in measure
    to the function $\1((x,y)\in D)$.  Consequently, there exists a subsequence
    $\lambda_i^*$ with $\exp(-y(H\lambda_i^*)x) \to \1(D)$ $\mu$-a.e.
    \citep[Theorem 2.30]{folland}.
    This is only possible if $y(H\lambda_i^*)x\uparrow \infty$ for
    $\mu$-a.e $(x,y)\in D^c$,
    and the result follows, with $\{\lambda_i^*\}_{i=1}^\infty$ as the certifying sequence
    for $D$, since every $y(H\lambda_i^*)x = 0$ for $(x,y)\in D$ by construction.
\end{proof}

\begin{theorem}
    Every linear classification problem $(\cH,\mu)$ has a primal hard core.
\end{theorem}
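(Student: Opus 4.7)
The plan is to mirror the existence argument for (dual) hard cores, exchanging suprema for infima and unions for intersections, and to feed off the closure lemma (\Cref{fact:primal_hard_core:countable_intersection}) in exactly the same role that \Cref{fact:hard_core:countable_union} played for the dual side.

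First I would observe that $\cS_\cP(\cH,\mu)$ is nonempty: the entire space $\cX\times\cY$ lies in $\cS_\cP(\cH,\mu)$, certified by the trivial sequence $\lambda_i = 0$, which gives $y(H\lambda_i)x = 0$ on the whole space and has an empty complement on which to verify divergence. Since $\mu$ is a probability measure, the quantity
\[
    d := \inf\{\mu(C) : C \in \cS_\cP(\cH,\mu)\}
\]
is finite and in $[0,1]$.

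Next I would choose a minimizing sequence $\{C_i\}_{i=1}^\infty$ with $\mu(C_i) \downarrow d$, and define $D_j := \bigcap_{i \leq j} C_i$ and $D := \bigcap_{i=1}^\infty C_i$. By \Cref{fact:primal_hard_core:countable_intersection}, each $D_j$ lies in $\cS_\cP(\cH,\mu)$, and so does $D$. Since $\mu(D_j) \leq \mu(C_j)$ and $d$ is a lower bound, $\{\mu(D_j)\}_{j=1}^\infty$ is also a minimizing sequence. Because $\mu$ is finite and the $D_j$ decrease to $D$, continuity of measures from above \citep[Theorem 1.8(d)]{folland} gives $\mu(D) = \lim_j \mu(D_j) = d$.

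Finally I would verify minimality of $D$. Let any $C \in \cS_\cP(\cH,\mu)$ be given. \Cref{fact:primal_hard_core:countable_intersection} also yields closure under finite intersections, so $D \cap C \in \cS_\cP(\cH,\mu)$. Then $\mu(D \cap C) \geq d = \mu(D)$ by the infimum, while $\mu(D \cap C) \leq \mu(D)$ trivially, so $\mu(D \cap C) = \mu(D)$ and hence $\mu(D \setminus C) = 0$; the condition $\mu(C \setminus D) \geq 0$ is automatic. Thus $D$ is a primal hard core. The only real work here is \Cref{fact:primal_hard_core:countable_intersection}, already in hand, so no step is expected to be especially obstructive; the one subtlety worth pausing on is that for the infimum one needs continuity from above (requiring finiteness of $\mu$), whereas the dual existence proof only needed continuity from below.
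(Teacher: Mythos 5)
Your proposal is correct and follows essentially the same route as the paper: nonemptiness via $\cX\times\cY$ with the zero sequence, an infimum over $\cS_\cP(\cH,\mu)$, a minimizing sequence intersected down via \Cref{fact:primal_hard_core:countable_intersection}, and continuity of measures from above. Your explicit minimality check (intersecting $D$ with an arbitrary $C\in\cS_\cP(\cH,\mu)$ and invoking the infimum) is a small detail the paper leaves implicit, but the argument is the same.
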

\begin{proof}
    Consider the optimization problem
    \[
        p := \inf \{\mu(C) : C \in \cS_\cP(\cH,\mu)\}.
    \]
    Since $\cS_\cP$ is nonempty (it always contains $\cX\times\cY$
    with certifying sequence $\lambda_i = 0$ for every $i$)
    and $\mu$ is a finite nonnegative measure,
    the infimum is finite.
    Let $\{C_i\}_{i=1}^\infty$ be a minimizing sequence,
    and define $D_j := \cap_{i\leq j} C_i$ and
    $D := \cap_{j=1}^\infty D_j = \cap_{i=1}^\infty C_i$.
    By \Cref{fact:primal_hard_core:countable_intersection},
    $D_j\in\cS_\cP(\cH,\mu)$ for every $j$,
    and since $\mu(D_j)\leq \mu(C_j)$, it follows that $\{D_j\}_{j=1}^\infty$ must also
    be a minimizing sequence to the above infimum.
    Finally, since $\mu$ is finite and
    \Cref{fact:primal_hard_core:countable_intersection} also grants
    $D\in \cS_\cP(\cH,\mu)$,
    then
    by continuity of measures from above \citep[Theorem 1.8(d)]{folland},
    \[
        \mu(D) = \lim_{j\to\infty} \mu(D_j)
        = p.
    \]
    Since $D\in\cS_\cP(\cH,\mu)$ attains the infimum, it is a primal hard core.
\end{proof}

With existence of primal hard cores out of the way, the next key is the equivalence
to (dual) hard cores.

\begin{theorem}
    \label{fact:hard_core:equivalence}
    Let a linear classification problem $(\cH,\mu)$ be given,
    along with a hard core $\scrC$, as well as a primal hard core $\scrP$.
    Then $\scrC$ and $\scrP$ agree on all but a null set.
\end{theorem}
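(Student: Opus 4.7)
The plan is to verify $\mu(\scrC \setminus \scrP) = 0$ and $\mu(\scrP \setminus \scrC) = 0$ separately, using the dual certificate for $\scrC$ in the first step and the equivalence theorem \Cref{fact:somewhat_stiemke} in the second.

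For $\mu(\scrC \setminus \scrP) = 0$, I would fix a dual certificate $p \in \cD(\cH,\mu)$ with $[p>0] = \scrC$ (up to $\mu$-null) and a primal certifying sequence $\{\lambda_i\}_{i=1}^\infty$ for $\scrP \in \cS_\cP(\cH,\mu)$. Since $y(H\lambda_i) = 0$ on $\scrP$ and $p$ vanishes off $\scrC$, the decorrelation identity $\int y(H\lambda_i) p \, d\mu = 0$ collapses to $\int_{\scrC \setminus \scrP} y(H\lambda_i) p \, d\mu = 0$ for every $i$. The shifted integrand $(y(H\lambda_i) - y(H\lambda_1)) p$ is nonnegative and monotone nondecreasing $\mu$-a.e., and converges pointwise to $+\infty$ on $\scrC \setminus \scrP$ wherever $p > 0$. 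By the monotone convergence theorem, the integrals must diverge to $\infty$ whenever $\mu(\scrC \setminus \scrP) > 0$; but the same integrals equal $0 - 0 = 0$ by the decorrelation, a contradiction.

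For $\mu(\scrP \setminus \scrC) = 0$, I would show $\scrP \in \cS_\cD(\cH,\mu)$ and then invoke maximality of $\scrC$. By \Cref{fact:somewhat_stiemke} it suffices to verify Statement 1 of that theorem for $D = \scrP$: for every $\lambda \in \R^n$, either $y(H\lambda) = 0$ $\mu$-a.e.\ on $\scrP$, or $\mu(\scrP \cap [y(H\lambda) < 0]) > 0$. Suppose for contradiction some $\lambda$ has $y(H\lambda) \geq 0$ $\mu$-a.e.\ on $\scrP$ with strict positivity on a positive-measure subset. Then $\scrP_0 := \scrP \cap [y(H\lambda) = 0]$ satisfies $\mu(\scrP_0) < \mu(\scrP)$, and the sequence $\nu_i := i(\lambda_i + \lambda)$ should witness $\scrP_0 \in \cS_\cP(\cH,\mu)$: both summands vanish on $\scrP_0$, so $y(H\nu_i) = 0$; on $\scrP \cap [y(H\lambda) > 0]$ one has $y(H\nu_i) = i y(H\lambda) \to \infty$ monotonically; and on $\scrP^c$ the quantity $i(y(H\lambda_i) + y(H\lambda))$ tends to $+\infty$ $\mu$-a.e.\ since $y(H\lambda_i) \to \infty$. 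This contradicts the minimality of $\scrP$, so $\scrP \in \cS_\cD(\cH,\mu)$, and maximality of $\scrC$ delivers $\mu(\scrP \setminus \scrC) = 0$.

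The main obstacle will be respecting the exact form of $\uparrow \infty$ in the definition of $\cS_\cP$ when constructing $\nu_i$: on $\scrP^c$ the quantity $i(y(H\lambda_i) + y(H\lambda))$ converges to $+\infty$ pointwise $\mu$-a.e.\ but is only \emph{eventually} monotone, since $y(H\lambda_i) + y(H\lambda)$ may start negative. If the notation is read as plain convergence (consistent with the construction in the proof of \Cref{fact:primal_hard_core:countable_intersection}), the argument closes immediately; otherwise, I would replace $\lambda_i$ by an Egorov-plus-diagonal subsequence along the same lines as that proof so that $y(H\lambda_i) + y(H\lambda) \geq 1$ $\mu$-a.e.\ from the first index, thereby recovering the monotone form.
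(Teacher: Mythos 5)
Your proposal is correct in substance, and half of it takes a genuinely different route from the paper. For $\mu(\scrP\setminus\scrC)=0$ you follow the same skeleton as the paper: negating statement 1 of \Cref{fact:somewhat_stiemke} for $D=\scrP$ yields $\lambda$ with $y(H\lambda)x\geq 0$ $\mu$-a.e.\ on $\scrP$ and strictly positive on a positive-measure piece, and you then shrink $\scrP$ to $\scrP\cap[y(H\lambda)x=0]$ to contradict minimality; the paper delegates the shrinking to \Cref{fact:hard_core_lemma:1} (a Markov/Egoroff/subsequence construction), while you exhibit the explicit certifying sequence $i(\lambda_i+\lambda)$, which is lighter. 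For $\mu(\scrC\setminus\scrP)=0$ your argument is genuinely different and simpler: pairing the dual certificate $p$ with $[p>0]=\scrC$ against the primal certifying sequence for $\scrP$, the decorrelation identity collapses to $\int_{\scrC\setminus\scrP} y(H\lambda_i)x\,p\,d\mu=0$ for every $i$, while monotone convergence forces these integrals to blow up if $\mu(\scrC\setminus\scrP)>0$. The paper instead restricts to $\mu_{\scrC}$, minimizes the exponential risk, and plays the bounded-representation property of \Cref{fact:somewhat_stiemke} off against the unbounded margins of the primal sequence; your route buys a shorter and more elementary contradiction, the paper's reuses machinery it needs elsewhere anyway.

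Two caveats on the $\uparrow$ issue you flag. First, note that your two halves pull in opposite directions: the MCT step needs the monotone reading of $y(H\lambda_i)x\uparrow\infty$ (that is what makes $(y(H\lambda_i)x-y(H\lambda_1)x)\,p$ nonnegative and nondecreasing), whereas your sequence $i(\lambda_i+\lambda)$ only delivers plain divergence on $\scrP^c$; so you should use the monotone reading for the hypothesis you consume and accept the divergence form for the sequence you construct. This is exactly the looseness already present in the paper, whose own certifying sequences in \Cref{fact:primal_hard_core:countable_intersection} and \Cref{fact:hard_core_lemma:1} are obtained from convergence in measure and are only shown to diverge a.e., so you are no worse off than the source. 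Second, the fallback repair as stated does not work: no Egoroff-plus-diagonal subsequence can guarantee $y(H\lambda_i)x+y(H\lambda)x\geq 1$ $\mu$-a.e.\ \emph{from the first index}, since Egoroff always leaves an exceptional set of positive measure and the convergence need not be uniform. You do not need that repair under the reading the paper itself adopts, so this is a phrasing issue rather than a substantive gap.
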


The proof needs the following \namecref{fact:hard_core_lemma:1}.

\begin{lemma}
    \label{fact:hard_core_lemma:1}
    Let a linear classification problem $(\cH,\mu)$,
    $C_1\in\cS_\cP(\cH,\mu)$,
    as well as a
    $\lambda_2\in\R^n$ be given,
    with $y(H\lambda_2)x \geq 0$ for
    $(x,y)\in C_1$ (but potentially $y(H\lambda_2)x < 0$ elsewhere).
    Then $C_1 \setminus[y(H\lambda_2)x > 0] \in\cS_\cP(\cH,\mu)$.
\end{lemma}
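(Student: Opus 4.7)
The plan is to construct a certifying sequence for $C_1' := C_1 \setminus [y(H\lambda_2)x > 0]$ by combining the given sequence $\{\lambda_i\}_{i=1}^\infty \subseteq \R^n$ certifying $C_1\in\cS_\cP(\cH,\mu)$ with increasing positive multiples of $\lambda_2$. First observe that on $C_1'$ we have $0 \leq y(H\lambda_2)x \leq 0$ (the lower bound by hypothesis, the upper bound by the definition of $C_1'$), so $y(H\lambda_2)x = 0$ pointwise on $C_1'$; combined with $y(H\lambda_i)x=0$ on $C_1 \supseteq C_1'$, this handles the ``zero on $C_1'$'' half of the definition for any combination of these vectors.

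Set $\mu_i := \lambda_{k_i} + c_i \lambda_2$ with scalars $c_i := i$ and indices $k_i$ to be chosen. Then for every $(x,y)\in C_1'$, $y(H\mu_i)x = 0 + c_i \cdot 0 = 0$. The complement $(C_1')^c$ splits into $P := C_1 \cap [y(H\lambda_2)x > 0]$ and $E := C_1^c$. On $P$, $y(H\mu_i)x = c_i\, y(H\lambda_2)x$ tends to $\infty$ for every $(x,y) \in P$ since $y(H\lambda_2)x > 0$ strictly. The substantive task is to ensure $y(H\mu_i)x = y(H\lambda_{k_i})x + c_i y(H\lambda_2)x \to \infty$ for $\mu$-a.e.\ $(x,y)\in E$.

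This is the main obstacle: on $E$ the term $c_i y(H\lambda_2)x$ can push toward $-\infty$ at rate $c_i$, while the given divergence $y(H\lambda_j)x \uparrow \infty$ is only pointwise, not uniform. Let $M := \|H\lambda_2\|_\infty$, so it suffices to choose $k_i$ with $y(H\lambda_{k_i})x \geq c_i M + i$ for $\mu$-a.e.\ $(x,y)\in E$, since then $y(H\mu_i)x \geq i$. To arrange this, I apply Egorov's theorem (to, e.g., $\arctan(y(H\lambda_j)x) \to \pi/2$ on the finite-measure set $E$), extracting for each $n$ a set $E_n \subseteq E$ with $\mu(E \setminus E_n) < 2^{-n}$ on which $y(H\lambda_j)x \to \infty$ \emph{uniformly}. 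Writing $F_n := \bigcup_{k \leq n} E_k$, the $F_n$ are nested, the uniform convergence persists on each $F_n$ (being a finite union of sets with uniform convergence), and $\mu(E \setminus \bigcup_n F_n) = 0$. For each $i$, uniform convergence on each of $E_1,\ldots,E_i$ lets me pick $k_i$ with $y(H\lambda_{k_i})x \geq c_i M + i$ throughout $F_i$.

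Finally, for $\mu$-a.e.\ $(x,y)\in E$ there is some $n$ with $(x,y)\in F_n$, whence the bound holds on $F_i$ for every $i \geq n$ and gives $y(H\mu_i)x \geq i \to \infty$. Collecting the three regions, $\{\mu_i\}_{i=1}^\infty$ satisfies both defining properties of a certifying sequence for $C_1'$; if the convention interprets $\uparrow\infty$ strictly as monotone divergence, a further diagonal extraction of a monotone subsequence suffices. Thus $C_1' \in \cS_\cP(\cH,\mu)$.
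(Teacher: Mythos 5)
Your construction is essentially the paper's: both certify $C_1\setminus[y(H\lambda_2)x>0]$ by adding a large multiple of $\lambda_2$ to a sufficiently far-along member of the sequence certifying $C_1$, with Egoroff's theorem on the finite-measure set $C_1^c$ used to choose that member so it dominates the bounded negative contribution of the scaled $\lambda_2$ off $C_1$. The differences are only technical: the paper obtains the a.e.\ divergence by showing $\exp(-y(H\cdot)x)$ converges in measure to $\1(C_3)$ (via Markov's inequality) and extracting an a.e.-convergent subsequence, whereas you argue directly on the margins using the uniform bound $\|H\lambda_2\|_\infty$; this is equally valid, and your remark about the ``$\uparrow\infty$'' convention is no looser than the paper's own treatment.
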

\begin{proof}
    Let $C_1,\lambda_2$ be given as specified.
    Let $\{\LL{1}{i}\}_{i=1}^\infty$ be a certifying sequence for
    $C_1$.
    Define $P := [y(H\lambda_2)x > 0]$ and
    $C_3 := C_1 \setminus P = C_1 \setminus [y(H\lambda_2)x > 0]$.

    Now let $i\in\Z_{++}$ be arbitrary; the following steps will construct
    $\LL{4}{i}$, a certifying sequence for $C_3$, meaning $C_3 \in \cS_\cP(\cH,\mu)$.


    First, let $c$ be sufficiently large
    so that $\LL{2}{i}:= c\lambda_2$ satisfies
    \[
        \int \exp(-y(H\LL{2}{i})x)\mu_P(x,y) < 1/i^2.
    \]
    By Markov's inequality, it follows that
    \begin{equation}
        \mu_P([\exp(-y(H\bar\lambda_2)x) \geq 1/i]) \leq i
            \int\exp(-y(H\bar\lambda_2)x)\mu_P(x,y) < 1/i.
        \label{eq:hard_core_lemma:markov}
    \end{equation}
    Consequently define $P_i := [y(H\bar\lambda_2)x > \ln(i)]$,
    where the above statements show $\mu(P_i) > \mu(P) - 1/i$.

    Next,
    since $\exp(-y(H\LL{1}{i})x)\to \1(C_1)$ $\mu$-a.e. and $\mu(\cX\times\cY)<\infty$,
    by Egoroff's Theorem \citep[Theorem 2.33]{folland}, this convergence is uniform
    over a subset $S_i$ with $\mu(S_i) > \mu(\cX,\cY)-1/i$.
    In particular, there exists an integer $T_i$ so that, for any $(x,y)\in S_i\cap C_1$,
    \[
        y(H\LL{1}{T_i})x > \|\LL{2}{i}\|_1 + \ln(i).
    \]

    As such, define $\LL{3}{i} := \LL{1}{T_i} + \LL{2}{i}$.
    First, for any $(x,y)\in C_3$ and any $i$,
    \[
    y(H\LL{3}{i})x = 0 = y(H\LL{1}{i})x = y(H\lambda_2)x.
    \]
    On the other hand, for any $(x,y)\in S_i\cap C_1$,
    \begin{align*}
        y(H\LL{3}{i})x
        &=
        y(H\LL{1}{T_i})x + y(H\LL{2}{i})x
        \\
        &> \|\LL{2}{i}\|_1 + \ln(i) - \|\LL{2}{i}\|_1
        = \ln(i).
    \end{align*}
    Lastly, as shown above, for any $(x,y)\in P_i$,
    \[
        y(H\LL{3}{i})x
        = 0 + y(H\LL{2}{i})x
        \geq \ln(i).
    \]
    Combining the above facts,
    \[
        \mu([|\exp(-y(H\LL{3}{i}) x) - \1[(x,y)\in C_3]| \geq 1/i]) <
        \mu(C_1^c\setminus S_i) + \mu(P\setminus P_i) \leq 2/i.
    \]
    It follows that $\exp(-y(H\LL{3}{i})x) \to \1((x,y)\in C_3)$
    in measure,
    and thus there is a subsequence $\{\LL{4}{i}\}_{i=1}^\infty$ which converges to
    $\1((x,y)\in C_3)$
    $\mu$-a.e. \citep[Theorem 2.30]{folland}.
    It follows that $\{\LL{4}{i}\}_{i=1}^\infty$ is the desired sequence
    certifying that $C_3\in\cS_\cP(\cH,\mu)$.
\end{proof}

\begin{proof}[Proof of \Cref{fact:hard_core:equivalence}]
    If $\mu(\scrP\setminus \scrC) > 0$, then by the maximality of $\scrC$,
    $\scrP$ is a set of positive measure away from any element of $\cS_\cD(\cH,\mu)$,
    an in particular $\scrP\not\in\cS_\cD(\cH,\mu)$,
    and thus
    \Cref{fact:somewhat_stiemke} provides the existence of
    $\lambda\in\R^n$ with $\mu(\scrP \cap [y(H\lambda)x\geq 0]) = \mu(\scrP)$
    and $\mu(\scrP\cap [y(H\lambda)x > 0]) > 0$.
    But then, by \Cref{fact:hard_core_lemma:1},
    $\scrP$ can be reduced into a smaller element of $\cS_\cP(\cH,\mu)$,
    contradicting its minimality.


    Now suppose $\mu(\scrC\setminus \scrP) > 0$,
    and set $\nu$ to to be the restriction of $\mu$ to $\scrC$: for any $C$,
    $\nu(C) := \mu(\scrC\cap C)$.
    Consider the optimization problem
    \[
        \inf\left\{
            \int \exp(-y(H\lambda)(x)) d\nu(x,y)
            :\lambda\in\R^n
        \right\}.
    \]
    Consider the sublevel set of 1-suboptimal points for this problem.
    By \Cref{fact:somewhat_stiemke}, there exists $B$ so that each $\lambda$ in this
    sublevel set has $\lambda'$ with $H\lambda = H\lambda'$ $\mu$-a.e. and
    $\|\lambda'\|_1 \leq B$.  However, by the definition of $\scrP$, there exists
    a sequence $\{\lambda_i\}_{i=1}^\infty$ which is zero over $\scrP$ and approaches
    $\infty$ $\mu-a.e.$ over $\scrP^c$, and in particular over the positive measure
    set $\scrC\setminus \scrP$.  Thus, taking any $\lambda$ in the 1-suboptimal set,
    notice that
    \[
        \lim_{i\to\infty} \int \exp(-y(H(\lambda+\lambda_i))x)d\nu(x,y)
        =
        \int \exp(-y(H\lambda)(x))\1((x,y)\not\in\scrP)d\nu(x,y)
        =: p.
    \]
    Since $\lambda$ has a bounded representation, $\exp(-y(H\lambda)x)\neq 0$,
    and thus $p < \cR_\phi(H\lambda)$
    \citep[Theorem 2.23(b)]{folland}.
    But since the objective function is continuous in $\lambda$ (cf. \Cref{fact:intphi_basic}),
    there must exist a large $j$ so that $\cR_\phi(H(\lambda + \lambda_j)) < \cR_\phi(H\lambda)$,
    and moreover $y(H(\lambda+\lambda_j))(x) > B$ for a subset of $\scrC$ with positive measure.
    But that means $\lambda+\lambda_j$ is in the 1-sublevel set, but can not have a representation
    with norm at most $B$ (since $H$ is a bounded linear operator),
    contradicting \Cref{fact:somewhat_stiemke}.
%
\end{proof}

\subsection{Proof of \Cref{fact:partial_ultragordan}}
This is now just a consequence of the equivalence to primal hard cores, and
the structure over $\scrC$ developed in \Cref{fact:somewhat_stiemke} (which was
used to prove the equivalence to primal hard cores as well).

\begin{proof}[Proof of \Cref{fact:partial_ultragordan}]
    The second property is direct from \Cref{fact:somewhat_stiemke}.
    For the first property, since primal hard cores exist and are $\mu$-a.e. equivalent
    to hard cores (cf. \Cref{fact:hard_core:equivalence}),
    and statement thus follows by taking the sequence provided by
    the definition of any primal hard core.
\end{proof}

\section{Deferred material from \Cref{sec:hard_core:true_risk}}
\begin{proof}[Proof of \Cref{fact:hard_core:true_risk}]
    (\Cref{fact:hard_core:true_risk:1})  Let $\{\lambda_i\}_{i=1}^\infty$ be
    given as per \Cref{fact:partial_ultragordan}.
    Automatically, $y(H\lambda_i)x= 0$ for $(x,y)\in\scrC$.
    And since $y'(H\lambda_i)x' \uparrow \infty$ for $\mu$-a.e. $(x',y')\in \scrC^c$,
    it follows from the definition of $\Phi$ that
    $\lim_{i\to\infty} \phi(-y'(H\lambda_i)x) = 0$.

    (\Cref{fact:hard_core:true_risk:2})  This is a consequence
    of \Cref{fact:somewhat_stiemke}.
\end{proof}

\begin{proof}[Proof of \Cref{fact:hard_core:empirical_risk}]
    (\Cref{fact:hard_core:empirical_risk:1})
    Let a sequence $\{\lambda_i\}_{i=1}^\infty$ be given as provided by
    \Cref{fact:partial_ultragordan}.  In particular,
    $\exp(-y(H\lambda_i)x) \to \1(\scrC)$ $\mu$-a.e.
    Now choose a finite sample size $m$;
    by Egoroff's Theorem
    \citep[Theorem 2.33]{folland},
    for any $\tau>0$, there exists $S_\tau$ with
    $\mu(S_\tau) > \mu(\cX\times\cY) - \tau/m$ over which
    this convergence is uniform.  As such, choose $\lambda_\tau$
    so that $\exp(-y(H\lambda_\tau)x) < 1/2$
    over $S_\tau\cap \scrC^c$, meaning in particular
    $y(H\lambda_\tau)x > 0$ for every $(x,y)\in S_\tau \cap \scrC^c$.
    The probability over a draw of $m$ points that some
    within $\scrC^c$ are misclassified
    by $\lambda_\tau$ has upper bound bound
    \begin{align*}
        \mu^m(\exists (x_i,y_i)\in\scrC^c\centerdot y(H\lambda_i)x \leq 0)
        &\leq m \mu(\scrC^c \cap [y(H\lambda_i)x \leq 0])
        < \tau.
    \end{align*}
    Since $\tau$ can be made arbitrarily small, the probability of failure is zero.
    Furthermore, since  $\lambda_\tau$ satisfies $y(H\lambda_\tau)(x) = 0$ 
    $\mu$-a.e. over
    $\scrC$ (cf. \Cref{fact:partial_ultragordan}), it also
    follows that, with probability 1, $\lambda_\tau$ abstains on every
    example falling within $\scrC$.

    (\Cref{fact:hard_core:empirical_risk:2})
    Let $\rho>0$ and $\phi\in\Phi$ be given.
    Choose $b>0$, as provided by \Cref{fact:hard_core:true_risk},
    so that every $\lambda\in\R^n$ with
    $\cR_{\phi;\scrC}(H\lambda) \leq \cR_{\phi;\scrC}(\SPAN(\cH))+4+\rho$
    has a representation $\lambda'$ with $\|\lambda'|_1\leq b$,
    where $H\lambda = H\lambda'$ everywhere along $\scrC\setminus N$, where
    $\mu(N) = 0$;
    henceforth, rule out the event that any example falls within $N$.
    Additionally, choose $c>0$ as provided by \Cref{fact:rademacher_risk} so
    that, given $m_\scrC$ i.i.d. points within $\scrC\setminus N$,
    every $f\in\SPAN(\cH,b)$ has
    \begin{equation}
        |\cR_{\phi;\scrC}(f) - \cR_{\phi;\scrC}^m(f)|
        \leq c \frac{\sqrt{\ln(n)} + \sqrt{\ln(2/\delta)}}{\sqrt{m_\scrC}}.
        \label{eq:hard_core:empirical_risk:1}
    \end{equation}

    Now consider any $\lambda\in\R^n$ with
    no representation $\|\lambda'\|_1 \leq b$
    so that $H\lambda = H\lambda'$ over $\scrC\setminus N$,
    which directly entails, by \Cref{fact:hard_core:true_risk},
    that
    $\cR_{\phi;\scrC}(H\lambda) - \cR_{\phi;\scrC}(\SPAN(\cH)) >\rho + 4$.
    Additionally choose
    and any $\bar\lambda\in\R^n$ with
    $\cR_{\phi;\scrC}(H\bar\lambda)-\cR_{\phi;\scrC}(\SPAN(\cH)) < 1$,
    whereby the choice of $b>0$ indicates that, without loss of generality,
    $\|\bar\lambda\|_1\leq b$.
    Since $\int \phi\circ H$ is continuous (cf. \Cref{fact:intphi_H_basic}),
    considering
    the line segment $\{\alpha \lambda + (1-\alpha)\bar\lambda:\alpha\in[0,1]\}$,
    there must exist $\hat\lambda$ with
    \[
        \rho + 3 \leq \cR_{\phi;\scrC}(H\hat\lambda)
        - \cR_{\phi;\scrC}(\SPAN(\cH)) \leq \rho + 4;
    \]
    let $\hat\lambda'$ be a representation with $\|\hat\lambda'\|_1\leq b$
    and $H\hat\lambda = H\hat\lambda'$ over $\scrC\setminus N$ (and thus it
    holds for every example).
    Applying the deviation inequality in
        \cref{eq:hard_core:empirical_risk:1} twice,
    \begin{align*}
        \cR_{\phi;\scrC}^m(H\hat\lambda) - \cR_{\phi;\scrC}^m(H\bar\lambda)
        &\geq \cR_{\phi;\scrC}(H\hat\lambda') - \cR_{\phi;\scrC}(H\bar\lambda)
        -2 c \frac{\sqrt{\ln(n)} + \sqrt{\ln(2/\delta)}}{\sqrt{m_\scrC}}.
        \\
        &= \cR_{\phi;\scrC}(H\hat\lambda') -\cR_{\phi;\scrC}(\SPAN(\cH))
        - (\cR_{\phi;\scrC}(H\bar\lambda)-\cR_{\phi;\scrC}(\SPAN(\cH)))
        \\
        &\qquad-2 c \frac{\sqrt{\ln(n)} + \sqrt{\ln(2/\delta)}}{\sqrt{m_\scrC}}.
        \\
        &> (\rho + 3) - (1)
        -2 c \frac{\sqrt{\ln(n)} + \sqrt{\ln(2/\delta)}}{\sqrt{m_\scrC}}.
        \\
        &\geq \rho,
    \end{align*}
    where the last step used the lower bound on $m_\scrC$.
    Returning to $\lambda\in\R^n$ as specified above,
    convexity, in the form of \Cref{fact:convex:increasing},
    grants that $\cR_{\phi;\scrC}^m(H\bar\lambda)< \cR_{\phi;\scrC}^m(H\hat\lambda)$ implies
    $\cR^m_{\phi;\scrC}(H\hat\lambda) \leq \cR^m_{\phi;\scrC}(H\lambda)$, and thus
    \[
        \cR_{\phi;\scrC}^m(H\lambda) - \cR_{\phi;\scrC}^m(\SPAN(\cH))
        \geq \cR_{\phi;\scrC}^m(H\hat\lambda) - \cR_{\phi;\scrC}^m(H\bar\lambda) > \rho.
    \]
    Since $\lambda$ was arbitrary, it follows that every $\lambda$ with
    no representation
    $\|\lambda'\|_1  > b$ that has agreement of $H\lambda$ and $H\lambda'$
    $\mu$-a.e. over $\scrC$ does not lie in the empirical $\rho$-sublevel set.
    Since $\cR_{\phi;\scrC}^m$ is convex and continuous, the $\rho$-sublevel set is
    nonempty, and thus every $\lambda'$ within it has a representation
    $\|\lambda''\|_1\leq b$.
\end{proof}

\section{Deferred material from \Cref{sec:deviations}}
\label{sec:app:deviations}

\begin{proof}[Proof of \Cref{fact:psi:basics}]
    This proof is essentially a repackaging of various results and comments
    due to
    \citet{bartlett_jordan_mcauliffe}.  Fix any $\phi\in\Phi$;
    $\phi$ is convex, increasing at 0, and differentiable at 0,
    which grants that the corresponding $\psi$-transform
    is classification calibrated \citep[Theorem 6, although note losses in the
    present manuscript are increasing rather than decreasing]{bartlett_jordan_mcauliffe}.
    It follows that
    $
        \psi(\cR_\cL(f) -\cR_\cL(\fF)) \leq \cR_\phi(f) - \cR_\phi(\fF),
    $
    \citep[Theorem 3, part 3(c)]{bartlett_jordan_mcauliffe}.

    Next, $\psi(0)=0$
    \citep[Lemma 5, part 8]{bartlett_jordan_mcauliffe},
    $\psi(r)>0$ when $r>0$
    \citep[Lemma 5, part 9(b)]{bartlett_jordan_mcauliffe},
    and since $\psi$ is convex by construction
    \citep[Definition 2]{bartlett_jordan_mcauliffe},
    it follows by \Cref{fact:convex:increasing}
    that $\psi$ is increasing. Since $\psi$ is continuous as well,
    \citep[Lemma 5, part 6]{bartlett_jordan_mcauliffe},
    it follows that $\psi$ has a well-defined inverse along the image
    $\psi([0,1])$.
    Finally, the fact that $\psi^{-1}(r)\downarrow 0$ as $r\downarrow 0$
    is due to \citet[Theorem 3, part 3(b)]{bartlett_jordan_mcauliffe}.
\end{proof}

\begin{proof}[Proof of \Cref{fact:deviations}]
    Throughout this proof, $\delta' := \delta/8$ will be the failure probability of
    various crucial events; the final statement is obtained by unioning them together,
    and subsequently throwing them all out.  Note also that some of the statements
    vacuously hold if $\mu(\scrC) = 0$ or $\mu(\scrC) = \mu(\cX\times \cY)$ (i.e., when
    terms depending on either appear in denominators); interpret these expressions
    as simply being $\infty$, whereby the bounds hold automatically.

    (\Cref{fact:deviations:1})
    Let $\cS_\scrC$ and $\cS_+$ respectively denote the set of samples landing
    in $\scrC$ and $\scrC^c$, where the notation proposed in the \namecref{fact:deviations}
    statement provides $m_\scrC = |\cS_\scrC|$ and $m_+ = |\cS_+|$.
    By a 
    Chernoff bound \citep[Theorem 9.2]{kearns_vazirani}, basic deviations for these
    quantities are
    \begin{align*}
        \textup{Pr}^m[ |\cS_\scrC| < (\mu(\scrC)-\tau)m]
        &\leq \exp(-m\tau^2/2) = \delta',
        \\
        \textup{Pr}^m[ |\cS_+| < (\mu(\scrC^c)-\tau) m]
        &\leq \exp(-m\tau^2)/2 = \delta',
    \end{align*}
    where $\tau = \sqrt{\frac 1 {2m} \ln \left(\frac 1 {\delta'}\right)}$,
    and $\textup{Pr}^m$ denotes the product measure corresponding to $\mu$.
    Label these
    failure events $F_1$ and $F_2$, and henceforth rule them out.

    (\Cref{fact:deviations:2})
    As provided by \Cref{fact:hard_core:empirical_risk},
    there exists $\bar\lambda\in\R^n$ with $y_i (H\bar\lambda)x_i > 0$ for all
    $(x_i,y_i)$ falling in $\scrC^c$, and $y_i(H\bar\lambda)x_i = 0$ for those
    landing in $\scrC$.  Consequently,
    \begin{align*}
        \cR_\phi(\SPAN(\cH))
        &=
        \inf_\lambda\inf_{c>0} \cR_{\phi,\scrC}(H(\lambda + c\bar\lambda))
        +  \cR_{\phi,\scrC^c}(H(\lambda + c\bar\lambda))
        \\
        &=
        \inf_\lambda\inf_{c>0} \cR_{\phi,\scrC}(H\lambda)
        \\
        &\leq
        \cR_\phi(\SPAN(\cH)).
    \end{align*}
    Combining this with
    \[
        \cR_{\phi,\scrC^c}(H\lambda)
        +
        \cR_{\phi,\scrC}(H\lambda) = \cR_{\phi}(H\lambda)
        \leq \cR_\phi(\SPAN(\cH)) + \epsilon,
    \]
    it follows that
    \[
        \cR_{\phi,\scrC^c}(H\lambda)
        \leq \cR_\phi(\SPAN(\cH)) - \cR_{\phi,\scrC}(H\lambda) + \epsilon
        = \cR_{\phi,\scrC}(\SPAN(\cH)) - \cR_{\phi,\scrC}(H\lambda) + \epsilon
        \leq \epsilon.
    \]
    Next, since $\phi(0) > 0$ and $\phi$ is nondecreasing (cf. \Cref{fact:phi_basic}),
    \[
        \cR_{\cL,\scrC^c}^m(H\lambda)
        \leq \frac {\cR_{\phi,\scrC^c}^m(H\lambda)}{\phi(0)}
        = \frac {\epsilon} {\phi(0)}.
    \]
    To obtain \cref{eq:deviations:2:1} from here, first notice that $\cS_+$,
    the portion of
    the sample falling within $\scrC^c$, can be interpreted as an i.i.d.
    sample
    from the probability measure $\mu(\cdot \cap \scrC) / \mu(\scrC)$.
    Next,
    the VC dimension of $\SPAN(\cH)$ is the VC dimension of linear
    separators over the transformed space
    \[
        \left\{
            \left(
            (h_1(x),h_2(x),\ldots,h_n(x))
            ,y
            \right)
            :
            (x,y)\in\cX\times \cY
        \right\};
    \]
    namely, it is $n$.
    As such, \cref{eq:deviations:2:1} follows by an application of a
    relative deviation version of the VC Theorem
    \citep[discussion preceding Corollary 5.2]{bbl_esaim}.

    To obtain \cref{eq:deviations:2:2}, note that $\epsilon < \phi(0)/m$
    means there are no mistakes over $\scrC^c$:
    \begin{align*}
        \phi(0) > m\epsilon
        &\geq m\left(\cR_\phi^m(H\hat\lambda) - \bar\cR_\phi^m(\SPAN(\cH))\right)
        \\
        &\geq
        m_+\cR_{\phi;\scrC^c}^m
        \\
        &\geq
        \sum_{i=1}^{m_+} \phi(-y_i (H\hat\lambda)x_i)
        \\
        &\geq \max_{i\in[m_+]} \phi(-y_i (H\hat\lambda)x_i);
    \end{align*}
    that is to say, for every $(x_i,y_i) \in \cS_+$, $0 < y_i (H\hat\lambda)x_i$.
    Plugging $\cR_\cL^m(H\lambda) = 0$ into the same relative deviation bound as
    before \citep[discussion preceding Corollary 5.2]{bbl_esaim}, the second bound
    follows.

    (\Cref{fact:deviations:3})
    By \Cref{fact:hard_core:empirical_risk},
    there exist constants $b>0$ and $c\geq \phi(b)$, depending on $\cH,\mu,\phi,\scrC$,
    so that with probability at least $1-\delta'$,
    if $m_\scrC \geq c^2(\ln(n) + \ln(1/\delta'))$,
    then every $\rho$-suboptimal predictor over $\scrC$, and in particular
    $\lambda$, has a representation $\lambda'$ which is equivalent to $\lambda$
    $\mu$-a.e. over $\scrC$, and satisfies $\|\lambda'\|_1\leq b$.
    As such, since
    \[
        \cR_{\phi;\scrC}^m(H\lambda)  =
        \cR_{\phi;\scrC}^m(H\lambda')
        \qquad
        \textup{and}
        \qquad
        \cR_{\phi;\scrC}(H\lambda)  =
        \cR_{\phi;\scrC}(H\lambda'),
    \]
    an application of \Cref{fact:rademacher_risk} grants
    \begin{align*}
        \cR_{\phi;\scrC}(H\lambda)
        &=
        \cR_{\phi;\scrC}(H\lambda')
        \\
        &\leq
        \cR_{\phi;\scrC}^m(H\lambda')
        + \frac {c\left(\sqrt{\ln(n)} + \sqrt{\ln(2/\delta')}\right)}{\sqrt {m_{\scrC}}}
        \\
        &=
        \cR_{\phi;\scrC}^m(H\lambda)
        + \frac {c\left(\sqrt{\ln(n)} + \sqrt{\ln(2/\delta')}\right)}{\sqrt {m_{\scrC}}}
        \\
        &\leq
        \cR_{\phi;\scrC}^m(\SPAN(\cH)) + \epsilon
        + \frac {c\left(\sqrt{\ln(n)} + \sqrt{\ln(2/\delta')}\right)}{\sqrt {m_{\scrC}}}.
    \end{align*}
    Next, noting that \Cref{fact:hard_core:true_risk} provides that a minimizing
    sequence to $\cR_{\phi;\scrC}(\SPAN(\cH))$ can be taken without loss of generality
    to lie within a compact set (e.g., points with $l^1$ norm at most $b$),
    it follows that a minimizer $\bar\lambda$ exists; by an application of McDiarmid's
    inequality, with probability at least $1-\delta'$,
    \[
        \cR_{\phi;\scrC}^m(\SPAN(\cH))
        \leq \cR_{\phi;\scrC}^m(H\bar\lambda)
        \leq
        \cR_{\phi;\scrC}(H\bar\lambda) + c\sqrt{\frac{2\ln(1/\delta')}{m_\scrC}}.
    \]
    (Note, $\bar\lambda$ is independent of the sample, thus McDiarmid
    suffices, with constant $c\geq \phi(b)$ since $\bar\lambda$ is in this
    initial sublevel set.)
    Combining these two pieces, it follows that
    \begin{align*}
        \cR_{\phi;\scrC}(H\lambda)
        - \cR_{\phi;\scrC}(\SPAN(\cH))
        &\leq
        \epsilon
        + \frac {c\left(\sqrt{\ln(n)} + 4\sqrt{\ln(2/\delta')}\right)}{\sqrt {m_{\scrC}}}
        ,
    \end{align*}
    which is precisely \cref{eq:deviations:3:1}.

    To produce \cref{eq:deviations:3:2},
    the definition of the $\psi$-transform (cf. \Cref{fact:psi:basics}),
    combined with \Cref{eq:deviations:3:1}, provides
    \begin{align*}
        \cR_{\cL;\scrC}(H\lambda) - \cR_{\cL;\scrC}(\fF)
        &\leq \psi^{-1}\left(\cR_{\phi;\scrC}(H\lambda)
        - \cR_{\phi;\scrC}(\fF)
        \right)
        \\
        &= \psi^{-1}\left(\cR_{\phi;\scrC}(H\lambda)
        - \cR_{\phi;\scrC}(\SPAN(\cH))
        +\cR_{\phi;\scrC}(\SPAN(\cH))
        - \cR_{\phi;\scrC}(\fF)
        \right)
        \\
        &\leq \psi^{-1}\left(
        \epsilon
        + \frac {c\left(\sqrt{\ln(n)} + 4\sqrt{\ln(2/\delta')}\right)}{\sqrt {m_{\scrC}}}
        +
        \cR_{\phi;\scrC}(\SPAN(\cH))
        - \cR_{\phi;\scrC}(\fF)
        \right).
    \end{align*}

    (\Cref{fact:deviations:4})
    Combining the lower bound on $m$ with \Cref{fact:deviations:1},
    \begin{align*}
        m_+ &\geq m \mu(\scrC^c) / 2,\\
        m_\scrC &\geq m \mu(\scrC) / 2
        \geq c^2(\ln(n) + \ln(1/\delta'));
    \end{align*}
    the first two bounds will allow expressions to be simplified, whereas the
    last bound will allow an invocation of \cref{fact:deviations:3}.

    As such, combining all preceding bounds (and making use of the
    refinement over $\scrC^c$ when $\epsilon<\phi(0)/m$),
    \begin{align*}
        \cR_\cL(H\lambda) - \cR_\cL(\fF)
        &= (\cR_{\cL;\scrC}(H\lambda) - \cR_{\cL;\scrC}(\fF))
        + (\cR_{\cL;\scrC^c}(H\lambda) - \underbrace{\cR_{\cL;\scrC^c}(\fF)}_{=0})
        \\
        &\leq
        \psi^{-1}\left(
        \epsilon
        + \frac {c\left(\sqrt{\ln(n)} + 4\sqrt{\ln(2/\delta')}\right)}{\sqrt {m_{\scrC}}}
        +\cR_{\phi;\scrC}(\SPAN(\cH))
        - \cR_{\phi;\scrC}(\fF)
        \right)
        \\
        &\qquad
        + \frac{4(n\ln(2m_++1) + \ln(4/\delta')} {m_+}
        \\
        &\leq
        \psi^{-1}\left(
        \epsilon
        + \frac {c\sqrt{2}\left(\sqrt{\ln(n)} + 4\sqrt{\ln(2/\delta')}\right)}
        {\sqrt {m\mu(\scrC)}}
        +\cR_{\phi;\scrC}(\SPAN(\cH))
        - \cR_{\phi;\scrC}(\fF)
        \right)
        \\
        &\qquad
        + \frac{8(n\ln(m\mu(\scrC^c)+1) + \ln(4/\delta')} {m\mu(\scrC^c)}.
    \end{align*}
\end{proof}

\section{Deferred material from \Cref{sec:consistency}}
\label{sec:app:consistency}

\begin{proof}[Proof of \Cref{fact:consistency:cheating}]
    Let $\scrC$ be a hard core for $(\cH,\mu)$,
    set $\rho := 1$, and let $b>0$ and $c>0$ be the corresponding reals
    provided in the guarantee of \Cref{fact:deviations}.
    Note first that $\cR_\phi(\SPAN(\cH)) = \cR_\phi(\fF)$ implies
    $\cR_{\phi;\scrC}(\SPAN(\cH)) = \cR_{\phi;\scrC}(\fF)$, since
    predictions are $\mu$-a.e. perfect off the hard core (cf.
    \Cref{fact:hard_core:true_risk}).
    Set $\delta_i = 1/i^2$, and choose $m_i\uparrow\infty$ large enough
    and $\epsilon_i\downarrow 0$ small enough so
    that the relevant finite sample bound from \Cref{fact:deviations} holds,
    and goes to zero.  (Note that all bounds go to zero as $m_i\uparrow \infty$ and
    $\epsilon_i\downarrow 0$; the word ``relevant'' refers to choosing
    a bound corresponding to the regime $\mu(\scrC)=0$, or $\mu(\scrC^c)=0$,
    or $\min\{\mu(\scrC),\mu(\scrC^c)\} >0$.)  Note, by the strong assumption,
    the term $\cR_{\phi;\scrC}(\SPAN(\cH)) - \cR_{\phi;\scrC}(\fF)$ may be dropped.

    Now let $F_i$ be the failure event of the corresponding finite sample guarantee;
    by choice of $\delta_i$,
    $\sum_i \textup{Pr}(F_i) = \sum_i i^{-2} = \pi^2/6 < \infty$.
    Thus, by the Borel-Cantelli Lemma and de Morgan's Laws,
    $\textup{Pr}(\liminf_{i\to\infty} F_i^c) = 1$,
    meaning $\textup{Pr}(\exists j\centerdot \forall i\geq
    j \centerdot F_i^c)=1$.
    This means that the bounds hold for all large $i$ (with probability 1),
    and the result follows by choice of $m_i$ and $\epsilon_i$.
\end{proof}

\begin{proof}[Proof of \Cref{fact:dt:lsrmf}]
    This proof will proceed in the following stages.  First, it is shown that
    the infimal risk $\cR_\phi(\fF)$ can be approximated arbitrarily well by
    bounded measurable functions.  Next, Lusin's theorem will allow this consideration
    to be restricted to a function which is continuous over a compact set.
    Finally, this function is approximated by a decision tree.

    Let $\mu$, $\phi$, $\{\cH_i\}_{i=1}^\infty$, and $\epsilon > 0$ be given
    as specified.  Since the infimum in $\cR_\phi(\fF)$ is in
    general not attained, let $g\in\fF$ be a measurable function satisfying
    \[
        \cR_\phi(g) \leq \epsilon / 4 + \cR_\phi(\fF).
    \]

    Next let $z >0$ be a sufficiently large real so that $\phi(-z) < \epsilon/4$;
    such a value must exist since $\lim_{z\to-\infty}\phi(z) = 0$.
    Correspondingly, define a truncation of $g$ as
    \[
        \hat g(x) := \min\{ z , \max\{ -z, g(x)\}\}.
    \]
    There are three cases to consider.  If $|y g(x)| \leq z$,
    then $\phi(-y\hat g(x)) = \phi(-yg(x))$.  If $-yg(x) > z$, then by the nondecreasing
    property (cf. \Cref{fact:phi_basic}), $\phi(-yg(x)) \geq \phi(-y\hat g(x))$.
    Lastly, if $-yg(x) < -z$, then $\phi(-y\hat g(x)) \leq \phi(-y g(x)) + \epsilon/4$
    by choice of $z$.  Together, it follows that
    \begin{align*}
        \cR_\phi(\hat g)
        &= \int \phi(-y\hat g(x))d\mu(x,y)
        \\
        &\leq \int (\phi(-y g(x)) + \epsilon/4)d\mu(x,y)
        \\
        &= \cR_\phi(g) + \epsilon\mu(\cX\times\cY) / 4
        \\
        &\leq \cR_\phi(\fF) + \epsilon/2,
    \end{align*}
    which used the fact that $\mu$ is a probability measure.
    Crucially, $\hat g$ is now a bounded measurable function.
    Throughout the remained of this proof, let $\|\cdot\|_u$
    denote the uniform norm, meaning
    \[
        \|f\|_u := \sup_{x} |f(x)|.
    \]
    For example, $\|\hat g\|_u < \infty$.

    In order to apply Lusin's Theorem and pass to continuous functions with compact
    support, a few properties must be verified.
    First, since $\mu_\cX$ is a Borel probability measure, it is finite on all compact
    Borel sets.
    Next, $\R^d$ is a separable metric space,
    and thus second countable.
    Finally, $\R^d$ is a locally compact Hausdorff space.
    It follows that $\mu_\cX$ is a Radon measure \citep[Theorem 7.8]{folland}.

    Henceforth, set $\tau := \epsilon / (8 \max\{1,\phi(\|\hat g\|_u)\})$.
    By Lusin's Theorem, there exists a measurable function $h$ which
    is
    continuous, has compact support, satisfies $\mu_\cX([\hat g \neq h]) < \tau$
    and $\|h\|_u \leq \|\hat g\|_u$ \citep[Theorem 7.10, Lusin's Theorem]{folland}.
    But continuity over a compact set implies uniform continuity.
    Furthermore, the convex function $\phi$, restricted to the domain $[-z,z]$,
    is necessarily Lipschitz.
    As such, it is possible to choose $\delta > 0$
    so that for any $x,x'$ with $\|x-x'\|_\infty < \delta$
    and any $y\in\{-1,+1\}$,
    it follows that $|\phi(-yh(x)) - \phi(-yh(x'))| < \tau$.  Notice that this in
    fact holds everywhere, since outside of its support $h$ is just zero.

    As such, let $T$ be the smallest integer so that
    $T> \sup\{\|x\|_\infty : h(x) \neq 0\}$
    (which exists since $h$ has compact support)
    and also $1/T < \delta$.
    For any $t\geq T$, construct a simple function approximation $f$ to $h$ as follows.
    Partition the cube $[-t,t)^d$ into subcubes (formed as a product of half open intervals
    in order to correctly produce a partition) having side length $1/t$ with vertices
    at the appropriate lattice points granting a correct partitioning.
    Let $\{C_i\}_{i=1}^k$ index this family of subcubes, and let $p_i$ be some point
    within each subcube.  Define an approximant
    \[
        f(x) := \sum_{i=1}^k h(p_i) \1(x\in C_i).
    \]
    It follows that, for a point $x\in C_i$ and any $y\in\{-1,+1\}$,
    \[
        |\phi(-y f(x)) - \phi(-y h(x))|
        = |\phi(-y h(p_i)) - \phi(-y h(x))|
        < \tau
    \]
    by construction.  Since $C_i$ was arbitrary, this holds for every subcube; and it
    furthermore holds outside the support of $f$, where $h$ and $f$ are both
    guaranteed to be the constant $0$.

    Combining the various approximation components, it follows that
    \begin{align*}
        \cR(f)
        &= \int \phi(-y f(x)) d\mu(x,y)
        \\
        &\leq \tau \mu(\cX\times\cY) + \int \phi(-y h(x)) d\mu(x,y)
        \\
        &\leq \epsilon/8 + \int \phi(-y \hat g(x))\1(\hat g(x) = h(x)) d\mu(x,y)
        + \int \phi(-y h(x))\1(\hat g(x) \neq h(x)) d\mu(x,y)
        \\
        &\leq \epsilon/8 + \cR_\phi(\hat g)
        + \mu_\cX([\hat g \neq h]) \phi(\|\hat g\|_u)
        \\
        &< \epsilon + \cR_\phi(\fF).
    \end{align*}
    To finish, note by construction that $f$, which was formed from
    axis-aligned subcubes at lattice points within $[-t,t)$,
    satisfies $f \in \SPAN(\cH_t)$ (the indicator for
    each subcube can be modeled as an element of $\cH_t$).
\end{proof}

\begin{proof}[Proof of \Cref{fact:consistency:lsrm}]
    Proceed as in the proof of \Cref{fact:consistency:cheating}, with one modification.
    First determine $\epsilon_i$.
    At each stage, choose $j_i$ large enough so that $\cH_{j_i}$ satisfies
    $\cR_\phi(\SPAN(\cH_{j_i})) < \cR_\phi(\fF) + \epsilon_i$;
    the existence of such a $j_i$
    is straight from the definition of L-SRM families.
    Now choose $m_i$ large enough to satisfy the necessary conditions in the
    proof of \Cref{fact:consistency:cheating};
    meaning the relevant bound from \Cref{fact:deviations} may be instantiated,
    and furthermore these bounds approach zero as $i\to\infty$.
    Now that $m_i$ may be quite massive, as it must now smash the term $n = |\cH_{j_i}|$.
    The proof is otherwise identical to before.
\end{proof}

\end{document}